\documentclass{article} 
\usepackage[a4paper, margin=1.1in]{geometry}

\usepackage{url}

\usepackage{graphicx}
\usepackage{amssymb}
\usepackage{amsmath}
\usepackage{amsfonts}
\usepackage{amsthm}
\usepackage{mathtools}
\usepackage{algorithm}
\usepackage{algcompatible}
\usepackage{thmtools,thm-restate}
 \usepackage[affil-it]{authblk}
 \usepackage[square,numbers,compress]{natbib}
\usepackage{nicefrac}
\usepackage{comment}
\usepackage{ifpdf}
\usepackage{bm}
\usepackage[table]{xcolor}
\usepackage{color}
\usepackage[mathlines]{lineno}
\usepackage{scalerel}
\usepackage{stackengine,wasysym}
\usepackage[stable]{footmisc}
\usepackage{enumerate}
\usepackage{tikz-cd}
\usetikzlibrary{arrows,positioning}
\tikzset{
  shift left/.style ={commutative diagrams/shift left={#1}},
  shift right/.style={commutative diagrams/shift right={#1}}
}
\usepackage{subcaption}
\usepackage{enumitem}
\usepackage{booktabs}
\usepackage{rotating}
\usepackage{wrapfig}

\usepackage{empheq}
\usepackage[hidelinks]{hyperref}

\newtheorem{thm}{Theorem}[section]

\newtheorem{lem}{Lemma}[section]

\newtheorem{prop}{Proposition}[section]

\newtheorem{dfn}{Definition}

\newtheorem{rk}{Remark}

\newcommand{\vect}[1]{\boldsymbol{#1}}
\newcommand{\tp}[1]{{#1}^{\mathsf T}}

\newcommand{\tr}{\mathrm{tr}}

\DeclareMathAlphabet\mathbfcal{OMS}{cmsy}{b}{n}
\DeclareMathOperator{\diag}{diag}
\newcommand{\half}{\frac12}

\newcommand{\mbE}{\mathbb E}
\newcommand{\E}{\mathrm E}
\newcommand{\V}{\mathrm{Var}}
\newcommand{\mCv}{\mathrm{Cov}}

\newcommand{\pa}{\partial}

\renewcommand{\epsilon}{\varepsilon}
\renewcommand{\Sigma}{\varSigma}

\newcommand{\bmu}{\vect\mu}

\newcommand{\bSigma}{\vect\Sigma}
\newcommand{\bsigma}{\vect\sigma}

\newcommand{\bv}{{\bf v}}

\newcommand{\bzero}{{\bf 0}}
\newcommand{\bu}{{\bf u}}

\newcommand{\be}{{\bf e}}

\newcommand{\bx}{{\bf x}}
\newcommand{\bX}{{\bf X}}

\newcommand{\bz}{{\bf z}}
\newcommand{\bZ}{{\bf Z}}

\newcommand{\bc}{{\bf c}}

\newcommand{\bI}{{\bf I}}
\newcommand{\bL}{{\bf L}}

\newcommand{\mI}{{\mathcal I}}
\newcommand{\mL}{{\mathcal L}}
\newcommand{\mM}{{\mathcal M}}
\newcommand{\mN}{{\mathcal N}}

\newcommand{\mP}{{\mathcal P}}

\newcommand{\mS}{{\mathcal S}}

\newcommand{\mbR}{{\mathbb R}}

\newcommand{\unif}{\mathcal{U}}

\DeclareMathOperator{\sech}{sech}

\DeclareMathOperator{\arcosh}{arcosh}

\DeclareMathOperator{\atan2}{atan2}

\title{Probabilistic Geodesic Flow Matching\\ on Location-Scale Families}

\author{Zeyuan Yu \qquad Zhi Chang \qquad Shiwei Lan$^\dagger$ \\
School of Mathematical \& Statistical Sciences\\
Arizona State University\\
Tempe, AZ 85287 USA \\
{\upshape\texttt{\symbol{123}zeyuany1,zchang7,slan\symbol{125}@asu.edu}} 
}

\date{}

\begin{document}

\maketitle

\begin{abstract}
Flow matching (FM) has recently emerged as a promising framework for generative modeling due to its conceptual simplicity and strong empirical performance. In FM, samples are transported along a vector field parameterized by a neural network, inducing a probability path that evolves from a simple noise distribution to the target data distribution, 
governed by an ordinary differential equation (ODE). 
However, existing FM approaches predominantly rely on probability paths derived from optimal transport (OT) between Gaussian distributions, which may be suboptimal for capturing complex data with inhomogeneous structures such as heavy tail or sharp contrast. In this work, we generalize FM to the broader class of location–scale families for handling data inhomogeneity and introduce a novel class of probability paths defined as geodesics on the manifold of probability distributions. We name this approach \emph{probabilistic geodesic flow matching} to distinguish it from prior geodesic (Riemannian) FM methods defined in input space.
We argue that Euclidean OT-based paths are not necessarily optimal in probability space and may limit modeling flexibility. Through synthetic benchmarks and scientific datasets at different scales, we demonstrate that the proposed method more effectively captures complex distributions, leading to improved or comparable performance compared with SOTA geometry-motivated generative models.
\end{abstract}

\section{Introduction}
Diffusion models and flow matching methods are two classes of popular generative modeling approaches. 
Denoising diffusion probabilistic model \citep{Ho_2020} is a discrete Markov model that converts a complex data distribution into a simple tractable distribution by gradually adding noise in a forward process and then trains a generative model to remove the noise in a reverse process.
\cite{Song_2021} generalize it to a continuum of distributions and reformulate these two processes using SDEs.
In the SDE framework, the Fokker–Planck equation governs the evolution of the density path, which induces an ODE that enables a deterministic flow with marginal densities the same as SDE \citep{Maoutsa_2020}. 
The continuous normalizing flow \citep[CNF][]{Chen_2018} directly models the vector field of the probabilistic flow that generates the probability path from noise to data distributions, bypassing the noise-adding forward process. Recently, flow matching \citep[FM][]{Lipman_2023} introduces a diffusion-style training objective to predict the vector field and improves efficiency by avoiding expensive ODE simulation in CNF training.

Despite the connection to diffusion processes, dominant FM methods are based on probability paths as an optimal transport (OT) interpolant \citep{MCCANN_1997} between Gaussian distributions. OT paths are known to be geodesics in the 2-Wasserstein space \citep{Benamou_2000, Villani_2009}. They may not necessarily be optimal for complex data distributions, e.g. non-Gaussian distributions with heavy tails.
Heavy-tailed priors using student-t distributions have recently been studied in diffusion models \citep{Pandey_2025}.
In this work, we generalize FM by deviating from the default choice of Gaussian distributions and explore a broader location-scale family. In particular, we focus on exponential power distributions \citep{Gomez_1998}, which have the flexibility of imposing regularization and modeling tail probabilities through a parameter $q>0$ and embrace Gaussian distribution as a special case $q=2.0$.
To search for the optimal probability path on the manifold of exponential power distributions, we adopt the Fisher-Rao metric and solve the geodesic under this metric. 
The proposed FM with this geodesic path is hence named \emph{probabilistic geodesic flow matching}, to differentiate it from the existing Riemannian (geodesic) FM \citep{Chen_2024}.

\paragraph{Connection to Existing Literature}
Our work directly generalizes the FM method \citep{Lipman_2023}. In addition to adopting broader location-scale distributions, our framework, developed from probability path to vector field and then sample flow, is reverse to \cite{Lipman_2023}, but provides a natural foundation of manifold perspective of probability paths.
We want to highlight that the Riemannian FM \citep{Chen_2024} has its geodesics defined in the input space of most applications. However, our \emph{probabilistic} geodesics are defined on the manifold of exponential power and more general location-scale distributions.
Fisher-Rao metric is also considered in \cite{Cheng_2024} and \cite{Davis_2024} for discrete data. Other manifold based FM includes \cite{Kapusniak_2024} with a learned metric and \cite{Atanackovic_2025} with a $2$-Wasserstein. Despite the different applications, their constructions either take advantage of generic spherical geometry or learn from data, while our derivation is more native to exponential power distributions with \textbf{analytic} solution (see Table \ref{tab:fm-comparison} for a more comprehensive comparison).
Our work makes multi-fold contributions: 
\begin{enumerate}[noitemsep]
\item It generalizes FM with a broader location-scale family beyond Gaussian distributions.
\item It interprets optimal probability paths as energy-minimizing trajectories in the probability space with appropriate metrics.
\item It proposes probabilistic geodesic FM demonstrating improved or comparable performance.
\end{enumerate}

The remainder of the paper is organized as follows. Section \ref{sec:background} reviews the background on FM and exponential power distributions. Section \ref{sec:location-scale} introduces the flexible FM framework in location-scale family. Then we derive the probabilistic geodesics on the manifold of exponential power distributions in Section \ref{sec:PG}. We demonstrate the numerical advantages in Section \ref{sec:numerics} and conclude in Section \ref{sec:conclusion} with some discussion on limitations and future directions.

\section{Background Review}
\label{sec:background}

\subsection{(Conditional) Flow Matching}
To learn the distribution $q(\cdot)$ of a given dataset and generate synthetic samples $\bx\sim q(\cdot)$, flow-based approaches evolve noisy data $\bx_0\sim p_0(\cdot)$ to clean data $\bx_1\sim p_1(\cdot)\approx q(\cdot)$ along a flow mapping $\phi_t: \mbR^d \to \mbR^d$ defined by the following ODE with given vector field $\bv_t:\mbR^d \to \mbR^d$ for $t\in[0, 1]$:
\begin{equation}\label{eq:flow}
\begin{aligned}
    \frac{d}{dt} \phi_t(\bx) &= \bv_t(\phi_t(\bx)),  \\
    \phi_0(\bx) &= \bx_0, 
\end{aligned}
\end{equation}
where $\bx_t := \phi_t(\bx_0)$ follows a probability path $p_t(\cdot)$ given by the push-forward equation
\begin{equation}\label{eq:pushfwd}
    p_t = [\phi_t]_\sharp p_0, \quad [\phi_t]_\sharp p_0(\bx) = p_0(\phi_t^{-1}(\bx)) \det\left[\frac{\pa \phi_t}{\pa \bx}\right]^{-1} .
\end{equation}

A vector field $\bv_t$ is said to generate the probability path $p_t$ if and only if the continuity equation holds:
\begin{equation}\label{eq:continuity}
    \pa_t p_t(\bx) = -\nabla_\bx \cdot (\bv_t(\bx) p_t(\bx)) ,
\end{equation}
which induces the following equation evolving the logarithm of density:
\begin{equation}\label{eq:logpdf_CNF}
\frac{d}{dt} \log p_t(\bx_t) = -\tr\left(\frac{\pa\bv_t}{\pa\bx_t }\right) = -\nabla\cdot \bv_t(\bx_t) .
\end{equation}
Once the vector field $\bv_t$ is learned from the data, a synthetic sample can be generated by solving the ODE \eqref{eq:flow} up to time $t=1$: $\bx_1 = \phi_1(\bx_0) \sim p_1(\cdot)\approx q(\cdot)$.


The FM considers a conditional probability path $p_t(\cdot|\bx_1)$ for a fixed data point $\bx_1\in\mbR^d$ such that $p_0(\cdot)=p_0(\cdot|\bx_1)$ and $p_1(\cdot) = \int p_1(\cdot|\bx_1)q(\bx_1)d\bx_1\approx q(\cdot)$. 
A common choice is $p_0(\cdot|\bx_1)=\mN(\cdot ;\bzero, \bI)$ and $p_1(\cdot|\bx_1)=\mN(\cdot ;\bx_1, \sigma_{\min}^2\bI)$ for some small $\sigma_{\min}>0$.
A conditional vector field $\bu_t(\cdot |\bx_1)$ is then constructed to generate this conditional probability path $p_t(\cdot|\bx_1)$ by the flow $\frac{d}{dt} \psi_t(\bx) = \bu_t(\psi_t(\bx)|\bx_1)$. Hence, conditional FM (CFM) trains a neural network $\bv_t(\cdot; \theta)$ to match $\bu_t(\cdot|\bx_1)$ by minimizing the loss:
\begin{equation}\label{eq:CFM_loss}
    \mL_\textrm{CFM}(\theta) = \mbE_{t, q(\bx_1), p_t(\bx|\bx_1)}\Vert \bv_t(\bx;\theta) - \bu_t(\bx|\bx_1)\Vert_2^2 .
\end{equation}
where $t\sim \unif[0,1]$, $\bx_1\sim q(\cdot)$, and $\bx\sim p_t(\cdot|\bx_1)$.

\emph{Starting with the conditional flow} $\psi_t$ as a canonical affine transformation in the family of Gaussian distributions $\mN(\bmu_t, \sigma^2_t\bI)$:
\begin{equation}\label{eq:affine_path}
\psi_t(\bx) = \bmu_t(\bx_1) + \sigma_t(\bx_1) \bx, 
\end{equation}
\cite{Lipman_2023} prove that the corresponding conditional vector field $\bu_t(\cdot|\bx_1)$ in the following unique format generates the desired conditional probability path $p_t(\bx|\bx_1)$:
\begin{equation}\label{eq:conditional_vf}
\bu_t(\bx|\bx_1) = \dot \bmu_t(\bx_1) + \frac{\dot\sigma_t(\bx_1)}{\sigma_t(\bx_1)}(\bx - \bmu_t(\bx_1)) .
\end{equation}

In particular, \cite{Lipman_2023} propose and advocate the conditional flow based on the optimal transport (OT) displacement map $\psi$ pushing from $p_0(\cdot|\bx_1)$ to $p_1(\cdot|\bx_1)$ \citep{MCCANN_1997}:
\begin{equation}\label{eq:OT_flow}
    \psi_t = (1-t)\mathrm{id} + t \psi .
\end{equation}
The OT displacement between $\mN(\bmu_0, \sigma_0^2\bI)$ and $\mN(\bmu_1, \sigma_1^2\bI)$ is known to be $\psi(\bx) = \bmu_1 + \frac{\sigma_1}{\sigma_0}(\bx - \bmu_0)$ \citep{Villani_2009}. 
With the previous Gaussian conditionals $p_0(\cdot|\bx_1)=\mN(\cdot; \bzero, \bI)$ and $p_1(\cdot|\bx_1)=\mN(\cdot; \bx_1, \sigma^2_{\min}\bI)$,
\cite{Lipman_2023} develop OT based conditional \emph{sample flow ($\psi_t$) $\to$ vector field ($\dot\psi_t$) $\to$ probability path ($p_t(\cdot|\bx_1)$)} in turn as follows:
\begin{equation}\label{eq:OT_interp}
\begin{aligned}
\psi_t(\bx)=(1-(1-\sigma_{\min})t)\bx + t\bx_1, &\quad \bu_t(\bx|\bx_1) = \frac{\bx_1 - (1-\sigma_{\min})\bx}{1-(1-\sigma_{\min})t}, \\
p_t(\cdot|\bx_1) = \mN(\cdot; \bmu_t, \sigma_t\bI), &\quad  \bmu_t = (1-t)\bmu_0 + t\bmu_1, \; \sigma_t = (1-t)\sigma_0 + t\sigma_1 ,
\end{aligned}
\end{equation}
where $\bmu_0=\bzero, \bmu_1=\bx_1$, $\sigma_0=1$ and $\sigma_1 = \sigma_{\min}>0$.

\subsection{Exponential Power Distribution}
Although Gaussian distribution is the default choice of many FM models, it is just a member of a larger \emph{location-scale} family that also include Cauchy, Laplace, and Student's-t distributions.
\begin{dfn}[location-scale family]
A location-scale family is a family of probability distributions parametrized by a location parameter $\bmu\in\mbR^d$ and a positive scale parameter $\sigma>0$. Suppose $Z$ is a random variable following distribution in this family corresponding to $\bmu=\bzero$ and $\sigma=1$ with cumulative distribution function (CDF) $F_0(z)$. Then $X=\bmu+\sigma Z$ has CDF $F(X) = F_0((X-\bmu)/\sigma)$.
\end{dfn}

In this paper, we introduce another more flexible member, the \emph{exponential power distribution}, a.k.a. generalized normal distribution, to flow matching methods for broader choices beyond Gaussian.
\cite{Gomez_1998} has a multivariate version of the exponential power distribution defined below.
\begin{dfn}
The \emph{exponential power distribution}, denoted as $\mP_q(\bmu, \bSigma)$, has the following probability density
\begin{equation}
p(\bx)= \frac{q\Gamma(\frac{d}{2})}{2\Gamma(\frac{d}{q})} 2^{-\frac{d}{q}}\pi^{-\frac{d}{2}} |\bSigma|^{-\half} \exp\left\{-\frac{r^\frac{q}{2}}{2}\right\}, \quad r(\bx) = \tp{(\bx-\bmu)} \bSigma^{-1} (\bx-\bmu) .
\end{equation}
\end{dfn}
\begin{rk}
When $q=2$, the exponential power distribution reduces to multivariate normal distribution, i.e. $\mP_2(\bmu, \bSigma) = \mN(\bmu, \bSigma)$.
Generally, the parameter $q$ controls the regularization around center and tail behavior: the smaller $q$, the sharper regularization and the heavier tail it induces, as illustrated in Figure \ref{fig:exponential_power}.
\end{rk}

\begin{wrapfigure}{r}{0.48\textwidth}
  \centering
   \vspace{-12pt}
  \includegraphics[width=1\linewidth, height=.3\linewidth]{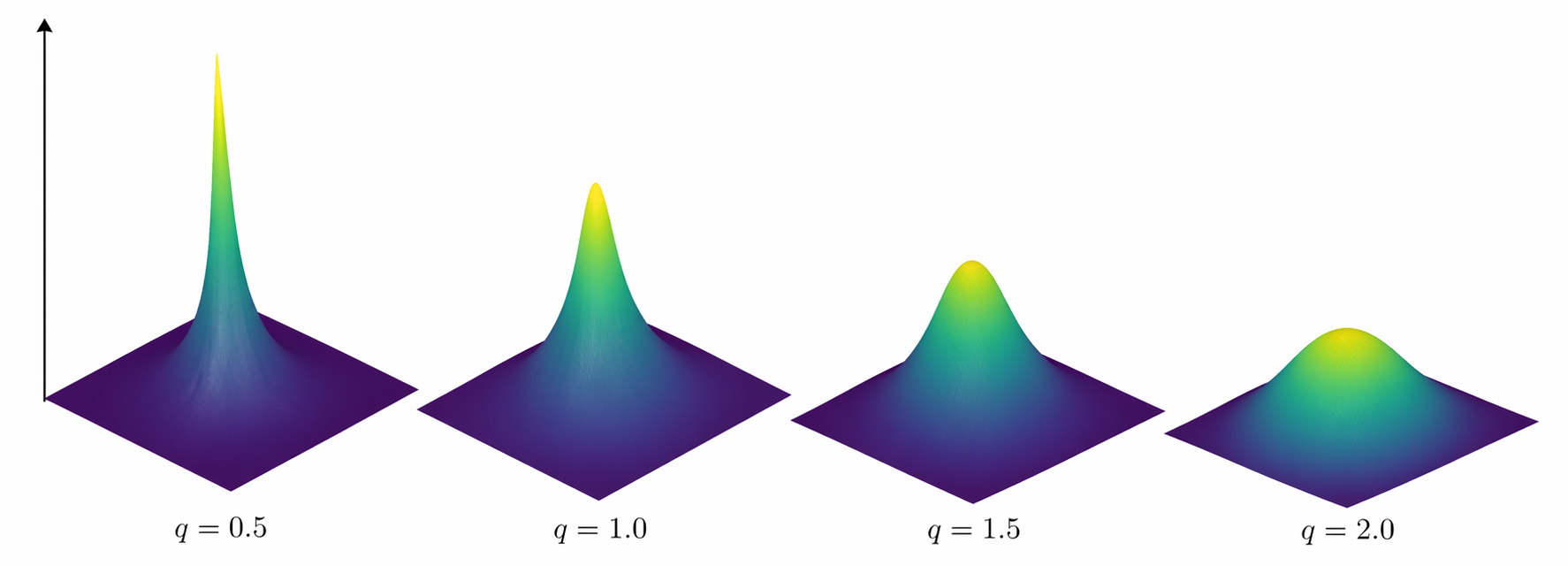}
  \vspace{-15pt}
  \caption{Probability densities of exponential power distributions $\mP_q(\bzero, \bI)$ for selected $q$'s.}
  \label{fig:exponential_power}
\end{wrapfigure}

As a special elliptic contoured distribution \citep{Johnson_1987,fang1990generalized}, the exponential power distribution is location-scale, i.e. if $Z\sim \mP_q(\bzero, \bI)$, then $X=\bmu+\sigma Z\sim \mP_q(\bmu, \sigma^2\bI)$.
In this work, we focus on the isotropic case $\bSigma=\sigma^2\bI$ and anisotropic case $\bSigma = \diag(\bsigma^2)$.

The following proposition synthesizes the properties of exponential power distribution \citep{Gomez_1998, Li_2023} useful for the development of new flow matching methods.
\begin{prop}\label{prop:epd_prop}
For a $d$-dimensional random variable $X\sim \mP_q(\bmu, \sigma^2\bI)$, we have
\begin{enumerate}[nosep]
\item $\E(X)=\bmu$, $\mCv(X) = s_{q,d}\bSigma$ with a scaling factor $s_{q,d} = \frac{2^{\frac{2}{q}}\Gamma(\frac{d+2}{q})}{d\Gamma(\frac{d}{q})}$.
\item $X \overset{L}{=} \bmu + \sqrt{r} \bL U_S$, where $r = \tp{(X-\bmu)} \bSigma^{-1} (X-\bmu)$ and $r^{\frac{q}{2}} \sim \Gamma(\alpha=\frac{d}{q}, \beta=\half)$, $\bL\tp{\bL}=\bSigma$, and $\sqrt{r}\perp U_S$ with $U_S\sim \unif(\mS^{d-1})$ uniformly distributed on the sphere.
\end{enumerate}
\end{prop}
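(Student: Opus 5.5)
Both claims follow from a stochastic representation: I will establish part 2 first and then read part 1 off from it. Throughout, $\bSigma$ is a general positive definite matrix with $\bL\tp{\bL}=\bSigma$; the isotropic case $\bSigma=\sigma^2\bI$ is the special case $\bL=\sigma\bI$.

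\emph{Reduction to the standard case.} Set $Z=\bL^{-1}(X-\bmu)$. The density of $X$ is a function of $r(\bx)=\tp{(\bx-\bmu)}\bSigma^{-1}(\bx-\bmu)$ times $|\bSigma|^{-1/2}$, so the change of variables $\bx=\bmu+\bL\bz$ has Jacobian $|\bSigma|^{1/2}$. It follows that $Z$ has density
\[
c_{q,d}\exp\{-\|\bz\|^q/2\}, \qquad c_{q,d}=\frac{q\Gamma(\frac d2)}{2\Gamma(\frac dq)}2^{-\frac dq}\pi^{-\frac d2}.
\]
This density is spherically symmetric.

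\emph{Polar decomposition.} Write $Z=R\,U_S$ with $R=\|Z\|$ and $U_S=Z/\|Z\|$. The Lebesgue measure in polar coordinates is $d\bz=\rho^{d-1}d\rho\,d\omega$, and the surface area of $\mS^{d-1}$ is $2\pi^{d/2}/\Gamma(d/2)$. Since the density depends only on $\rho$, the joint density factorizes:
\begin{itemize}[noitemsep]
\item $U_S$ is uniform on $\mS^{d-1}$ and independent of $R$;
\item $R$ has density proportional to $\rho^{d-1}e^{-\rho^q/2}$.
\end{itemize}
Now substitute $w=\rho^q$, so $\rho=w^{1/q}$ and $d\rho=\frac1q w^{1/q-1}dw$. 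This turns the radial density into one proportional to $w^{d/q-1}e^{-w/2}$, i.e. $R^q\sim\Gamma(d/q,\tfrac12)$ in the rate parametrization.

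Since $r=\tp{(X-\bmu)}\bSigma^{-1}(X-\bmu)=\|Z\|^2=R^2$, we have $r^{q/2}=R^q$ and $\sqrt r=R$. Hence $X=\bmu+\sqrt r\,\bL U_S$ with the stated independence and law, which is part 2.

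\emph{Normalizing constant.} As a byproduct, the constant can be verified: $\int_0^\infty \rho^{d-1}e^{-\rho^q/2}d\rho=\frac1q\Gamma(d/q)2^{d/q}$. Multiplying by the sphere area and by $c_{q,d}$ gives $1$. This is optional, but it confirms the density is correctly normalized.

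\emph{Mean and covariance (part 1).} By independence and symmetry, $\E U_S=\bzero$, so $\E X=\bmu$ provided $\E R<\infty$, which holds because $R^q$ has a Gamma law. For the covariance, $\E U_S\tp{U_S}=\bI/d$ because the trace is $1$ and the law is rotation invariant. Hence
\[
\mCv(X)=\E[r]\,\bL\,\tfrac1d\bI\,\tp{\bL}=\tfrac{\E[r]}{d}\bSigma.
\]
With $W=R^q\sim\Gamma(d/q,1/2)$ we have $\E[r]=\E W^{2/q}$. Using $\E W^s=\Gamma(\alpha+s)/(\Gamma(\alpha)\beta^s)$ with $\alpha=d/q$, $\beta=1/2$, $s=2/q$, this equals $2^{2/q}\Gamma(\frac{d+2}{q})/\Gamma(\frac dq)$. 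Dividing by $d$ gives exactly $s_{q,d}$.

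\emph{Main obstacle.} The only delicate step is the polar-coordinate factorization together with the Gamma change of variables, where the rate-versus-scale convention must be tracked. The convention $\beta=\tfrac12$ as a rate is the one consistent with the exponent $e^{-w/2}$, and the normalizing-constant check confirms this choice.
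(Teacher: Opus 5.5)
Your argument is correct and complete. The paper does not prove this proposition at all: it presents it as a synthesis of known facts and defers to \citep{Gomez_1998, Li_2023}, where the standard derivation is the same stochastic (polar) representation of elliptically contoured laws that you reproduce.

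Compared with the bare citation, your route gains several things. It is self-contained. It pins down that $\beta=\tfrac12$ is a rate, and your normalizing-constant check confirms this against the stated density; the paper relies on that convention later, in the proof of Lemma~\ref{lem:frmetric}, through $\mbE[S^\alpha]=2^\alpha\Gamma(\frac dq+\alpha)/\Gamma(\frac dq)$. It handles a general positive definite $\bSigma$, which resolves the statement's mixing of $\sigma^2\bI$ and $\bSigma$. It also gives the representation as an almost-sure identity with $U_S=Z/\Vert Z\Vert$, which is stronger than equality in law.

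Two steps deserve a line more of detail. For the independence of $R$ and $U_S$, evaluate the probability of a product event $\{R\in A,\,U_S\in B\}$ in polar coordinates and observe that it factorizes. For $\mbE[U_S\tp{U_S}]=\bI/d$, note that sign-flip invariance kills the off-diagonal entries, permutation invariance equalizes the diagonal ones, and the trace is $1$.
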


\section{Flexible Flow Path in Location-Scale Family}\label{sec:location-scale}
In this section, we develop a series of novel flow matching methods in the broad location-scale family and propose new algorithms based on exponential power distributions. In addition, we also explore different probability paths by minimizing flow energies under various metric tensors, including Euclidean and Fisher-Rao (Section \ref{sec:PG}).

In the OT theory \citep{Villani_2009}, the OT displacement between two univariate probability distributions $p_0(\cdot|x_1)$ and $p_1(\cdot|x_1)$ is $\psi(x) = F_1^{-1}(F_0(x))$ with $F_0$ and $F_1$ being their CDFs respectively.
We immediately have OT interpolant path based on the conditional flow \eqref{eq:OT_flow} in the one dimensional case.
For general dimensions, we may have the following OT-based conditional flow
\begin{equation}\label{eq:OT_qep}
\psi_t(\bx) = (1-t)\bx + t \psi(\bx), \quad
    \psi(\bx) = [\psi(x_i)]_{i=1}^d, \quad \psi(x_i) = F_1^{-1}(F_0(x_i)) .
\end{equation}
Because the exponential power distribution is location-scale, we have $F_1(x)=F_0((x-x_1)/\sigma_{\min})$.
Substituting 
it
into \eqref{eq:OT_qep} yields the same OT map
    $\psi(\bx) = \bx_1 + \sigma_{\min} \bx$ .

This observation motivates us to \emph{start the search with any probability path in the broad location-scale family}, instead of being restricted to Gaussian distributions and the canonical affine path \eqref{eq:affine_path}.
Note that probability paths $\{p_t\}$ in the location-scale family parametrized by location and scale coordinates $(\bmu_t, \sigma_t)$ form a Riemannian manifold endowed with an information-geometry metric \citep{amari00}, which will be explored in Section \ref{sec:PG}.
The following theorem states that any probability path $p_t(\cdot ; \bmu_t, \sigma_t)$ from the location-scale family can be generated by a determined vector field.

\begin{restatable}{thm}{vf}
\label{thm:vf}
Any probability path $p_t(\cdot ; \bmu_t, \sigma_t)$ in the location-scale family with pdf $p_0$ for the standard random variable $\bZ = (\bX-\bmu_t)/\sigma_t$ has the following vector field $\bv_t$ that satisfies the continuity equation \eqref{eq:continuity}:
\begin{equation}\label{eq:VF_locationscale}
\bv_t(\bx) = \dot\bmu_t + \frac{\dot\sigma_t}{\sigma_t}(\bx-\bmu_t) . 
\end{equation}
\end{restatable}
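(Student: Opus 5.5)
We will verify the continuity equation \eqref{eq:continuity} directly from the location-scale form of the density, rather than going through a flow. Write the path as
\[
p_t(\bx) = \sigma_t^{-d}\, p_0\bigl(\bz(\bx,t)\bigr), \qquad \bz(\bx,t) = \frac{\bx-\bmu_t}{\sigma_t},
\]
which follows from the definition of the location-scale family with $\bX = \bmu_t + \sigma_t\bZ$ (isotropic scale, so the Jacobian of the map $\bz\mapsto\bx$ is $\sigma_t^{d}$). We assume $p_0$ is differentiable and $t\mapsto(\bmu_t,\sigma_t)$ is $C^1$ with $\sigma_t>0$.

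\textbf{Time derivative.} By the chain rule,
\[
\pa_t p_t(\bx) = -d\,\frac{\dot\sigma_t}{\sigma_t}\,p_t(\bx) + \sigma_t^{-d}\,\nabla p_0(\bz)\cdot \pa_t\bz ,
\]
where
\[
\pa_t\bz = -\frac{\dot\bmu_t}{\sigma_t} - \frac{\dot\sigma_t}{\sigma_t}\,\bz = -\frac{1}{\sigma_t}\Bigl(\dot\bmu_t + \frac{\dot\sigma_t}{\sigma_t}(\bx-\bmu_t)\Bigr) = -\frac{\bv_t(\bx)}{\sigma_t}.
\]
So $\pa_t p_t = -d\,\frac{\dot\sigma_t}{\sigma_t}\,p_t - \sigma_t^{-d-1}\,\nabla p_0(\bz)\cdot\bv_t(\bx)$.

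\textbf{Divergence.} Expand $\nabla_\bx\cdot(\bv_t p_t) = (\nabla\cdot\bv_t)\,p_t + \bv_t\cdot\nabla_\bx p_t$. Since $\bv_t$ is affine with Jacobian $\frac{\dot\sigma_t}{\sigma_t}\bI$, we get $\nabla\cdot\bv_t = d\,\frac{\dot\sigma_t}{\sigma_t}$. Also $\nabla_\bx p_t = \sigma_t^{-d-1}\nabla p_0(\bz)$. Substituting, $-\nabla_\bx\cdot(\bv_t p_t)$ equals the expression for $\pa_t p_t$ term by term, which establishes \eqref{eq:continuity}.

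As a cross-check, and an alternative route, the flow $\psi_t(\bx_0) = \bmu_t + \sigma_t\bz_0$ (with $\bz_0 = (\bx_0-\bmu_0)/\sigma_0$) pushes $p_0$ forward to $p_t$. Its velocity is $\dot\bmu_t + \dot\sigma_t\bz_0$, and writing $\bz_0 = (\psi_t(\bx_0)-\bmu_t)/\sigma_t$ recovers \eqref{eq:VF_locationscale}.

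There is no real obstacle. The only care needed is the bookkeeping of the $\sigma_t^{-d}$ normalization against the divergence term $d\,\dot\sigma_t/\sigma_t$, and stating the mild regularity assumptions. The same computation extends componentwise to the anisotropic case $\bSigma=\diag(\bsigma^2)$, with $\dot\sigma_t/\sigma_t$ replaced by a diagonal matrix.
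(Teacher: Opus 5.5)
Your proof is correct and is essentially the paper's: the appendix likewise notes $\dot\bX_t = \dot\bmu_t + \frac{\dot\sigma_t}{\sigma_t}(\bX_t-\bmu_t)$ along $\bX_t=\bmu_t+\sigma_t\bZ$, and then verifies \eqref{eq:continuity} by the same term-by-term comparison of $\pa_t p_t$ with $-\nabla\cdot(\bv_t p_t)$, using $\nabla\cdot\bv_t = d\,\dot\sigma_t/\sigma_t$ and $\nabla_\bx p_t = \sigma_t^{-d-1}\nabla p_0(\bz)$; you simply put the direct computation first and the trajectory argument second. One small remark: $\mP_q$ with $q\le 1$ is not differentiable at its center, so your differentiability assumption excludes those cases, whereas your pushforward cross-check (read as the weak form of the continuity equation) covers them.
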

\begin{proof}
See Appendix \ref{apx:vf} .
\end{proof}
\begin{rk}
The proof of this theorem is distribution-agnostic in the location-scale family.
Though the format of the vector field $\bv_t$ to generate probability $p_t$ is the same as in \citep[Theorem 3 of][]{Lipman_2023}, our theorem generalizes it to the location-scale family from which we can use the exponential power distribution $\mP_q$ beyond Gaussian $\mP_2=\mN$.
\end{rk}

Starting with the probability path $p_t(\cdot ; \bmu_t, \sigma_t)$, Theorem \ref{thm:vf} determines the conservation law \eqref{eq:continuity} \eqref{eq:logpdf_CNF} respecting vector field.
The following theorem states that such vector field \eqref{eq:VF_locationscale} prescribes the flow.

\begin{restatable}{thm}{path}
\label{thm:path}
The flow equation \eqref{eq:flow} with vector field $\bv_t$ \eqref{eq:VF_locationscale} has the unique solution as an affine transformation:
\begin{equation}\label{eq:affine_flow}
    \phi_t(\bx) = \bmu_t + \sigma_t \bx .
\end{equation}
\end{restatable}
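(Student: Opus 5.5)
Our plan is to verify that the affine map \eqref{eq:affine_flow} solves the initial value problem \eqref{eq:flow} with the vector field \eqref{eq:VF_locationscale}, and then to obtain uniqueness from standard ODE theory. The initial condition must be read in the normalized convention $\phi_0(\bx)=\bx$, identifying the noise sample $\bx_0$ with the standard variable $\bZ$. Equivalently, we take $\bmu_0=\bzero$ and $\sigma_0=1$, as in the conditional setting $p_0(\cdot|\bx_1)=\mP_q(\bzero,\bI)$. We assume throughout that $t\mapsto\bmu_t$ and $t\mapsto\sigma_t$ are continuously differentiable with $\sigma_t>0$ on $[0,1]$, which is implicit in Theorem \ref{thm:vf} since $\dot\sigma_t/\sigma_t$ must make sense.

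\emph{Existence.} Set $\phi_t(\bx)=\bmu_t+\sigma_t\bx$ and differentiate in $t$ to get $\frac{d}{dt}\phi_t(\bx)=\dot\bmu_t+\dot\sigma_t\bx$. Then substitute $\phi_t(\bx)$ into \eqref{eq:VF_locationscale}:
\begin{equation*}
\bv_t(\phi_t(\bx))=\dot\bmu_t+\frac{\dot\sigma_t}{\sigma_t}\bigl(\bmu_t+\sigma_t\bx-\bmu_t\bigr)=\dot\bmu_t+\dot\sigma_t\bx .
\end{equation*}
The two expressions agree, and $\phi_0(\bx)=\bmu_0+\sigma_0\bx=\bx$, so $\phi_t$ is a solution.

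\emph{Uniqueness.} For fixed $t$, the map $\by\mapsto\bv_t(\by)$ is affine with linear part $(\dot\sigma_t/\sigma_t)\bI$. It is therefore globally Lipschitz in $\by$, with constant $L=\sup_{t\in[0,1]}|\dot\sigma_t/\sigma_t|<\infty$, and it is continuous in $t$. Picard--Lindel\"of then gives a unique global solution on $[0,1]$.

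For a self-contained uniqueness argument, let $\by_t$ be any solution and set $\be_t=\by_t-\phi_t(\bx)$. Subtracting the two ODEs gives $\dot\be_t=(\dot\sigma_t/\sigma_t)\be_t$ with $\be_0=\bzero$. Since $\frac{d}{dt}(\be_t/\sigma_t)=\bzero$, we get $\be_t/\sigma_t=\be_0/\sigma_0=\bzero$, hence $\be_t\equiv\bzero$. Either route suffices.

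The only real obstacle is the bookkeeping of the initial condition. If the paper intends $\phi_0(\bx)=\bx_0$ with general $(\bmu_0,\sigma_0)$, the correct statement is that the flow in $\bZ$-coordinates is \eqref{eq:affine_flow}. In $\bx$-coordinates it becomes $\phi_t(\bx)=\bmu_t+\frac{\sigma_t}{\sigma_0}(\bx-\bmu_0)$, which reduces to \eqref{eq:affine_flow} under the normalization $\bmu_0=\bzero$, $\sigma_0=1$. I would state this normalization explicitly and then carry out the two steps above. As a consistency check, I would also note that the pushforward of $p_0$ under \eqref{eq:affine_flow} is $p_t(\cdot;\bmu_t,\sigma_t)$ by the location-scale property, which matches Theorem \ref{thm:vf}.
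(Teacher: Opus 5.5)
Your proposal is correct and is essentially the paper's argument. The paper sets $\bu_t=\phi_t(\bx)-\bmu_t$, derives $\dot\bu_t=(\dot\sigma_t/\sigma_t)\bu_t$, integrates this to $\bu_t=(\sigma_t/\sigma_0)\bu_0$, and then imposes the same normalization $\phi_0(\bx)=\bx$, $\bmu_0=\bzero$, $\sigma_0=1$ that you flag; you instead verify the candidate and apply the identical integrating-factor step to the difference of two solutions, which makes explicit the uniqueness and the regularity assumption ($\sigma_t>0$, $C^1$) that the paper leaves implicit in ``integrating it as a vector ODE.''
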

\begin{proof}
See Appendix \ref{apx:path} .
\end{proof}
\begin{rk}
Although we end up with the same affine flow,
unlike \cite{Lipman_2023}, which searches for proper flow first, we can start with any probability path in the vast location-scale family, e.g. Normal, Laplace, Student's t, elliptical distributions, then specify the vector field \eqref{eq:VF_locationscale} compatible with the continuity law \eqref{eq:continuity}, and determine the flow \eqref{eq:affine_flow} at last.
Supported by Theorem \ref{thm:vf} and Theorem \ref{thm:path}, the workflow \emph{probability path ($p_t$) $\to$ vector field ($\bv_t$) $\to$ sample flow ($\phi_t$)} is reverse to that of \cite{Lipman_2023}, which lends us greater flexibility in choosing flow in the FM framework.
\end{rk}

We emphasize the novelty and importance of the order \textbf{probability path ($p_t$) $\xrightarrow{Th \ref{thm:vf}}$ vector field ($\bv_t$) $\xrightarrow{Th \ref{thm:path}}$ sample flow ($\phi_t$)}, which would otherwise no longer guarantee $p_t = [\phi_t]_\sharp p_0$ to have tractable geodesics when $p_0$ moves away from Gaussian, a property lacking in \cite{Kapusniak_2024, Davis_2024}.
Because the conditional probability path $p_t(\cdot|\bx_1)$ in the location-scale family is parametrized by location $\bmu_t$ and scale $\sigma_t$, what remains in the CFM framework is to determine them with boundary conditions $(\bmu_0, \sigma_0)=(\bzero, 1)$ and $(\bmu_1, \sigma_1)=(\bx_1, \sigma_{\min})$ to drive the conditional vector field $\bu_t(\cdot|\bx_1)$ \eqref{eq:conditional_vf} and the conditional flow $\psi_t$ \eqref{eq:affine_path}.
Then a neural network is trained to regress $\bu_t(\cdot|\bx_1)$ along the flow $\psi_t(\bx_0)$, i.e. $\bu_t(\psi_t(\bx_0)|\bx_1)=\frac{d}{dt}\psi_t(\bx_0)$, with the following loss:
\begin{equation}\label{eq:loss_epd}
\mL_\textrm{CFM}(\theta) = \mbE_{t, q(\bx_1), p(\bx_0)}\Vert \bv_t(\psi_t(\bx_0);\theta) - \frac{d}{dt}\psi_t(\bx_0)\Vert_2^2 ,
\end{equation}
where $t\sim \unif[0,1]$, $\bx_1\sim q(\cdot)$, and $\bx_0\sim p_0(\cdot)=\mP_q(\cdot; \bzero, \bI)$.

In the following, we specify the coordinate path $\{(\bmu_t, \sigma_t): t\in[0,1]\}$ on the manifold of location-scale probability distributions $\mM=\{p(\cdot; \bmu, \sigma)\}$ by minimizing the flow energy.
We will revisit and introduce three probability paths in the family of exponential power distributions: i) minimal energy path; ii) non-linear interpolant path; and iii) (probabilistic) geodesic path.

\subsection{Minimal Energy Path}
OT with quadratic cost is known to be an energy minimization problem under the Wasserstein-2 metric \citep{Benamou_2000}.
To determine the best conditional vector field, we adopt the energy criterion for the flow:
\begin{equation}\label{eq:energy}
    e(\bv) = \mbE_{p_t(\bx)} \int_0^1 \Vert \dot\bx_t\Vert^2 dt = \int_{\mbR^d} \int_0^1 p_t(\bx) \Vert \bv_t(\bx)\Vert_2^2 dt d\bx .
\end{equation}
Minimizing the energy \eqref{eq:energy} leads to geodesic in the Euclidean space, as stated below.

\begin{restatable}{prop}{minengy}
\label{prop:minengy}
The following OT interpolant path minimizes the flow energy \eqref{eq:energy}:
\begin{equation}\label{eq:OT_probpath}
    \bmu_t = (1-t)\bmu_0 + t\bmu_1 =t\bx_1, \quad \sigma_t = (1-t)\sigma_0 + t\sigma_1 = 1-t + t\sigma_{\min} .
\end{equation}
The minimal energy attained is $e_{\min} = \Vert \bmu_1-\bmu_0\Vert^2 + d|\sigma_1-\sigma_0|^2 .$
\end{restatable}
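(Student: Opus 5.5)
The plan is to reduce the energy \eqref{eq:energy} to a functional of the coordinate path $(\bmu_t,\sigma_t)$ alone, and then minimize that functional by Cauchy--Schwarz.

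\textbf{Step 1: the energy as a functional of $(\bmu_t,\sigma_t)$.} By Theorem \ref{thm:vf}, the vector field generating $p_t(\cdot;\bmu_t,\sigma_t)$ is $\bv_t(\bx)=\dot\bmu_t+\frac{\dot\sigma_t}{\sigma_t}(\bx-\bmu_t)$. Write $\bx=\bmu_t+\sigma_t\bz$, where $\bz\sim p_0$ is the standard member. Then $\bv_t(\bx)=\dot\bmu_t+\dot\sigma_t\bz$, and
\begin{equation*}
\mbE_{p_t}\Vert\bv_t\Vert^2=\Vert\dot\bmu_t\Vert^2+2\dot\sigma_t\,\tp{\dot\bmu_t}\mbE\bz+\dot\sigma_t^2\,\mbE\Vert\bz\Vert^2 .
\end{equation*}
The standard variable $\bz$ has mean zero, so the cross term vanishes. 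Its second moment is $\mbE\Vert\bz\Vert^2=\tr\,\mCv(\bz)$. To obtain the constant $d$ stated in the proposition, I will take $\bz$ normalized to $\mCv(\bz)=\bI$. For $\mP_q$ this means rescaling by $s_{q,d}$ from Proposition \ref{prop:epd_prop}; otherwise the constant is $d\,s_{q,d}$, which does not change the minimizer. Hence
\begin{equation*}
e=\int_0^1\left(\Vert\dot\bmu_t\Vert^2+d\,\dot\sigma_t^2\right)dt .
\end{equation*}

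\textbf{Step 2: minimize over paths.} The minimization is over paths with the fixed endpoints $(\bmu_0,\sigma_0)=(\bzero,1)$ and $(\bmu_1,\sigma_1)=(\bx_1,\sigma_{\min})$. The integrand is a flat (Euclidean) quadratic form in $(\dot\bmu,\sqrt d\,\dot\sigma)$. By Cauchy--Schwarz (or Jensen), $\int_0^1\dot\mu_{t,i}^2\,dt\ge\left(\int_0^1\dot\mu_{t,i}\,dt\right)^2=(\mu_{1,i}-\mu_{0,i})^2$ for each coordinate $i$, and similarly $\int_0^1\dot\sigma_t^2\,dt\ge(\sigma_1-\sigma_0)^2$. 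Equality holds if and only if the derivatives are constant. Equivalently, the Euler--Lagrange equations are $\ddot\bmu_t=0$ and $\ddot\sigma_t=0$. The minimizer is therefore the linear interpolation \eqref{eq:OT_probpath}, and the minimal value is $e_{\min}=\Vert\bmu_1-\bmu_0\Vert^2+d|\sigma_1-\sigma_0|^2$.

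\textbf{Step 3: admissibility.} Finally, $\sigma_t=1-t+t\sigma_{\min}>0$ on $[0,1]$, so the linear path stays inside the family.

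\textbf{Main obstacle.} The main obstacle is scope rather than computation. The minimization ranges only over vector fields of the affine form \eqref{eq:VF_locationscale}, that is, over paths that remain in the location-scale family. This is how I read the statement, and I will say so explicitly. I will also be explicit about the second-moment normalization that produces the factor $d$.
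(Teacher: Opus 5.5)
Your proposal is correct and follows the same route as the paper: use Theorem \ref{thm:vf} to reduce the energy to $\int_0^1(\Vert\dot\bmu_t\Vert^2+d\,\dot\sigma_t^2)\,dt$, then minimize over paths with fixed endpoints. Your Cauchy--Schwarz step certifies global minimality and uniqueness directly, whereas the paper only writes down the Euler--Lagrange equations. Your caveat about the factor $d$ is also well taken: the paper's computation tacitly uses unit marginal variance, which holds for $q=2$, while for general $\mP_q(\bmu,\sigma^2\bI)$ the constant is $d\,s_{q,d}$ (as the paper itself uses in the proof of Theorem \ref{thm:CFMbound}); this changes $e_{\min}$ but not the linear minimizer.
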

\begin{proof}
See Appendix \ref{apx:minengy} .
\end{proof}

Therefore, we have the conditional flow and vector field along the minimal energy path for loss \eqref{eq:loss_epd}:
\begin{equation*}
    \psi_t(\bx) = (1-(1-\sigma_{\min})t)\bx + t\bx_1, \quad \frac{d}{dt}\psi_t(\bx) = \bx_1 - (1-\sigma_{\min})\bx .
\end{equation*}


\subsection{Nonlinear Interpolant Path}
The OT-interpolant \eqref{eq:OT_probpath} is a linear interpolation between the starting $(\bmu_0, \sigma_0)$ and ending $(\bmu_1, \sigma_1)$ points on the manifold. 
A popular nonlinear alternative involves the functions $\sin$ and $\cos$ \citep{Lipman_2024}:
\begin{equation}\label{eq:sino_interp}
    \bx_t = \psi_t(\bx) = \bx \cos(\pi t/2) + \bx_1 \sin(\pi t/2) .
\end{equation}
By $\frac{d}{dt}\psi_t(\bx) = \bu_t(\psi_t(\bx)|\bx_1)$, we can derive the conditional vector field as
    $\bu_t(\bx|\bx_1) = \frac{\pi}{2} \frac{\bx_1 - \bx\sin(\pi t/2)}{\cos(\pi t/2)}$ ,
which 
generates the conditional probability path $p_t(\cdot; \bmu_t, \sigma_t)$ in the location-scale family with coordinates:
$\bmu_t(\bx_1) = \bx_1\sin(\pi t/2), \quad \sigma_t(\bx_1) = \cos(\pi t/2)$,
although it does not minimize the energy \eqref{eq:energy}.
Later we will show that it is actually a special case of the probabilistic geodesic path on the manifold of exponential power distributions with Fisher-Rao metric (Section \ref{sec:PG}).

\section{Probabilistic Geodesic Flow}
\label{sec:PG}
To search for the optimal probability path $\{p_t(\cdot; \bmu_t, \sigma_t): t\in[0,1]\}$ in the location-scale family, it is natural to adopt the Riemannian manifold of probability distributions $\mM=\{p(\cdot; \bmu, \sigma)\}$ with Fisher-Rao metric $g$ \citep{amari00}.
From the manifold perspective, we identify $(\bmu_t, \sigma_t)\leftrightarrow p_t(\cdot; \bmu_t, \sigma_t)$.
Although the methodology introduced applies to general location-scale distributions, we focus on the exponential power distribution $\mP_q(\bmu, \sigma^2\bI)$. 
Lemma \ref{lem:frmetric} shows that the Fisher-Rao metric \eqref{eq:fisher_metric} for this manifold is $ds^2 = \frac{c_{\mu}\Vert d\bmu\Vert^2+c_{\sigma} d\sigma^2}{\sigma^2}$ with $c_{\mu}(d,q) = \frac{2^{-\frac{2}{q}}}{d} \frac{\Gamma(\frac{d-2}{q} +2)}{\Gamma(\frac{d}{q})} q^2$, and $c_{\sigma}=qd$.
Note that if $X\sim \mP_q(\bmu_t, \sigma_t^2\bI)$, then $\bv_t(X) \sim \mP_q(\dot\bmu_t, \dot\sigma_t^2\bI)$ by \eqref{eq:VF_locationscale}.
The flow energy \eqref{eq:energy} can be redefined by replacing the Euclidean metric with the Fisher-Rao metric \eqref{eq:fisher_metric}:
\begin{equation}\label{eq:energy_fisher}
    e(\bv) = \mbE_{p_t(\bx)} \int_0^1 \Vert \dot\bx_t\Vert^2_g dt = \int_{\mbR^d} \int_0^1 p_t(\bx) \Vert \bv_t(\bx)\Vert^2_g dt d\bx = \int_0^1 \frac{c_{\mu}\Vert\dot\bmu_t\Vert^2+c_\sigma \dot\sigma_t^2}{\sigma_t^2} dt.
\end{equation}
By the calculus of variation, we get the geodesic as a semi-circle on the manifold of exponential power distributions $\mP_q(\bmu_t, \sigma_t^2\bI)$ in the following theorem.

\begin{restatable}{thm}{geod}
\label{thm:geod}
The following geodesic path minimizes the variational energy \eqref{eq:energy_fisher}:
\begin{subequations}\label{eq:geod_semicirc}
\begin{align}
\bmu_t &= \bmu_0 + x(t) \be, \quad x(t) = \mu_c + R\cos\theta(t), \quad R = \sqrt{\mu_c^2+\lambda^2\sigma_0^2}  ;\label{eq:geod_semicirc-1}\\
\sigma_t &= \frac{R}{\lambda} \sin\theta(t), \quad \theta(t) = 2\arctan\left(e^{Lt}\tan\frac{\theta_0}{2}\right), \quad L = \log\frac{\tan(\theta_1/2)}{\tan(\theta_0/2)} , \label{eq:geod_semicirc-2}
\end{align}
\end{subequations}
where $\lambda = \sqrt{c_\sigma/c_\mu}$, $\be= (\bmu_1-\bmu_0)/\Vert \bmu_1-\bmu_0\Vert_2$, $\mu_c = \frac{\Vert \bmu_1-\bmu_0\Vert^2+\lambda^2(\sigma_1^2-\sigma_0^2)}{2\Vert \bmu_1-\bmu_0\Vert}$, and $\theta_0 = \atan2(\lambda\sigma_0, -\mu_c)$, $\theta_1 = \atan2(\lambda\sigma_1, \Vert \bmu_1-\bmu_0\Vert-\mu_c)$.
The minimal energy attained is $e_{\min}=c_{\sigma} L^2$, where $\sqrt{c_{\sigma}}|L|$ is the Fisher-Rao distance between $\mP_q(\bmu_0, \sigma_0^2\bI)$ and $\mP_q(\bmu_1, \sigma_1^2\bI)$, for which $|L|$ can be rewritten in the original parameters as
\begin{equation}\label{eq:Fisher-Rao-distance}
|L| = \arcosh\left(1+ \frac{c_{\mu}\Vert \bmu_1-\bmu_0\Vert^2+c_{\sigma}|\sigma_1-\sigma_0|^2}{2c_{\sigma}\sigma_0\sigma_1}\right)   .
\end{equation}
\end{restatable}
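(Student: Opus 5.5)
The plan is to reduce the problem to the classical Poincaré half-plane. First, the energy \eqref{eq:energy_fisher} only involves $(\bmu_t,\sigma_t)$ through $c_\mu\Vert\dot\bmu_t\Vert^2+c_\sigma\dot\sigma_t^2$. Decompose $\bmu_t-\bmu_0 = x(t)\be + \bmu_t^\perp$ with $\bmu_t^\perp\perp\be$. The perpendicular component only adds the nonnegative term $c_\mu\Vert\dot\bmu_t^\perp\Vert^2/\sigma_t^2$, and its boundary values are $\bzero$ at both ends. Hence any minimizer has $\bmu_t^\perp\equiv\bzero$, and the problem is two-dimensional in $(x,\sigma)$ with $x(0)=0$ and $x(1)=\Vert\bmu_1-\bmu_0\Vert$.

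Next rescale $y=\lambda\sigma$ with $\lambda=\sqrt{c_\sigma/c_\mu}$. The integrand becomes $c_\mu(\dot x^2+\dot y^2)/(y^2/\lambda^2)\cdot\lambda^{-2}\cdot\lambda^2$, which simplifies to $c_\sigma(\dot x^2+\dot y^2)/y^2$. So the energy is $c_\sigma$ times the energy of the hyperbolic (Poincaré half-plane) metric $(dx^2+dy^2)/y^2$.

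By Cauchy–Schwarz, energy $\ge$ (length)$^2$, with equality for constant-speed parametrizations. The minimizer is therefore the constant-speed hyperbolic geodesic. Minimizers of the energy exist and are geodesics, and the half-plane is uniquely geodesic.

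To identify the curve, I would use the Euler–Lagrange equations. The $x$-equation gives the conserved quantity $\dot x/y^2=\text{const}$, and constant speed gives $(\dot x^2+\dot y^2)/y^2=L^2$. Together these show that the trajectories are semicircles centered on the axis $y=0$, or vertical lines in the degenerate case $\bmu_1=\bmu_0$, which I would treat separately.

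Parametrize the circle as $x=\mu_c+R\cos\theta$, $y=R\sin\theta$. The center $\mu_c$ follows from equidistance of $(0,\lambda\sigma_0)$ and $(\Vert\Delta\bmu\Vert,\lambda\sigma_1)$, which gives exactly the stated $\mu_c$ and $R$. The endpoint angles are $\theta_0=\atan2(\lambda\sigma_0,-\mu_c)$ and $\theta_1=\atan2(\lambda\sigma_1,\Vert\Delta\bmu\Vert-\mu_c)$.

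On this circle the hyperbolic speed is $\sqrt{\dot x^2+\dot y^2}/y=|\dot\theta|/\sin\theta$. Constant speed means $\frac{d}{dt}\log\tan(\theta/2)=\dot\theta/\sin\theta=L$ is constant. This integrates to $\theta(t)=2\arctan(e^{Lt}\tan(\theta_0/2))$, and imposing $\theta(1)=\theta_1$ gives $L$. Then $\sigma_t=y/\lambda=(R/\lambda)\sin\theta$.

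The energy is $\int_0^1 c_\sigma L^2\,dt=c_\sigma L^2$. The Fisher–Rao distance is $\sqrt{e_{\min}}=\sqrt{c_\sigma}|L|$.

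The main obstacle is converting $|L|=|\log\frac{\tan(\theta_1/2)}{\tan(\theta_0/2)}|$ into the closed form \eqref{eq:Fisher-Rao-distance}. For this I would use the known half-plane distance formula
\[
\cosh d = 1+\frac{(x_1-x_0)^2+(y_1-y_0)^2}{2y_0y_1}.
\]
I would verify it on the semicircle either by direct trigonometric identities or by invariance under Möbius maps $z\mapsto az+b$, which reduce to a vertical segment where $d=|\log(y_1/y_0)|$.

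Finally, substitute $y_i=\lambda\sigma_i$ and divide the numerator and denominator by $\lambda^2=c_\sigma/c_\mu$. This yields
\[
1+\frac{c_\mu\Vert\bmu_1-\bmu_0\Vert^2+c_\sigma|\sigma_1-\sigma_0|^2}{2c_\sigma\sigma_0\sigma_1}.
\]
Care is needed with the branch of $\atan2$ when $\mu_c<0$ or $\mu_c>\Vert\Delta\bmu\Vert$, and with the sign of $L$.
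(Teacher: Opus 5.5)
Your proposal is correct and follows essentially the same route as the paper's Poincar\'e semi-circle proof: project onto $\be$, rescale $y=\lambda\sigma$ to get $c_\sigma(dx^2+dy^2)/y^2$, fix the semicircle from the two boundary points, and impose constant speed $\dot\theta/\sin\theta=L$; along the way you supply steps the paper only asserts, namely why a minimizer has no component orthogonal to $\be$, why energy minimizers are constant-speed length minimizers, and how $|L|$ turns into the stated $\arcosh$ expression. One small slip: maps $z\mapsto az+b$ send semicircles to semicircles and cannot straighten the arc into a vertical segment, so the M\"obius route needs a general element of $PSL(2,\mathbb R)$ such as $z\mapsto -1/(z-p)$ with $p$ a foot of the semicircle, whereas your direct trigonometric check, which reduces to $\cosh L=\frac{1-\cos\theta_0\cos\theta_1}{\sin\theta_0\sin\theta_1}$, works as stated.
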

\begin{proof}
See Appendix \ref{apx:geod}.
\end{proof}

Although mathematically complicated, this \textbf{analytically tractable} formula has numerical complexity $\mathcal{O}(d)$, which becomes negligible compared to network training in higher dimensions, as evidenced by the training time reported in Section \ref{sec:numerics} and Section \ref{apx:numerics}.
We also have an anisotropic version that decouples the isotropic geodesic into $d$ independent one-dimensional geodesics (Appendix \ref{apx:aniso}).

In Theorem \ref{thm:CFMbound}, the CFM loss \eqref{eq:loss_epd} along the geodesic in Theorem \ref{thm:geod} decomposes into the FM loss and the variance of conditional vector field and is bounded from below.

In particular, we get the vector field $\bv_t$ \eqref{eq:VF_locationscale} that generates the exponential power probability path $\mP_q(\bmu_t, \sigma_t^2\bI)$, and the corresponding flow $\phi_t(\bx)$ \eqref{eq:affine_flow}.
Letting $\bmu_0=\bzero, \bmu_1=\bx_1$, $\sigma_0=1, \sigma_1 = \sigma_{\min}$, we have 
the conditional flow and vector field along the probabilistic geodesic path to feed in \eqref{eq:loss_epd}:
\begin{equation}\label{eq:pgvf}
\begin{aligned}
\psi_t(\bx) &= \bmu_t(\bx_1) + \sigma_t(\bx_1) \bx = x(t) \bx_1/\Vert\bx_1\Vert +  \frac{R}{\lambda} \sin\theta(t)\bx, \\
\frac{d}{dt} \psi_t(\bx) &= \bu_t(\psi_t(\bx)|\bx_1) = \left[-\sin\theta(t) \frac{\bx_1}{\Vert\bx_1\Vert} + \frac{1}{\lambda}\cos\theta(t) \bx\right] RL\sin\theta(t) .
\end{aligned}
\end{equation}

Interestingly, when $R=\lambda=\Vert\bx_1\Vert$, letting $\sigma_{\min}\downarrow 0$, we have $\mu_c=\frac{\Vert \bx_1\Vert^2+\lambda^2(\sigma_{\min}^2-1)}{2\Vert \bx_1\Vert} \to 0$. Therefore $x(t)\to R\cos\theta(t)$. The conditional flow along this special geodesic path reduces to
\begin{equation*}
\psi_t(\bx) = x(t) \bx_1/\Vert\bx_1\Vert +  \frac{R}{\lambda} \sin\theta(t)\bx \to
\cos\theta(t) \bx_1 + \sin\theta(t) \bx ,
\end{equation*}
which is exactly the sinusoidal path \eqref{eq:sino_interp} with $\theta(t) = \pi(1-t)/2$.


FM methods transfer samples such that the distribution they represent changes from pure noise (standard normal or exponential power) to the data distribution. The ultimate goal is to evolve distributions, not to transport samples. \emph{OT based approaches move samples along straight lines in the Euclidean space. However, they do not necessarily travel the shortest distances in the space of probability distributions.}
The leftmost panel of Figure \ref{fig:epd_geods} shows that the Fisher-Rao distance may be smaller than the Euclidean distance between two exponential power distributions $\mP_q(\bzero, \bI)$ and $\mP_q(\bm{1}, \sigma^2_{\min}\bI)$ for certain $q$'s. The other panels of Figure \ref{fig:epd_geods} demonstrate selected probabilistic geodesic paths (second from left) and contrast their corresponding flow paths for $q=1.0$ vs $q=2.0$ (rightmost two) -- smaller $q$ leads to a more stringent movement than larger $q$.

\paragraph{Limitation}
There is a numerical issue on the conditional vector field that aggravates with increasing dimensions. When $\sigma_1\to 0$, the geodesic length $|L|\to \infty$ as in \eqref{eq:Fisher-Rao-distance}. As $t\to 1$, $\sigma_t \to \sigma_{\min}\ll 1$, and the conditional vector field $\dot\psi_t$ in \eqref{eq:pgvf} fluctuates greatly in magnitude as the flow moves towards the data end. Therefore, the vector field learned by neural network would cause the resulted ODE stiff and numerically difficult to solve. Luckily, this limitation can be alleviated by a hybrid (HB):
\begin{equation*}
    \psi_t(\bx) =
    \begin{cases}
        \bmu_t(\bx_1) + \sigma_t(\bx_1) \bx, & t < t_c,  \\
        t \bx_1 + (1-t) \bx, & t \geq t_c .
    \end{cases}
\end{equation*}
where we use $t_c=0.85$ in numerical experiments.

\section{Numerical Results}\label{sec:numerics}

In this section, we compare the proposed probabilistic geodesic (PG) FM and its hybrid variant (HB) against baseline models with variance preserving (VP), optimal transport (OT) interpolant, and sinusoidal (sino) probability paths. To fully test their performance, we also include the state-of-the-art models Riemannian FM \citep[RFM][]{Chen_2024}, Metric FM \citep[MFM][]{Kapusniak_2024}, and Fisher FM \citep[FFM][]{Davis_2024} in the comparison. Three 2d benchmark examples including Swiss roll, moons, and checkerboard, higher dimensional simulations of an augmented eight-Gaussian and Neal's funnel distribution, handwritten digits, four scientific datasets (Appendix \ref{apx:scidata}), and a molecular generation problem are used in the test.
We refer to Wasserstein-2 distance ($W_2$), 
energy distance (ED), 
and maximum mean discrepancy (MMD) 
to measure the discrepancy between synthetic sample distribution and real data distribution.
One can refer to Appendix \ref{apx:setup} for more details of the setup.
All the computer code will be released.

\subsection{Simulations}

\subsubsection{2d Benchmarks: swiss roll, moons, and checkerboard}

\begin{figure}[t]
\centering
\includegraphics[height=0.2\linewidth, width=.195\linewidth]{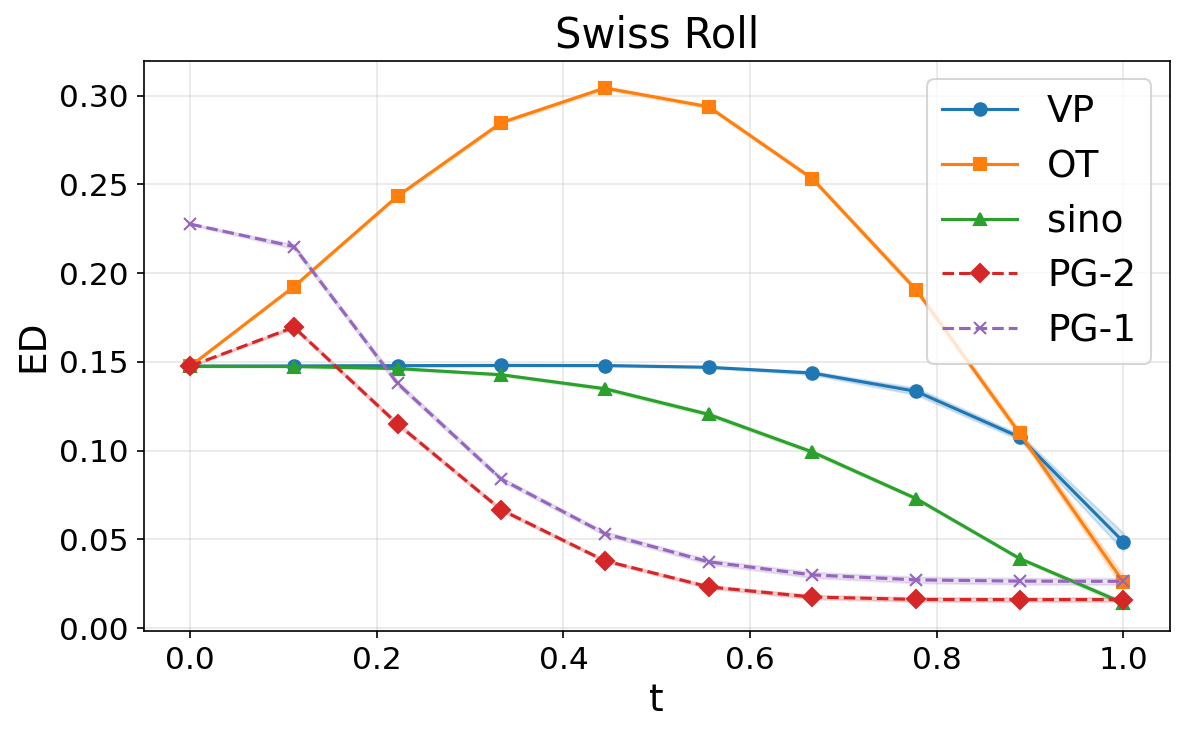}
\includegraphics[height=0.2\linewidth, width=.195\linewidth]{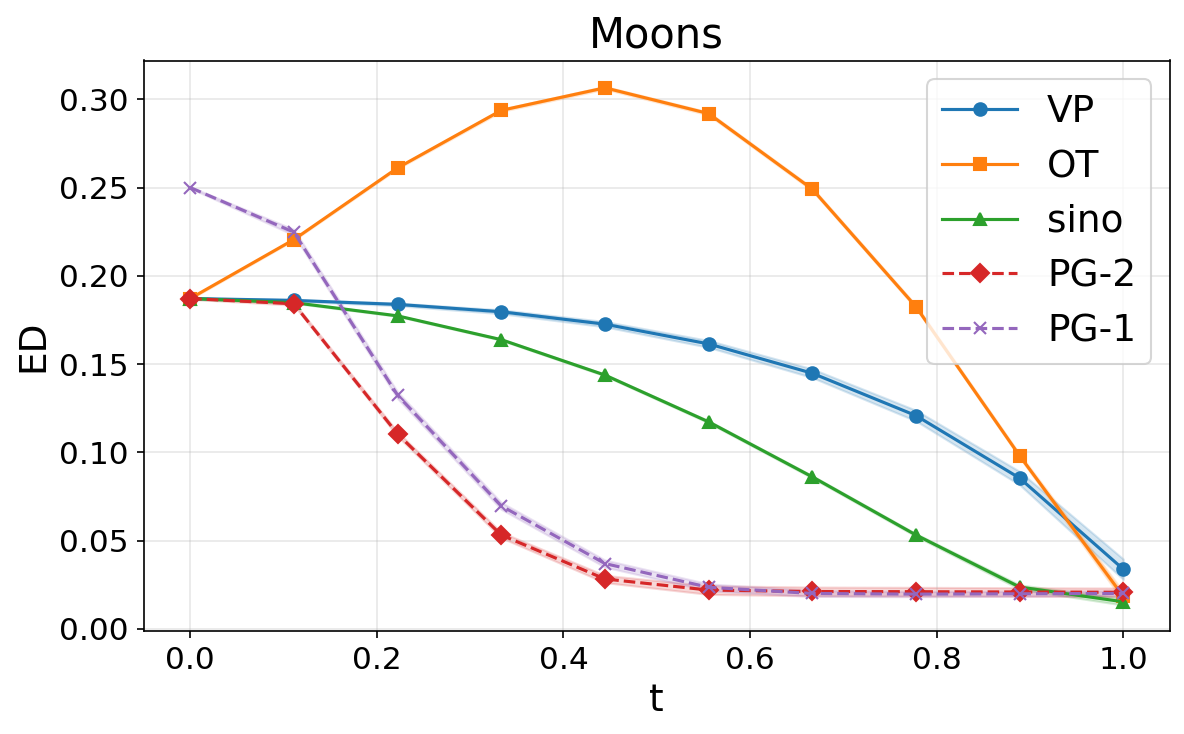}
\includegraphics[height=0.2\linewidth, width=.195\linewidth]{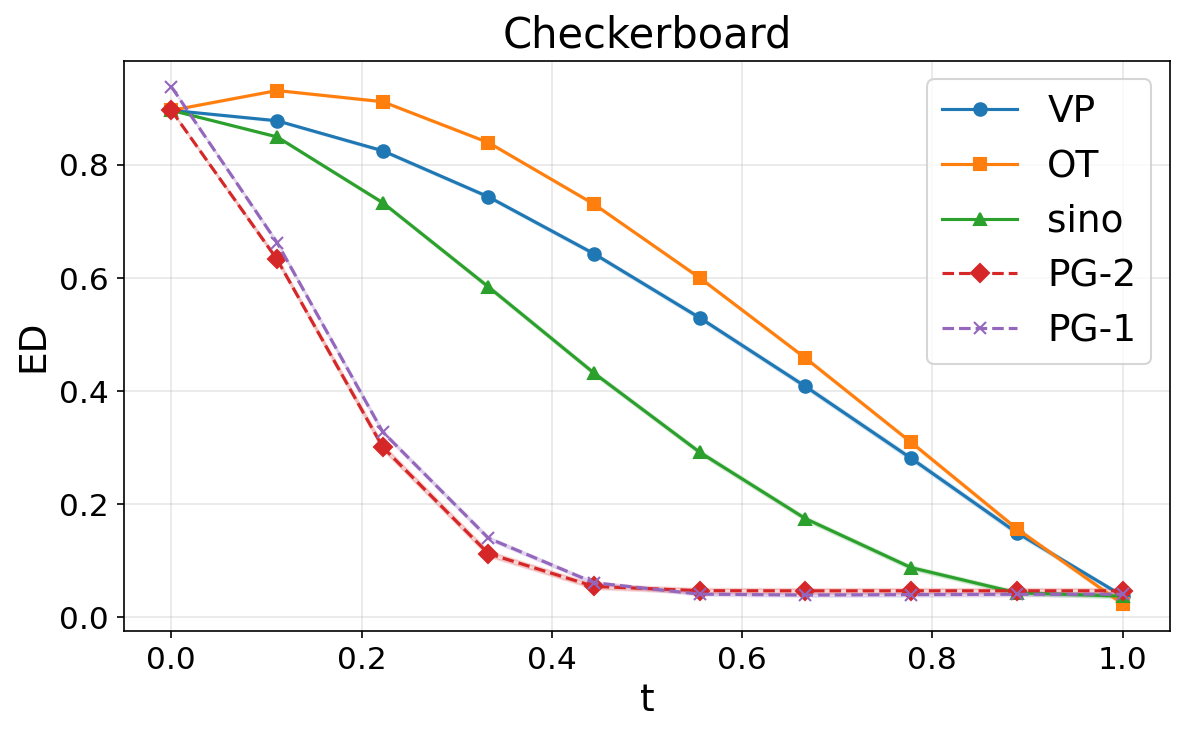}
\includegraphics[height=0.2\linewidth, width=.195\linewidth]{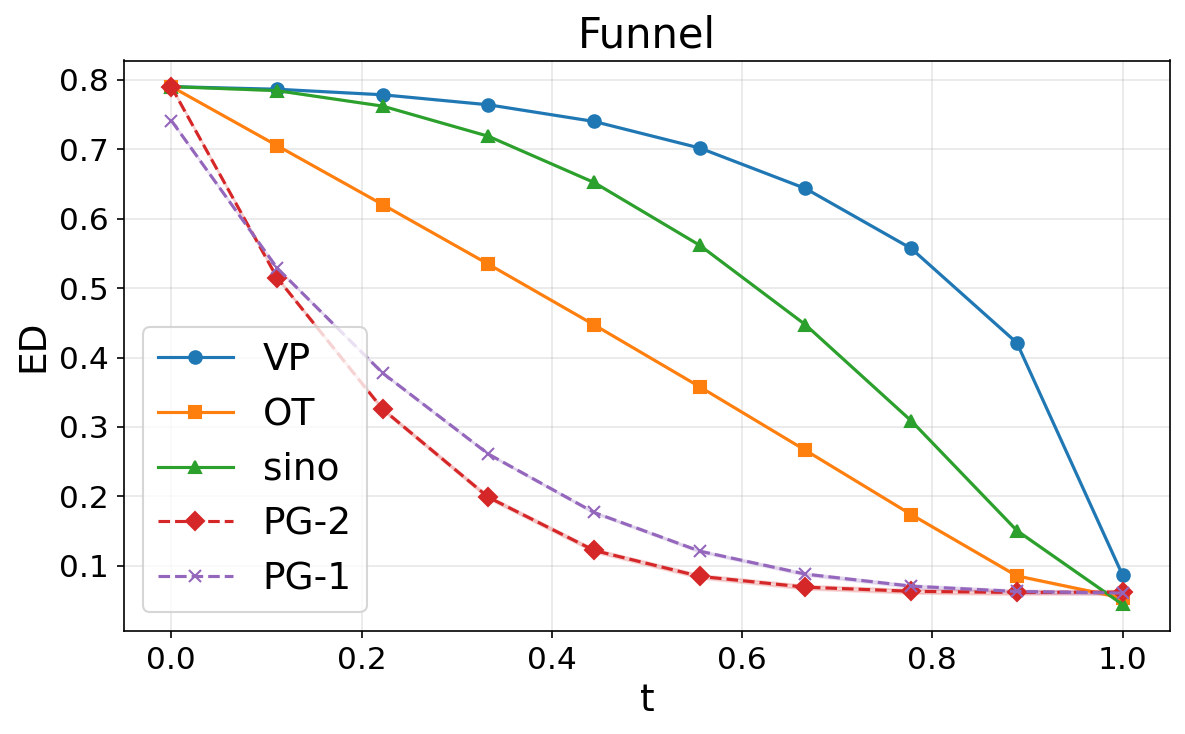}
\includegraphics[height=0.2\linewidth, width=.19\linewidth]{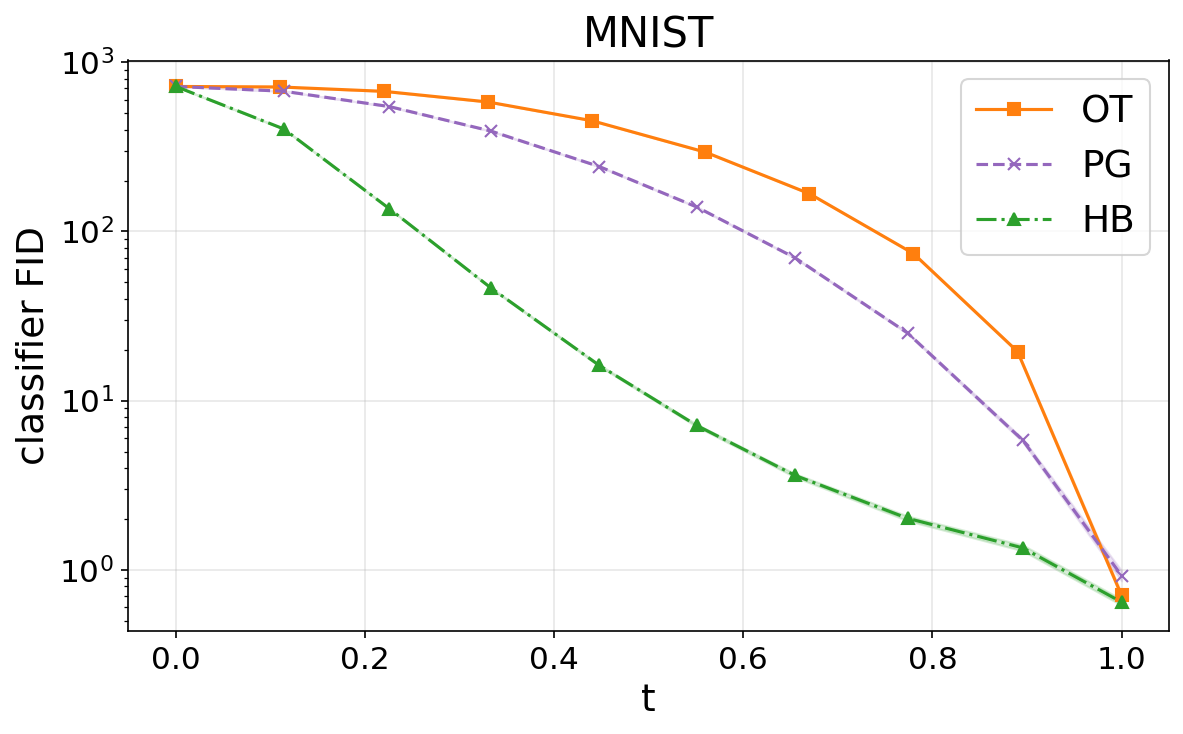}
\caption{Contrasting trajectories of energy distance (ED) in the Swiss roll, moons, checkerboard, and funnel examples (left four) and class Frechet inception distance (clf-FID) in MNIST (rightmost).}
\label{fig:ED_traj}
\end{figure}

First, we test VP, OT, sino, and PG-2 (on Gaussian distributions) and PG-1 (on exponential power distributions $\mP_1(\bmu_t, \sigma_t)$) on three 2d benchmark datasets: Swiss roll, moons and checkerboard \citep{grathwohl_2019}.
Figure \ref{fig:swissroll_moons_sample} demonstrates their sample paths on the Swiss roll dataset. It is clear that PG methods are more effective in transporting samples in which the rolling patterns emerge in earlier stages. Starting with the more concentrated standard exponential power noise, PG-1 learns the data feature even faster than the others based on standard Gaussian noise. Similar quick feature learning also shows in the other two examples (Figures \ref{fig:swissroll_moons_sample} and \ref{fig:checkerboard_funnel_sample}).
The metrics in terms of $W_2$, ED, and MMD in Tables \ref{tab:swissroll} and \ref{tab:moons_checkerboard} also support the numerical advantage (smaller generative discrepancy) of PG methods, of which some are one order of magnitude smaller. 
The time overhead (time/epoch) fades off as the dimension increases (see Tables \ref{tab:funnel100}, \ref{tab:funnel}, and \ref{tab:8gaussians}).














Figure \ref{fig:ED_traj} plots the trajectories of ED on these benchmark datasets and clf-FID on MNIST. OT interpolant with independent coupling ($\bx_0\perp \bx_1$) features an ``energy hump" (refer to Remark \ref{rk:energy_hump}) in the middle of the path due to the well-known midpoint (marginal) variance contraction \citep{Albergo_2023, Tong_2024}. This does not present an issue for our PG paths as their energy distances to the target distribution decrease quickly and monotonically.

\paragraph{Ablation on $q$.}
As illustrated in Figures \ref{fig:exponential_power} and Figure \ref{fig:epd_geods}, the parameter $q$ in the exponential power distribution plays a role of regularization: smaller $q$ induces a more strict movement of the sample. However, that does not necessarily lead to better performance when training becomes more difficult in the non-convex regime $(q<1.0)$. In Table \ref{tab:simulation_qs}, these discrepancy metrics generally decrease and then increase as $q$ changes from above 2 to below 1.

\subsubsection{Higher Dimensions: 36d eight-Gaussians, 100d Funnel Distribution}

To test the robustness of PG methods to higher dimensionality, we extend the classical eight-Gaussians example 
\citep{grathwohl_2019} from 2d to $d=36$ dimensions by filling in the remaining $d-2$ coordinates with isotropic Gaussian noise.
To test the capability of PG methods of handling heavy-tail data, we investigate the more challenging Neal's funnel distribution \citep{Neal_2003}
whose density $f(\bx, \nu) = \mN_d(\bx;\bzero, e^{\nu/2}\bI) \mN(\nu; 0, 3)$ features a funnel (Figure \ref{fig:checkerboard_funnel_sample}).

\begin{figure}[t]
\centering
\includegraphics[height=0.055\linewidth, width=.495\linewidth]{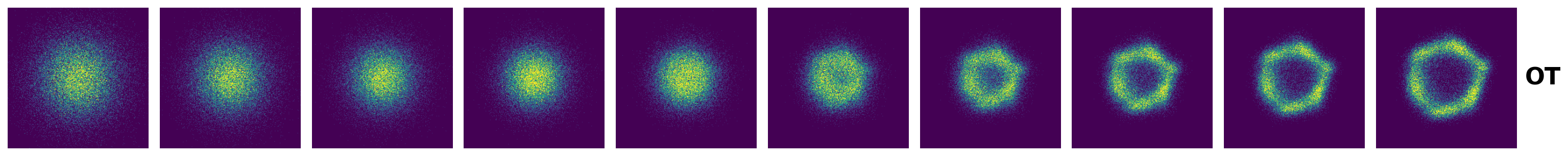}
\includegraphics[height=0.055\linewidth, width=.495\linewidth]{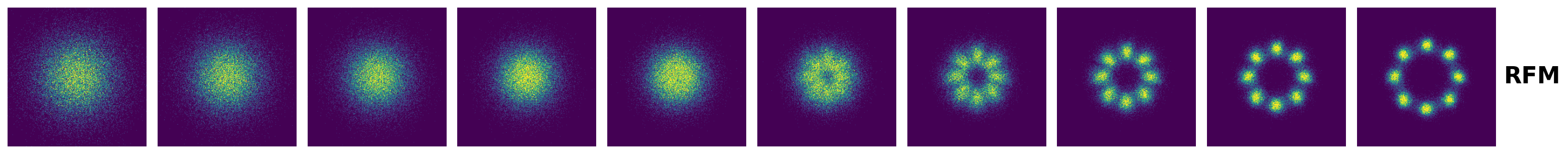}
\includegraphics[height=0.055\linewidth, width=.495\linewidth]{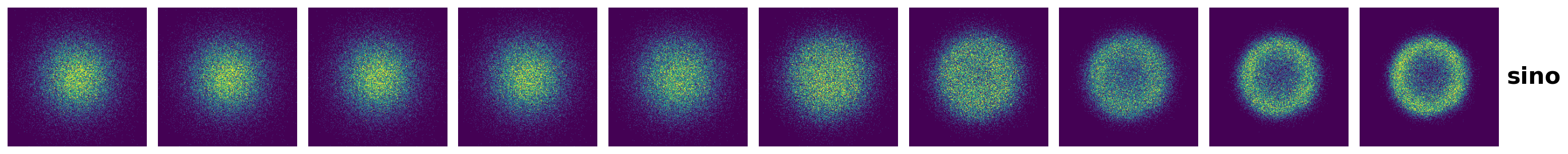}
\includegraphics[height=0.055\linewidth, width=.495\linewidth]{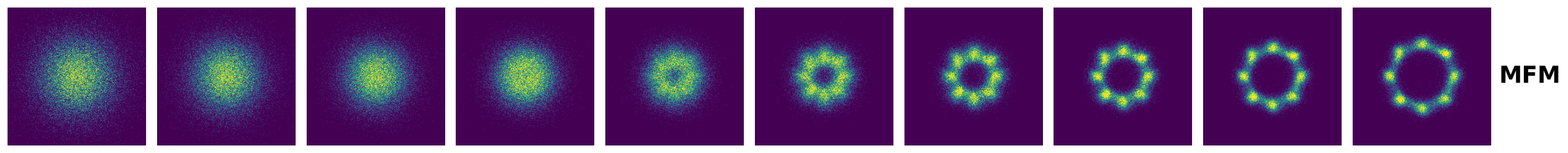}
\includegraphics[height=0.055\linewidth, width=.495\linewidth]{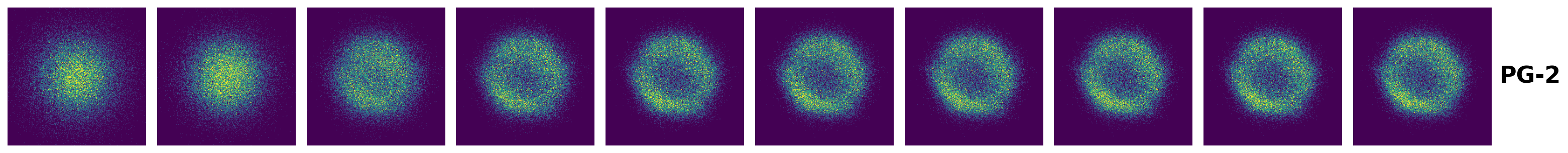}
\includegraphics[height=0.055\linewidth, width=.495\linewidth]{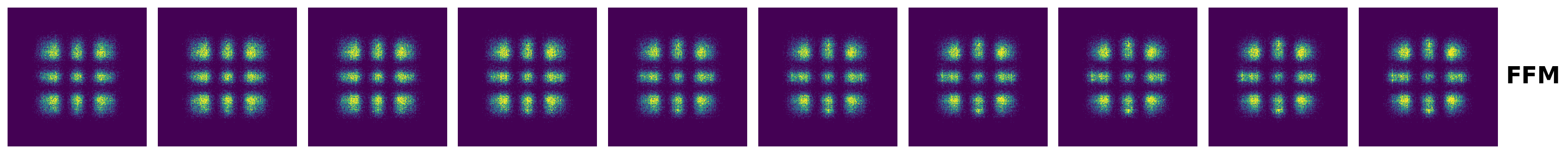}
\includegraphics[height=0.055\linewidth, width=.495\linewidth]{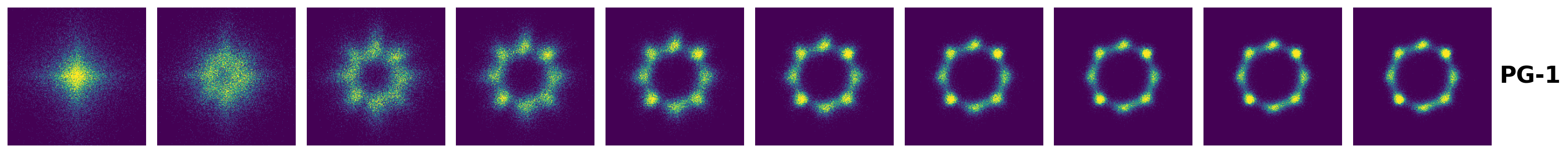}
\includegraphics[height=0.055\linewidth, width=.495\linewidth]{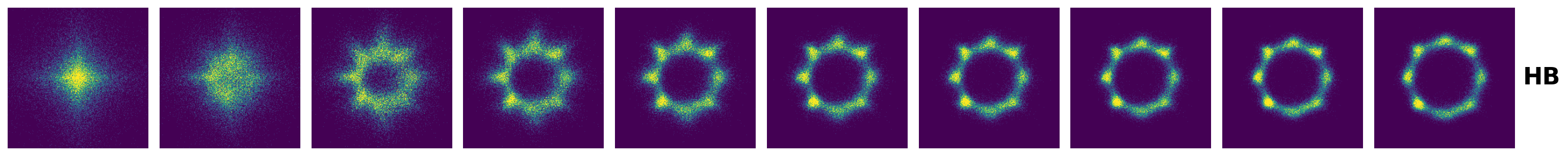}
\caption{Sample paths in the 8-Gaussians (36d) example. Only the first two dimensions are plotted.}
\label{fig:8gaussians_sample}
\end{figure}


\begin{table}[htbp]
\centering
\caption{Neal's funnel (100d): comparison in $W_2$, ED, and MMD (mean $\pm$ std over 10 seeds).}
\label{tab:funnel100}
\resizebox{\columnwidth}{!}{
\begin{tabular}{l l c c c c}
\toprule
DIST & Method & $W_2$ $\downarrow$ & ED $\downarrow$ & MMD $\downarrow$ & time/epoch \\
\midrule





q=1.0 & OT
& $3.73e{0} \pm 2.49e{-1}$
& $1.43e{-1} \pm 1.35e{-2}$
& $3.52e{-3} \pm 1.07e{-3}$
& $5.90e{-2} \pm 6.85e{-5}$ \\


q=2.0 & OT
& $3.74e{0} \pm 2.00e{-1}$
& $1.41e{-1} \pm 1.18e{-2}$
& $3.32e{-3} \pm 7.60e{-4}$
& $5.76e{-2} \pm 8.21e{-5}$ \\





\hline

q=1.0 & RFM
& $3.65e{0} \pm 2.01e{-1}$
& $1.27e{-1} \pm 1.33e{-2}$
& \cellcolor{lightgray}$2.30e{-3} \pm 7.17e{-4}$
& $7.68e{-2} \pm 6.59e{-4}$ \\

q=2.0 & RFM
& $3.69e{0} \pm 2.12e{-1}$
& $1.29e{-1} \pm 1.10e{-2}$
& $2.48e{-3} \pm 5.29e{-4}$
& $7.11e{-2} \pm 5.74e{-4}$ \\

\hline

q=1.0 & MFM
& $4.05e{0} \pm 2.39e{-1}$
& \cellcolor{lightgray}$1.26e{-1} \pm 1.16e{-2}$
& $2.65e{-3} \pm 7.57e{-4}$
& $4.36e{-1} \pm 1.57e{-2}$ \\

q=2.0 & MFM
& $4.17e{0} \pm 1.83e{-1}$
& $1.27e{-1} \pm 1.31e{-2}$
& $2.80e{-3} \pm 5.95e{-4}$
& $4.26e{-1} \pm 1.62e{-2}$ \\

\hline

q=1.0 & FFM
& $6.18e{0} \pm 5.90e{-1}$
& $1.00e{0} \pm 7.25e{-2}$
& $1.16e{-1} \pm 8.27e{-3}$
& $1.00e{-2} \pm 2.52e{-5}$ \\

q=2.0 & FFM
& $5.55e{0} \pm 6.64e{-1}$
& $8.99e{-1} \pm 7.95e{-2}$
& $1.01e{-1} \pm 8.49e{-3}$
& $8.69e{-3} \pm 1.38e{-5}$ \\





\hline

q=1.0 & PG
& $3.76e{0} \pm 3.06e{-1}$
& $1.82e{-1} \pm 1.04e{-2}$
& $5.94e{-3} \pm 8.64e{-4}$
& $6.51e{-2} \pm 1.39e{-4}$ \\


q=2.0 & PG
& $4.58e{0} \pm 6.98e{-1}$
& $1.62e{-1} \pm 2.64e{-2}$
& $4.48e{-3} \pm 1.34e{-3}$
& $6.33e{-2} \pm 1.08e{-4}$ \\

\hline

q=1.0 & HB
& \cellcolor{lightgray}$2.99e{0} \pm 2.38e{-1}$
& \cellcolor{lightgray}$1.26e{-1} \pm 1.69e{-2}$
& $2.39e{-3} \pm 6.59e{-4}$
& $6.92e{-2} \pm 2.69e{-4}$ \\


q=2.0 & HB
& $3.67e{0} \pm 3.27e{-1}$
& $1.84e{-1} \pm 1.25e{-1}$
& $8.34e{-3} \pm 1.55e{-2}$
& $6.61e{-2} \pm 3.42e{-4}$ \\

\bottomrule
\end{tabular}}
\end{table}

\paragraph{Why non-Gaussianity matters?}
Figure \ref{fig:8gaussians_sample} compares the sample paths of the baseline (OT, sino) and SOTA (RFM, MFM, FFM) models with the proposed PG and HB models.
It challenges all the methods to discover eight 2d Gaussian blobs in 36 dimensional ambient space, yet PG-1 and HB are the fastest to identify all the modes. What is more, the exponential power distribution's heavier tail property (with smaller $q$ as in PG-1) helps identify and isolate those modes faster and better than Gaussian ($q=2$ as in PG-2) which tends to smooth and blur the pattern.  This explains why most $q=1.0$ rows have lower discrepancy metrics than $q=2.0$ (except RFM and MFM) in Table \ref{tab:8gaussians} for which PG-1
achieves the lowest ED and MMD.  
In Table \ref{tab:funnel100},
except for a few cells, $q=1.0$ almost unanimously beats $q=2.0$, which again justifies the adoption of exponential power distribution.
HB attains the lowest $W_2$ and ED for $q=1.0$.
Note that MFM reaches the same lowest ED at a much higher time cost ($4.36e-1$ vs $6.92e-2$ seconds per epoch for HB).

\subsection{MNIST Handwritten Digits}
In this subsection, we consider the MNIST dataset \citep{Lecun_1998} consisting of 60,000 training handwritten digits and 10,000 testing handwritten digits with $d=28\times 28=784$. 
The sharp contrast along the edge of these digits challenges FM models.
Here we focus on comparing the OT, PG, and HB paths.

\vspace{4pt}
\noindent\begin{minipage}[c]{0.44\textwidth}

\centering
\captionsetup{hypcap=false}
\includegraphics[width=1\linewidth, height=0.6\linewidth]{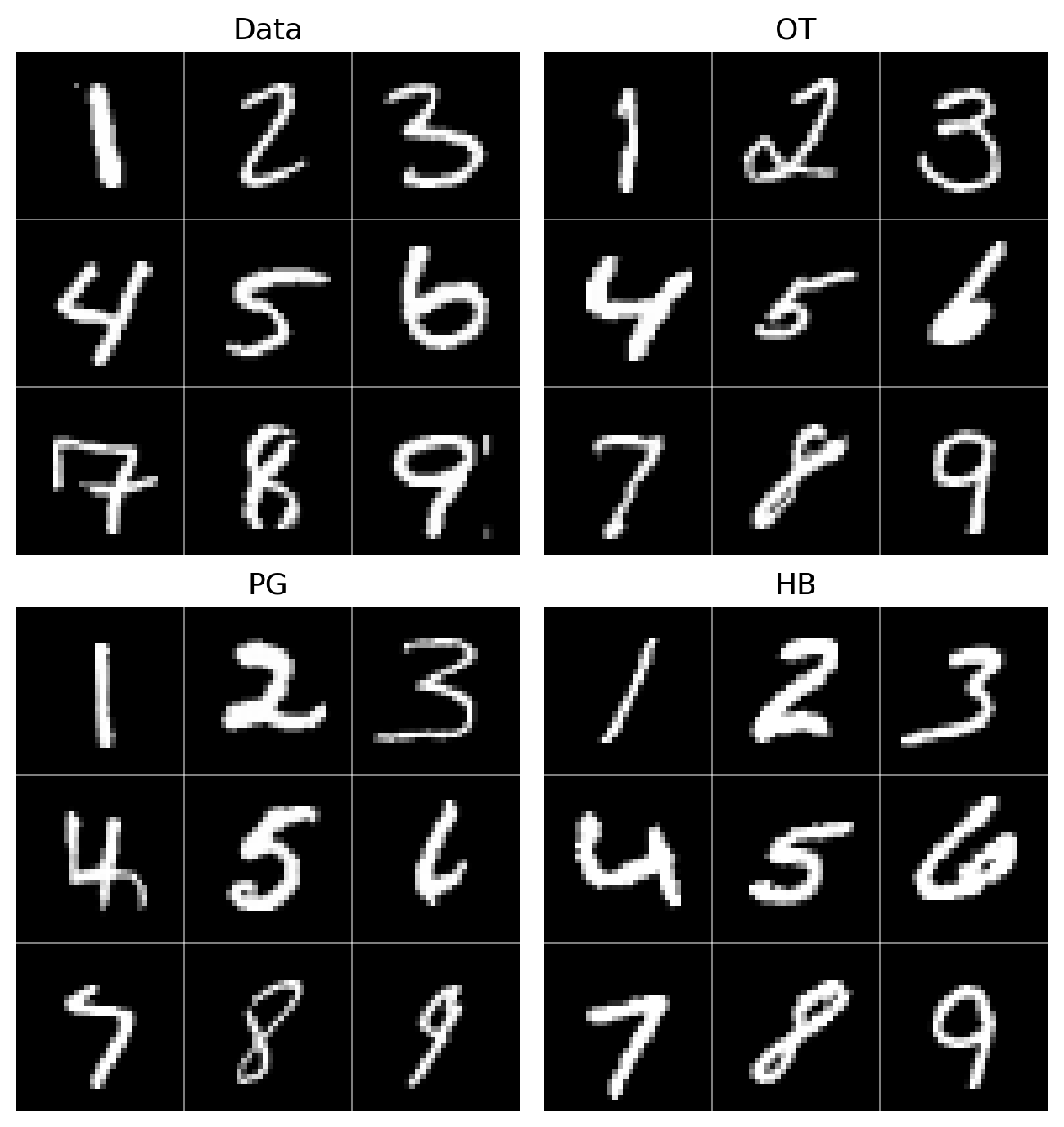}
 \vspace{-20pt}
\captionof{figure}{sample handwritten digits.}
\label{fig:mnist_sample}

\end{minipage}
\hfill
\begin{minipage}[c]{0.55\textwidth}
\centering
\captionsetup{hypcap=false}
\captionof{table}{Comparison of generation quality metrics across different path
methods. Values are reported as mean $\pm$ standard deviation over
$n=5$ runs. Higher IS is better, while lower FID and clf-FID are better.}
\vspace{-10pt}
\label{tab:mnist}
\resizebox{\linewidth}{!}{%
\begin{tabular}{llccc}
\toprule
DIST & PATH & IS $\uparrow$ & FID $\downarrow$ & clf-FID $\downarrow$ \\
\midrule

q=1.0 & OT
& $2.099 \pm 0.008$
& $1.606 \pm 0.037$
& $0.718 \pm 0.108$ \\

q=2.0 & OT
& \cellcolor{lightgray}$2.107 \pm 0.003$
& \cellcolor{lightgray}$1.597 \pm 0.019$
& $0.770 \pm 0.081$ \\

\hline

q=1.0 & PG
& $2.095 \pm 0.009$
& $1.671 \pm 0.013$
& $0.909 \pm 0.139$ \\

q=2.0 & PG
& $2.083 \pm 0.006$
& $2.281 \pm 0.097$
& $1.843 \pm 0.279$ \\

\hline

q=1.0 & HB
& \cellcolor{lightgray}$2.107 \pm 0.008$
& $1.727 \pm 0.029$
& $0.567 \pm 0.119$ \\

q=2.0 & HB
& $2.102 \pm 0.008$
& $1.803 \pm 0.040$
& \cellcolor{lightgray}$0.556 \pm 0.071$ \\

\bottomrule
\end{tabular}}

\end{minipage}\par
\vspace{4pt}

In Table~\ref{tab:mnist}, we evaluate the sample quality of OT, PG, and HB
under both $q=1$ and $q=2$ in terms of the inception score (IS) and the
Fr\'echet inception distance (FID), based on ImageNet-pretrained
Inception-V3 activations. Additionally, we report a classifier FID
(clf-FID), computed identically but on penultimate-layer features from
a LeNet-5~\cite{Lecun_1998} classifier trained on MNIST, which
makes the metric sensitive to domain-specific structure that inception
features do not capture. 
While OT achieves the highest IS and the lowest FID, HB attains the highest IS (same) and the  lowest clf-FID. 
Figure \ref{fig:mnist_sample} illustrates some generated digits.

\subsection{Molecular Generation}
Lastly, we consider a real data example of thousands of dimensions -- de novo molecule generation \citep[Section 4.4 of][]{Davis_2024}, which aims to generate molecules positions unconditionally based on QM9 data \citep{Ruddigkeit_2012, Ramakrishnan_2014}. 
Refer to Appendix \ref{apx:molecule} for a detailed description of the problem.
Table \ref{tab:molecular-generation} reports the percentage of stable atoms within molecules (Atoms S), valid molecules (Mols Val), and stable molecules (Mols. S) based on 1,000 molecules (first 4 rows quoted from Table 3 of \cite{Davis_2024}).
For a fare comparison, all the noise distributions are Gaussian. PG/HB is slightly lower in the percentage of valid molecules (Mols Val), but has higher percentages of stable atoms within molecules (Atoms S) and stable molecules (Mols. S) than OT, FFM and FlowMol. 
Figure \ref{fig:QM9_sample} shows synthetic molecules generated by PG.

\vspace{4pt}
\noindent\begin{minipage}[c]{0.44\textwidth}

\centering
\captionsetup{hypcap=false}
\includegraphics[width=1\linewidth, height=0.6\linewidth]{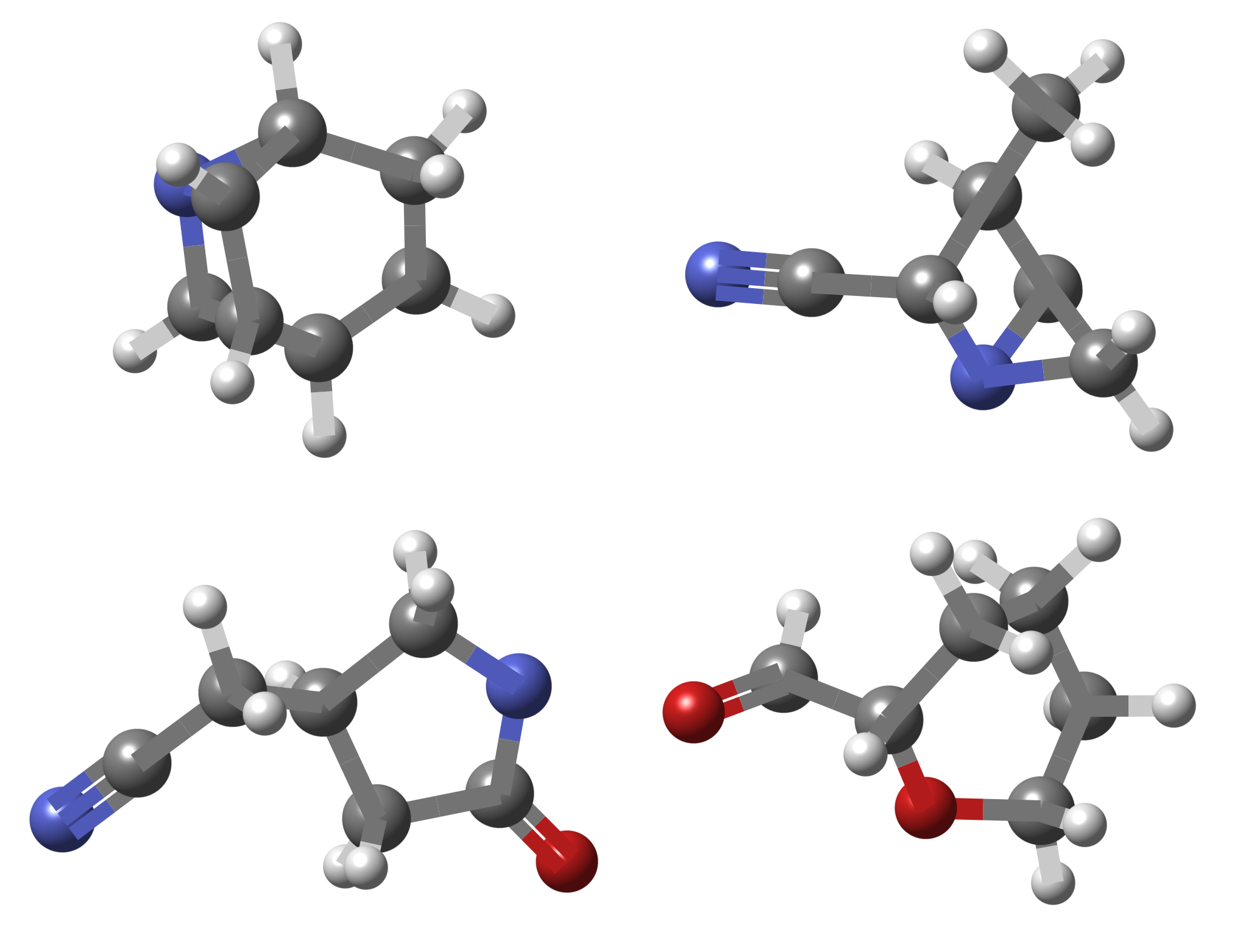}
 \vspace{-20pt}
\captionof{figure}{Generated molecules by PG-2.}
\label{fig:QM9_sample}

\end{minipage}
\hfill
\begin{minipage}[c]{0.55\textwidth}
\centering
\captionsetup{hypcap=false}
\captionof{table}{Results of molecular generation on QM9: the percentage of stable atoms within molecules (Atoms S), valid molecules (Mols Val), and stable molecules (Mols. S) based on 1,000 molecules.}
\vspace{-10pt}
\label{tab:molecular-generation}
\resizebox{\linewidth}{!}{%
\begin{tabular}{lccc}
\toprule
\textbf{Method} &
\textbf{Atoms S. (\%) $\uparrow$} &
\textbf{Mols. Val. (\%) $\uparrow$} &
\textbf{Mols. S. (\%) $\uparrow$} \\
\midrule
Fisher-Flow & 98.6           & 95.3          & 88.2          \\
JODO        & \textbf{99.4}  & \textbf{98.9} & \textbf{98.7} \\
EquiFM      & 99.4           & 94.4          & 93.2          \\
FlowMol     & 98.9           & \textbf{96.9} & 84.2          \\
\hline
OT          & 99.35          & 89.80         & 91.60         \\
\textit{PG} & \textbf{99.49} & 90.63         & \textbf{93.75}\\
\textit{HB} & 99.39          & 89.84         & 92.97         \\
\bottomrule
\end{tabular}}
\end{minipage}\par

\section{Conclusion}\label{sec:conclusion}
In this paper, we extend the conditional flow matching from Gaussian to a broader location-scale family with a particular focus on exponential power distributions that embrace Gaussian as a special case $q=2$. We propose a class of novel probability paths as geodesics on a manifold of probability distributions with Fisher-Rao metric. 
The proposed methods demonstrate fast feature learning capability with numerical performance superior or comparable to SOTA FM models.

The probabilistic geodesic path is not as conceptually straightforward as the OT interpolant path. 
The computational overhead diminishes as the dimension increases, compared to the cost of network training.
Future directions include integration of
the proposed probabilistic geodesic path into recent one-step methods \citep{Frans_2025, Geng_2026} to further boost the efficiency and effectiveness of FM.

\clearpage
\subsection*{AI use statement}


In this work, we used generative AI tools for polishing presentation of the paper (refining sentences and paragraphs).
We have not used generative AI tools for writing the whole paper.
In addition, we used generative AI tools for assistance in coding and debuging. 
We have reviewed all AI-assisted work. For example, the LLM-generated code was verified and tested for correctness by all co-authors. We take responsibility for the final content of this work, including text, claims or artifacts produced with the aid of generative AI.





\subsection*{Reproducibility statement}



All results are reproducible using \texttt{python} codes that will be published after publication. We include a \texttt{jupyter} notebook of a demonstrative example in the submitted supplementary materials. 



\def\url#1{}
\def\urlprefix{}
\def\doi#1{}


\medskip

{
\small

 \bibliographystyle{plain}
\bibliography{ref}

\begin{thebibliography}{10}

\bibitem{Albergo_2019}
M.~S. Albergo.
\newblock Flow-based generative models for markov chain monte carlo in lattice
  field theory.
\newblock {\em Physical Review D}, 100(3), 2019.

\bibitem{Albergo_2023}
Michael~Samuel Albergo and Eric Vanden-Eijnden.
\newblock Building normalizing flows with stochastic interpolants.
\newblock In {\em The Eleventh International Conference on Learning
  Representations}, 2023.

\bibitem{amari00}
S.~Amari and H.~Nagaoka.
\newblock {\em Methods of Information Geometry}, volume 191 of {\em
  Translations of Mathematical monographs}.
\newblock Oxford University Press, 2000.

\bibitem{Atanackovic_2025}
Lazar Atanackovic, Xi~Zhang, Brandon Amos, Mathieu Blanchette, Leo~J Lee,
  Yoshua Bengio, Alexander Tong, and Kirill Neklyudov.
\newblock Meta flow matching: Integrating vector fields on the wasserstein
  manifold.
\newblock In {\em The Thirteenth International Conference on Learning
  Representations}, 2025.

\bibitem{Benamou_2000}
Jean-David Benamou and Yann Brenier.
\newblock A computational fluid mechanics solution to the monge-kantorovich
  mass transfer problem.
\newblock {\em Numerische Mathematik}, 84(3):375--393, Jan 2000.

\bibitem{Chen_2024}
Ricky T.~Q. Chen and Yaron Lipman.
\newblock Flow matching on general geometries.
\newblock In {\em The Twelfth International Conference on Learning
  Representations}, 2024.

\bibitem{Chen_2018}
Ricky T.~Q. Chen, Yulia Rubanova, Jesse Bettencourt, and David~K Duvenaud.
\newblock Neural ordinary differential equations.
\newblock In S.~Bengio, H.~Wallach, H.~Larochelle, K.~Grauman, N.~Cesa-Bianchi,
  and R.~Garnett, editors, {\em Advances in Neural Information Processing
  Systems}, volume~31. Curran Associates, Inc., 2018.

\bibitem{Cheng_2024}
Chaoran Cheng, Jiahan Li, Jian Peng, and Ge~Liu.
\newblock Categorical flow matching on statistical manifolds.
\newblock In {\em Advances in Neural Information Processing Systems 37},
  NeurIPS 2024, pages 54787--54819. Neural Information Processing Systems
  Foundation, Inc. (NeurIPS), 2024.

\bibitem{Davis_2024}
Oscar Davis, Samuel Kessler, Mircea Petrache, Ismail~Ilkan Ceylan, Michael~M.
  Bronstein, and Joey Bose.
\newblock Fisher flow matching for generative modeling over discrete data.
\newblock In {\em The Thirty-eighth Annual Conference on Neural Information
  Processing Systems}, 2024.

\bibitem{fang1990generalized}
K.~Fang and Y.T. Zhang.
\newblock {\em Generalized Multivariate Analysis}.
\newblock Science Press, 1990.

\bibitem{Frans_2025}
Kevin Frans, Danijar Hafner, Sergey Levine, and Pieter Abbeel.
\newblock One step diffusion via shortcut models.
\newblock In {\em The Thirteenth International Conference on Learning
  Representations}, 2025.

\bibitem{Geng_2026}
Zhengyang Geng, Mingyang Deng, Xingjian Bai, J~Zico Kolter, and Kaiming He.
\newblock Mean flows for one-step generative modeling.
\newblock In {\em The Thirty-ninth Annual Conference on Neural Information
  Processing Systems}, 2026.

\bibitem{Gomez_1998}
E.~G{\'{o}}mez, M.A. Gomez-Viilegas, and J.M. Mar{\'{\i}}n.
\newblock A multivariate generalization of the power exponential family of
  distributions.
\newblock {\em Communications in Statistics - Theory and Methods},
  27(3):589--600, jan 1998.

\bibitem{grathwohl_2019}
Will Grathwohl, Ricky T.~Q. Chen, Jesse Bettencourt, and David Duvenaud.
\newblock Scalable reversible generative models with free-form continuous
  dynamics.
\newblock In {\em International Conference on Learning Representations}, 2019.

\bibitem{Ho_2020}
Jonathan Ho, Ajay Jain, and Pieter Abbeel.
\newblock Denoising diffusion probabilistic models.
\newblock In H.~Larochelle, M.~Ranzato, R.~Hadsell, M.F. Balcan, and H.~Lin,
  editors, {\em Advances in Neural Information Processing Systems}, volume~33,
  pages 6840--6851. Curran Associates, Inc., 2020.

\bibitem{Johnson_1987}
Mark~E. Johnson.
\newblock {\em Multivariate Statistical Simulation}, chapter 6 Elliptically
  Contoured Distributions, pages 106--124.
\newblock Probability and Statistics. John Wiley \& Sons, Ltd, 1987.

\bibitem{Kapusniak_2024}
Kacper Kapusniak, Peter Potaptchik, Teodora Reu, Leo Zhang, Alexander Tong,
  Michael Bronstein, Avishek Bose, and Francesco~Di Giovanni.
\newblock Metric flow matching for smooth interpolations on the data manifold.
\newblock In {\em Advances in Neural Information Processing Systems 37},
  NeurIPS 2024, pages 135011--135042. Neural Information Processing Systems
  Foundation, Inc. (NeurIPS), 2024.

\bibitem{Lecun_1998}
Y.~Lecun, L.~Bottou, Y.~Bengio, and P.~Haffner.
\newblock Gradient-based learning applied to document recognition.
\newblock {\em Proceedings of the IEEE}, 86(11):2278--2324, 1998.

\bibitem{Li_2023}
Shuyi Li, Michael O\textquotesingle~Connor, and Shiwei Lan.
\newblock Bayesian learning via q-exponential process.
\newblock In A.~Oh, T.~Naumann, A.~Globerson, K.~Saenko, M.~Hardt, and
  S.~Levine, editors, {\em Proceedings of the 37th Conference on Neural
  Information Processing Systems}, volume~36, pages 72867--72887. Curran
  Associates, Inc., 2023.

\bibitem{Lipman_2023}
Yaron Lipman, Ricky T.~Q. Chen, Heli Ben-Hamu, Maximilian Nickel, and Matthew
  Le.
\newblock Flow matching for generative modeling.
\newblock In {\em The Eleventh International Conference on Learning
  Representations}, 2023.

\bibitem{Lipman_2024}
Yaron Lipman, Marton Havasi, Peter Holderrieth, Neta Shaul, Matt Le, Brian
  Karrer, Ricky T.~Q. Chen, David Lopez-Paz, Heli Ben-Hamu, and Itai Gat.
\newblock Flow matching guide and code.
\newblock {\em arXiv:2412.06264}, 12 2024.

\bibitem{Maoutsa_2020}
Dimitra Maoutsa, Sebastian Reich, and Manfred Opper.
\newblock Interacting particle solutions of fokker--planck equations through
  gradient--log--density estimation.
\newblock {\em Entropy}, 22(8):802, July 2020.

\bibitem{MCCANN_1997}
Robert~J. McCann.
\newblock A convexity principle for interacting gases.
\newblock {\em Advances in Mathematics}, 128(1):153--179, 1997.

\bibitem{Neal_2003}
Radford~M. Neal.
\newblock Slice sampling.
\newblock {\em The Annals of Statistics}, 31(3), jun 2003.

\bibitem{Noe_2019}
Frank No{\'e}, Simon Olsson, Jonas K{\"o}hler, and Hao Wu.
\newblock Boltzmann generators: Sampling equilibrium states of many-body
  systems with deep learning.
\newblock {\em Science}, 365(6457):eaaw1147, 2019.

\bibitem{Pandey_2025}
Kushagra Pandey, Jaideep Pathak, Yilun Xu, Stephan Mandt, Michael Pritchard,
  Arash Vahdat, and Morteza Mardani.
\newblock Heavy-tailed diffusion models.
\newblock In {\em The Thirteenth International Conference on Learning
  Representations}, 2025.

\bibitem{Ramakrishnan_2014}
Raghunathan Ramakrishnan, Pavlo~O. Dral, Matthias Rupp, and O.~Anatole von
  Lilienfeld.
\newblock Quantum chemistry structures and properties of 134 kilo molecules.
\newblock {\em Scientific Data}, 1(1), August 2014.

\bibitem{Ruddigkeit_2012}
Lars Ruddigkeit, Ruud van Deursen, Lorenz~C. Blum, and Jean-Louis Reymond.
\newblock Enumeration of 166 billion organic small molecules in the chemical
  universe database gdb-17.
\newblock {\em Journal of Chemical Information and Modeling},
  52(11):2864--2875, November 2012.

\bibitem{Song_2021}
Yang Song, Jascha Sohl-Dickstein, Diederik~P Kingma, Abhishek Kumar, Stefano
  Ermon, and Ben Poole.
\newblock Score-based generative modeling through stochastic differential
  equations.
\newblock In {\em International Conference on Learning Representations}, 2021.

\bibitem{Tong_2024}
Alexander Tong, Kilian FATRAS, Nikolay Malkin, Guillaume Huguet, Yanlei Zhang,
  Jarrid Rector-Brooks, Guy Wolf, and Yoshua Bengio.
\newblock Improving and generalizing flow-based generative models with
  minibatch optimal transport.
\newblock {\em Transactions on Machine Learning Research}, 2024.
\newblock Expert Certification.

\bibitem{Villani_2009}
C{\'e}dric Villani.
\newblock {\em Optimal Transport}.
\newblock Springer Berlin Heidelberg, 2009.

\bibitem{Wolf_2018}
F.~Alexander Wolf, Philipp Angerer, and Fabian~J. Theis.
\newblock Scanpy: large-scale single-cell gene expression data analysis.
\newblock {\em Genome Biology}, 19(1):15, Feb 2018.

\end{thebibliography}

}



\clearpage
\appendix

\renewcommand{\theHequation}{appendix.\thesection.\arabic{equation}}

\renewcommand\theHtable{Appendix.\thetable}
\renewcommand{\theHfigure}{Appendix.\thefigure}
\counterwithin{table}{section}
\counterwithin{figure}{section}


\begin{table}[htbp]
\centering
\caption{Comparison of flow-matching methods.}
\label{tab:fm-comparison}
\resizebox{\columnwidth}{!}{
\begin{tabular}{l l l l l}
\toprule
\textbf{Method} &
\textbf{Manifold of} &
\textbf{Metric} &
\textbf{Geodesic} &
\textbf{Data type} \\
\midrule
Riemannian FM \citep{Chen_2024} &
general (mainly data) &
standard Riemannian &
general &
non-Euclidean \\

Metric FM \citep{Kapusniak_2024} &
Data &
learned &
general &
general \\

Meta FM \citep{Atanackovic_2025} &
probability densities &
$2$-Wasserstein &
general &
gmbedded in graph \\

Categorical FM \citep{Cheng_2024} &
probability measures &
Fisher-Rao &
circle &
discrete \\

Fisher FM \citep{Davis_2024} &
probability densities &
Fisher-Rao &
circle &
discrete \\

\emph{PG FM (ours)} &
probability densities &
Fisher-Rao &
Poincar\'e semicircle &
continuous, Euclidean \\
\bottomrule
\end{tabular}}
\end{table}

\section{Proofs}

\vf*
\begin{proof}[Proof of Theorem \ref{thm:vf}]
\label{apx:vf}
Because $\bZ\sim p_0$, we have
\begin{equation}\label{eq:prob_locscal}
\bX_t = \bmu_t + \sigma_t \bZ \sim p_t, \quad
p_t(\bx) = p_0((\bx-\bmu_t)/\sigma_t)\sigma_t^{-d} .
\end{equation}
Along this trajectory, we have
\begin{equation*}
\dot\bX_t = \dot\bmu_t + \dot\sigma_t \bZ = \dot\bmu_t + \frac{\dot\sigma_t}{\sigma_t} (\bX_t-\bmu_t) .
\end{equation*}

Let $\bv_t(\bx) = \dot\bmu_t + \frac{\dot\sigma_t}{\sigma_t} (\bx-\bmu_t)$.
The flow \eqref{eq:flow} with $\bv_t$ then follows a probability path $p_t$ \eqref{eq:prob_locscal} given by the pushforward \eqref{eq:pushfwd}.
This is equivalent to the continuity equation \eqref{eq:continuity}, which can also be verified.

On the one hand,
\begin{equation*}
\begin{aligned}
\pa_t p_t(\bx) &= -d\sigma_t^{-d-1}\dot\sigma_t p_0(\bz) + \sigma_t^{-d} \nabla p_0(\bz) \cdot (-\dot\bmu_t\sigma_t^{-1}-(\bx-\bmu_t)\sigma_t^{-2}\dot\sigma_t) \\
&= -d \frac{\dot\sigma_t}{\sigma_t} p_t(\bx) - \sigma_t^{-d-1}\nabla p_0(\bz) \cdot (\dot\bmu_t + \dot\sigma_t \bz), 
\end{aligned}
\end{equation*}
On the other hand,
\begin{equation*}
\begin{aligned}
-\nabla\cdot(\bv_tp_t) &= -(\nabla\cdot \bv_t)p_t(\bx) - \bv_t\cdot \nabla p_t(\bx) \\
&= -d \frac{\dot\sigma_t}{\sigma_t} p_t(\bx) - (\dot\bmu_t + \dot\sigma_t \bz)\cdot (\sigma_t^{-d-1}\nabla p_0(\bz))
\end{aligned}
\end{equation*}
Therefore, the two sides of \eqref{eq:continuity} are equal. Hence, the proof is completed.
\end{proof}

\path*
\begin{proof}[Proof of Theorem \ref{thm:path}]
\label{apx:path}
Let $\bu_t = \phi_t(\bx) - \bmu_t$. 
With the vector field $\bv_t$ \eqref{eq:VF_locationscale},
we substitute it into \eqref{eq:flow} to get
\begin{equation*}
\dot\bu_t = \dot\phi_t - \dot\bmu_t = \bv_t(\phi_t) - \dot\bmu_t = \dot\bmu_t + \frac{\dot\sigma_t}{\sigma_t}(\phi_t-\bmu_t) - \dot\bmu_t = \frac{\dot\sigma_t}{\sigma_t} \bu_t,
\end{equation*}
which already decouples across coordinates.
Integrating it as a vector ODE yields 
\begin{equation*}
\bu_t = \frac{\sigma_t}{\sigma_0} \bu_0 .
\end{equation*}
Since $\phi_0(\bx)=\bx$, $\bmu_0=\bzero$ and $\sigma_0=1$, we get $\bu_0=\bx$. Therefore $\bu_t=\sigma_t\bx$ and $\phi_t(\bx)=\bmu_t + \sigma_t \bx$.
Hence the proof is completed.
\end{proof}

\minengy*
\begin{proof}[Proof of Proposition \ref{prop:minengy}]
\label{apx:minengy}
We compute the energy \eqref{eq:energy} of the flow path
\begin{equation*}
\begin{aligned}
e(\bv) &= \sum_{i=1}^d \int_{\mbR^d} \int_0^1 p_t(\bx) v_t^2(x_i) dt d\bx \\
&= \sum_{i=1}^d \int_{\mbR} \int_0^1 p_t(x_i) [\dot\mu_{t,i}+(x_i-\mu_{t,i})\dot\sigma_t/\sigma_t]^2 dtdx_i \\
&= \int_0^1 \sum_{i=1}^d [(\dot\mu_{t,i})^2 + (\dot\sigma_t)^2] dt \\
&= \int_0^1 L(\bmu_t, \sigma_t, \dot\bmu_t, \dot\sigma_t) dt, \quad L(\bmu_t, \sigma_t, \dot\bmu_t, \dot\sigma_t) = \Vert\dot\bmu_t\Vert^2 + d (\dot\sigma_t)^2
\end{aligned}
\end{equation*}
By the calculus of variation, the Euler–Lagrange equations
\begin{equation*}
    \frac{d}{dt}\frac{\pa L}{\pa\dot\bmu_t} - \frac{\pa L}{\pa\bmu_t} = 0, \quad
\frac{d}{dt}\frac{\pa L}{\pa\dot\sigma_t} - \frac{\pa L}{\pa\sigma_t} = 0    
\end{equation*}
Solving these equations yields the desired geodesic path affine in $t$.

Substituting \eqref{eq:OT_probpath} into the above energy, we get the minimal value
\begin{equation*}
    \min e(\bv) = \Vert \bmu_1-\bmu_0\Vert^2 + d|\sigma_1-\sigma_0|^2 .
\end{equation*}
\end{proof}

\begin{lem}
\label{lem:frmetric}
The exponential power distribution $\mP_q(\bmu, \sigma^2\bI)$ has density
\begin{equation*}
\begin{aligned}
p(\bx) &= \frac{q\Gamma(\frac{d}{2})}{2\Gamma(\frac{d}{q})}
2^{-\frac{d}{q}}\pi^{-\frac{d}{2}} \sigma^{-d}
\exp\left\{-\frac{r^{q/2}}{2}\right\}, \\
r(\bx) &= \frac{\Vert\bx-\bmu\Vert^2}{\sigma^2}.
\end{aligned}
\end{equation*}
Its Fisher information matrix and Fisher-Rao metric are
\begin{equation}\label{eq:fisher_metric}
    g(\bmu, \sigma) = \frac{1}{\sigma^2}
    \begin{bmatrix}
    c_{\mu}\bI & \tp\bzero\\
    \bzero & c_{\sigma}
    \end{bmatrix}, \quad
 and\quad  ds^2 = \frac{c_{\mu}\Vert d\bmu\Vert^2+c_{\sigma} d\sigma^2}{\sigma^2} ,
\end{equation}
where $c_{\mu}(d,q) = \frac{2^{-\frac{2}{q}}}{d} \frac{\Gamma(\frac{d-2}{q} +2)}{\Gamma(\frac{d}{q})} q^2$, and $c_{\sigma}=qd$.
\end{lem}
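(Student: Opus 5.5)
The plan is to compute the Fisher information directly as $g=\mbE[\nabla\log p\,\tp{\nabla\log p}]$ over the coordinates $(\bmu,\sigma)$. I will reduce every expectation to moments of the radial variable using the stochastic representation in Proposition \ref{prop:epd_prop}.

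\textbf{Score functions.} Set $\bz=(\bx-\bmu)/\sigma$ and $\rho=\Vert\bz\Vert$, so that $r^{q/2}=\rho^q$. Up to an additive constant, $\log p=-d\log\sigma-\tfrac12\rho^q$. Differentiating gives
\begin{equation*}
\nabla_{\bmu}\log p=\frac{q}{2\sigma}\rho^{q-2}\bz,\qquad \pa_\sigma\log p=\frac{1}{\sigma}\Big(-d+\frac q2\rho^q\Big).
\end{equation*}
By Proposition \ref{prop:epd_prop}, under $p$ we have $\bz=\rho U_S$ with $\rho\perp U_S$, $U_S\sim\unif(\mS^{d-1})$, and $W:=\rho^q\sim\Gamma(d/q,\text{rate }\tfrac12)$.

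\textbf{Block computation.} I treat the three blocks separately.
\begin{enumerate}[noitemsep]
\item The cross block $\mbE[\nabla_{\bmu}\log p\,\pa_\sigma\log p]$ is proportional to $\mbE[U_S]\cdot\mbE[\rho^{q-1}(-d+\tfrac q2\rho^q)]$. It vanishes because $\mbE U_S=\bzero$ and $\rho\perp U_S$.
\item For the location block, use $\mbE[U_S\tp{U_S}]=\bI/d$ to get
\begin{equation*}
\frac{q^2}{4\sigma^2}\,\mbE[\rho^{2q-2}]\,\frac{\bI}{d}.
\end{equation*}
Here $\mbE[\rho^{2q-2}]=\mbE[W^{2-2/q}]=2^{2-2/q}\,\Gamma(\tfrac dq+2-\tfrac2q)/\Gamma(\tfrac dq)$ by the Gamma moment formula. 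Substituting yields exactly $c_\mu=\frac{2^{-2/q}}{d}\frac{\Gamma(\frac{d-2}{q}+2)}{\Gamma(\frac dq)}q^2$.
\item For the scale block, $\tfrac q2 W$ has mean $\tfrac q2\cdot 2d/q=d$ and variance $\tfrac{q^2}{4}\cdot 4d/q=qd$. Hence $\mbE[(-d+\tfrac q2 W)^2]=\V(\tfrac q2W)=qd$, so $c_\sigma=qd$.
\end{enumerate}
Assembling the blocks gives \eqref{eq:fisher_metric}, and the line element $ds^2$ follows immediately.

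\textbf{Expected obstacle.} The algebra is routine, so the only delicate point is regularity. I need to justify the identity $g=\mbE[\text{score}\,\tp{\text{score}}]$ and the finiteness of $\mbE[\rho^{2q-2}]$. The moment is finite iff $\tfrac dq+2-\tfrac2q>0$, i.e.\ $d+2q>2$, which holds for all $q>0$ and $d\ge1$. The log-density is differentiable except at $\bx=\bmu$ when $q<1$, but that is a null set. I would also check that the scores are square integrable, which the same moment bounds show, so the standard Fisher-information identity applies.
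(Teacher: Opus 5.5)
Your computation is correct and is essentially the paper's proof. Both use the same score functions, the same representation $\bz=\rho U_S$ with $\rho^q\sim\Gamma(\frac dq,\frac12)$ independent of $U_S$, and the same Gamma-moment formula. The only difference is cosmetic. The paper gets the cross and scale blocks from expected Hessians, namely $-\mbE[\pa^2 l/\pa\bmu\pa\sigma]=\bzero$ and $-\mbE[\pa^2 l/\pa\sigma^2]=qd/\sigma^2$. You use score outer products throughout and read off $c_\sigma$ as $\V(\frac q2 W)$, which is slightly cleaner.

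One correction to your regularity paragraph: $d+2q>2$ does \emph{not} hold for all $q>0$ and $d\ge 1$. It fails for $d=1$, $q\le\frac12$. In that case $\mbE[\rho^{2q-2}]=\infty$ because the squared location score behaves like $|z|^{2q-2}$ near the origin, so the location Fisher information is infinite. The stated $c_\mu$ then involves $\Gamma(2-\frac1q)$ at a nonpositive argument, where it is either a pole ($q=\frac12,\frac13,\dots$) or meaningless; for instance it is negative for $q\in(\frac13,\frac12)$. The lemma, and your proof, should therefore carry the hypothesis $d\ge 2$ or $q>\frac12$. The paper omits this hypothesis too. With that restriction your finiteness argument goes through.
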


\begin{proof}
\label{apx:frmetric}
By property 2 of Proposition \ref{prop:epd_prop},
$S=r^{\frac{q}{2}}=(\Vert\bx-\bmu\Vert/\sigma)^q\sim \Gamma(\alpha=\frac{d}{q}, \beta=\half)$ \citep{Gomez_1998, Li_2023}.
The log-density and their derivatives are
\begin{equation*}
\begin{aligned}
    l(\bmu, \sigma) =& 
    -d\log \sigma - \half r^{\frac{q}{2}} + \textrm{const.}\\
    \frac{\pa l}{\pa\bmu}(\bmu, \sigma) =& 
    \frac{q}{2\sigma} S^{1-\frac{1}{q}} \bu, \quad \bu=\frac{\bx-\bmu}{\Vert \bx-\bmu\Vert} \sim \mathrm{unif}(\mS^{d-1}) \\
    \frac{\pa l}{\pa\sigma}(\bmu, \sigma) =& 
    \frac{1}{\sigma}\left[-d + \frac{q}{2}S\right]\\
    \frac{\pa^2 l}{\pa\bmu\pa\sigma} =& -\frac{q^2}{2\sigma^2}S^{1-\frac{1}{q}} \bu\\
    \frac{\pa^2 l}{\pa\sigma^2}(\bmu, \sigma) =&
    \frac{1}{\sigma^2}\left[d - \frac{q}{2}(1+q) S\right]
\end{aligned}
\end{equation*}
By the moments of gamma distribution $\mbE[S^\alpha] = 2^\alpha\frac{\Gamma(\frac{d}{q}+\alpha)}{\Gamma(\frac{d}{q})}$, we have the following Fisher information matrix $\mI$:
\begin{equation*}
\begin{aligned}
\mI_{\bmu\bmu} &= \mbE\left[\frac{\pa l}{\pa\bmu} \tp{\frac{\pa l}{\pa\bmu}}\right] = \frac{c_{\mu}}{\sigma^2}\bI, \quad c_{\mu}(d,q) = \frac{2^{-\frac{2}{q}}}{d} \frac{\Gamma(\frac{d-2}{q} +2)}{\Gamma(\frac{d}{q})} q^2 \\
\mI_{\bmu\sigma} &= -\mbE\left[\frac{\pa^2 l}{\pa\bmu\pa\sigma}\right] = \bzero, \quad
\mI_{\sigma\sigma} =  \mbE\left[\left(\frac{\pa l}{\pa\sigma}\right)^2\right] =  -\mbE\left[\frac{\pa^2 l}{\pa\sigma^2}\right] = \frac{c_{\sigma}}{\sigma^2}, \;c_{\sigma}=qd
\end{aligned}
\end{equation*}
Therefore, the Fisher metric as written below is a hyperbolic metric
\begin{equation*}
    g(\bmu, \sigma) = \frac{1}{\sigma^2}
    \begin{bmatrix}
    c_{\mu}\bI & \tp\bzero\\
    \bzero & c_{\sigma}
    \end{bmatrix}, \quad
 or\quad  ds^2 = \frac{c_{\mu}\Vert d\bmu\Vert^2+c_{\sigma} d\sigma^2}{\sigma^2} .
\end{equation*}
\end{proof}

\geod*
\begin{rk}
When $q=2$, $c_{\mu}\equiv1$ and $c_{\sigma}=2d$. The above Fisher-Rao formula yields the well-known geodesic distance between two Gaussians $\mN(\bmu_0, \sigma_0^2\bI)$ and $\mN(\bmu_1, \sigma_1^2\bI)$:
\begin{equation*}
d_{FR} = \sqrt{2d}\arcosh\left(1+ \frac{\Vert \bmu_1-\bmu_0\Vert^2+2d|\sigma_1-\sigma_0|^2}{4d\sigma_0\sigma_1}\right) .
\end{equation*}
\end{rk}

\begin{proof}[Proof of Theorem \ref{thm:geod}]
\label{apx:geod}
The geodesic equation under the hyperbolic metric $ds^2 = \frac{c_{\mu}\Vert d\bmu\Vert^2+c_{\sigma} d\sigma^2}{\sigma^2}$ \eqref{eq:fisher_metric} can be solved in either semi-circle parametrization or hyperbolic parametrization, which are detailed as follows.

\paragraph{Poincar\'e semi-circle parametrization}

The metric \eqref{eq:fisher_metric} can be rescaled to a standard Poincar\'e metric which leads to a semi-circle in the Poincar\'e upper half-plane.
Denote $\lambda = \sqrt{c_\sigma/c_\mu}$ and $\be= (\bmu_1-\bmu_0)/\Vert \bmu_1-\bmu_0\Vert_2$.
Use the scalar projection $x(t) = \langle \bmu(t)-\bmu_0, \be\rangle$ and rescale $y(t) = \lambda \sigma(t)$. 
The metric \eqref{eq:fisher_metric} reduces to
\begin{equation*}
    ds^2 = \frac{c_{\sigma}}{y^2} (dx^2 + dy^2).
\end{equation*}
By calculus of variation ($L(x, y, \dot{x}, \dot{y}) = \frac{c_{\sigma}(\dot{x}^2+\dot{y}^2)}{y^2}$), we can get
\begin{equation*}
    (x-\mu_c)^2 + y^2 = R^2, \quad \mu_c = \frac{\Vert \bmu_1-\bmu_0\Vert^2+\lambda^2(\sigma_1^2-\sigma_0^2)}{2\Vert \bmu_1-\bmu_0\Vert}, \quad R = \sqrt{\mu_c^2+\lambda^2\sigma_0^2} .
\end{equation*}
where $\mu_c$ and $R$ are determined by the boundary conditions $(x_0, y_0) = (0, \lambda\sigma_0)$ and $(x_1, y_1)=(\Vert \bmu_1-\bmu_0\Vert, \lambda\sigma_1)$.
Then the semi-circle can be parametrized by angle $\theta(t)\in(0,\pi)$: $x(t) = \mu_c+ R\cos\theta(t), y(t) = R\sin\theta(t)$ with two end points $\theta(0)=\theta_0$ and $\theta(1)=\theta_1$ corresponding to $(x_0, y_0) = (0, \lambda\sigma_0)$ and $(x_1, y_1)=(\Vert \bmu_1-\bmu_0\Vert, \lambda\sigma_1)$ respectively:
\begin{equation*}
    \theta_0 = \atan2(\lambda\sigma_0, -\mu_c), \quad \theta_1 = \atan2(\lambda\sigma_1, \Vert \bmu_1-\bmu_0\Vert-\mu_c), 
\end{equation*}
where $\atan2(y, x) = \arg(x+ i y) \in(-\pi, \pi)$.

Note the arc-length along the Poincar\'e semi-circle has constant speed $L$: $ds = \frac{\sqrt{dx^2+dy^2}}{y} = \frac{R|d\theta|}{R\sin\theta} = L dt$.
Therefore, we have
\begin{equation*}
    \theta(t) = 2\arctan\left(e^{Lt}\tan\frac{\theta_0}{2}\right), \quad L = \log\frac{\tan(\theta_1/2)}{\tan(\theta_0/2)} .
\end{equation*}

Therefore, the final geodesic equation in the semi-circle parameterization of Poincar\'e becomes
\begin{align*}
\bmu(t) &= \bmu_0 + x(t) \be, \quad x(t) = \mu_c + R\cos\theta(t)   ;\\
\sigma(t) &= \frac{R}{\lambda} \sin\theta(t). 
\end{align*}

Substituting the geodesic solution back to the energy equation, we have
\begin{equation*}
e(\bv) = \int_{\gamma} ds^2 = c_{\sigma}\int_0^1 \frac{\dot{x}^2 + \dot{y}^2}{y^2} dt = c_{\sigma} \int_0^1 \frac{\dot{\theta}^2(t)}{\sin^2\theta(t)}dt = c_{\sigma} L^2 ,
\end{equation*}
where $|L|$ is the Fisher-Rao distance between two distributions, which can be rewritten in the original parameters as
\begin{equation*}
|L| 
= \arcosh\left(1+ \frac{c_{\mu}\Vert \bmu_1-\bmu_0\Vert^2+c_{\sigma}|\sigma_1-\sigma_0|^2}{2c_{\sigma}\sigma_0\sigma_1}\right) .
\end{equation*}

\paragraph{Hyperbolic parametrization}

With Einstein's notation, $g_{\mu_i\mu_j} = \frac{c_{\mu}}{\sigma^2}\delta_{ij}$, $g_{\sigma\sigma} = \frac{c_{\sigma}}{\sigma^2}$, we compute the Christoffel symbols $\Gamma_{\mu\nu}^{\lambda} = \half g^{\lambda\rho}[\pa_{\mu}g_{\nu\rho}+\pa_{\nu}g_{\mu\rho}-\pa_{\rho}g_{\mu\nu}]$ with inverse metric $g^{\mu_i\mu_j} = \frac{\sigma^2}{c_{\mu}}\delta_{ij}$, $g^{\sigma\sigma} = \frac{\sigma^2}{c_{\sigma}}$.
The only nonzero partial derivatives come from $\pa_{\sigma} g= -\frac{2}{\sigma}g$, giving
\begin{equation*}
    \Gamma_{\mu_j\sigma}^{\mu_i} = -\frac{1}{\sigma}\delta_{ij}, \quad \Gamma_{\mu_i\mu_j}^{\sigma} = \frac{c_{\mu}}{c_{\sigma}\sigma}\delta_{ij}, \quad \Gamma_{\sigma\sigma}^{\sigma} = -\frac{1}{\sigma} .
\end{equation*}

Therefore, the geodesic equations $\ddot{x}^{\lambda}+\Gamma_{\mu\nu}^{\lambda} \dot{x}^{\mu} \dot{x}^{\nu} = 0$ become
\begin{subequations}\label{eq:geod_qep}
\begin{align}
    \ddot{\mu}_i - \frac{2}{\sigma} \dot{\mu}_i \dot{\sigma}
 &=0 , \label{eq:geod_qep-1}\\
 \ddot{\sigma} + \frac{c_{\mu}}{c_{\sigma}\sigma}\Vert \dot{\bmu}\Vert^2 - \frac{1}{\sigma} \dot{\sigma}^2 &=0 . \label{eq:geod_qep-2}
 \end{align}
\end{subequations}
Solving \eqref{eq:geod_qep-1} yields $\dot{\bmu} = \bc \sigma^2$. Substituting it into \eqref{eq:geod_qep-2} gives $\ddot{\sigma} + \frac{c_{\mu}}{c_{\sigma}}\Vert \bc\Vert^2 \sigma^3 - \frac{1}{\sigma} \dot{\sigma}^2 =0$ .
By changing the variable $\sigma=e^\tau$, \eqref{eq:geod_qep-2} becomes a standard autonomous second-order ODE
\begin{equation*}
    \ddot\tau + \frac{c_{\mu}}{c_{\sigma}}\Vert \bc\Vert^2 e^{2\tau} = 0 ,
\end{equation*}
which implies the following conserved energy:
\begin{equation}\label{eq:geod_energy}
    \dot{\tau}^2 + \frac{c_{\mu}}{c_{\sigma}}\Vert \bc\Vert^2 e^{2\tau} = \frac{E}{c_{\sigma}}, \quad
    or\quad \frac{c_{\mu}\Vert \bc\Vert^2\sigma^4+c_{\sigma}\dot\sigma^2}{\sigma^2} = E .
\end{equation}

Denote $\omega = \sqrt{E/c_{\sigma}}$, $\kappa = \Vert \bc\Vert\sqrt{c_{\mu}/c_{\sigma}}$.
This energy equation \eqref{eq:geod_energy} can be rewritten as a separable ODE $\dot\sigma^2 = \omega^2\sigma^2 - \kappa^2\sigma^4$ which has the following solution
\begin{equation*}
    \sigma(t) = \frac{\omega}{\kappa}\sech(\omega t +\alpha) ,
\end{equation*}
where $\omega$, $\kappa$ and $\alpha$ can be determined by the boundary conditions.
Using \eqref{eq:geod_qep-1}, or equivalent $\dot{\bmu} = \bc \sigma^2$, one can solve
\begin{equation*}
    \bmu(t) = \bmu_0 + \bc \frac{\omega}{\kappa^2} (\tanh(\omega t+\alpha) - \tanh(\alpha) ) ,
\end{equation*}
where $\bc = (\bmu_1-\bmu_0)\frac{\kappa^2}{\omega(\tanh(\omega +\alpha) - \tanh(\alpha))}$.
We also have the conserved energy $E = c_{\sigma} \omega^2$ along the final geodesic: 
\begin{subequations}\label{eq:geodsoln_qep}
\begin{align}
\bmu(t) &= \bmu_0 + (\bmu_1-\bmu_0) \frac{\tanh(\omega t+\alpha) - \tanh(\alpha)}{\tanh(\omega +\alpha) - \tanh(\alpha)} ;\label{eq:geodsoln_qep-1}\\
\sigma(t) &= \frac{\omega}{\kappa}\sech(\omega t +\alpha) ,\label{eq:geodsoln_qep-2}
 \end{align}
\end{subequations}
where $\omega, \kappa, \alpha$ can be numerically solved from the following algebraic equations:
\begin{equation*}
\frac{\omega}{\kappa}\sech(\alpha) = \sigma_0, \quad \frac{\omega}{\kappa}\sech(\omega +\alpha) = \sigma_1, \quad
\kappa = \Vert\bmu_1-\bmu_0\Vert\frac{\kappa^2\sqrt{c_{\mu}/c_{\sigma}}}{\omega(\tanh(\omega +\alpha) - \tanh(\alpha))} .
\end{equation*}

\end{proof}

\begin{rk}
    The Poincar\'e semicircle and hyperbolic parametrizations are equivalent through the following identities:
    \begin{equation*}
        \sin(2\arctan e^{-s}) = \sech(s), \quad \cos(2\arctan e^{-s}) = \tanh(s), \quad s=\omega t + \alpha, \quad R = \frac{\omega}{\kappa} \lambda .
    \end{equation*}
    However, the former has fully explicit constants, and yet the latter contains implicit constants that need to be resolved numerically.

In particular, we get the vector field that generates the probability path by \eqref{eq:VF_locationscale}:
\begin{equation*}
\begin{aligned}
    \bv_t(\bx) &= \dot\bmu_t + \frac{\dot\sigma_t}{\sigma_t}(\bx-\bmu_t)
    =\bc\sigma_t^2 - \omega\tanh(\omega t +\alpha) [\bx -\bmu_0 - \bc \frac{\omega}{\kappa^2} (\tanh(\omega t+\alpha) - \tanh(\alpha) )]\\
    &=\omega \left[\frac{1-\tanh(\omega t +\alpha) \tanh(\alpha)}{\tanh(\omega +\alpha) - \tanh(\alpha)} (\bmu_1-\bmu_0) - \tanh(\omega t +\alpha) (\bx-\bmu_0) \right] .
\end{aligned}
\end{equation*}

Letting $\bmu_0=\bzero, \bmu_1=\bx_1$, $\sigma_0=1, \sigma_1 = \sigma_{\min}$, we have the conditional vector field
\begin{equation*}
    \bu_t(\bx|\bx_1) = \omega \left[\frac{1-\tanh(\omega t +\alpha) \tanh(\alpha)}{\tanh(\omega +\alpha) - \tanh(\alpha)} \bx_1 - \tanh(\omega t +\alpha) \bx \right] ,
\end{equation*}
and the conditional flow 
\begin{equation*}
    \psi_t(\bx) = \bx_1 \frac{\tanh(\omega t+\alpha) - \tanh(\alpha)}{\tanh(\omega +\alpha) - \tanh(\alpha)} + \frac{\omega}{\kappa}\sech(\omega t +\alpha) \bx .
\end{equation*}

\end{rk}

Let $p_t(\cdot|\bx_1)\sim \mP_q(\bmu_t(\bx_1), \sigma_t^2(\bx_1)\bI)$  
with $(\bmu_t, \sigma_t)$ as in Theorem \ref{thm:geod}. 
We have the marginal probability path $p_t(\bx) = \int p_t(\bx|\bx_1)q(\bx_1)d\bx_1$,
the \emph{marginal} vector field and the FM loss:
\begin{equation*}
\bu_t(\bx) = \mbE[\bu_t(\bx|\bx_1)|\bx_t=\bx]=\frac{\int \bu_t(\bx|\bx_1)p_t(\bx|\bx_1)q(\bx_1)d\bx_1}{p_t(\bx)}, \;
\mL_\textrm{FM}(\theta) = \mbE_{t, p_t(\bx)} \Vert \bv_t(\bx;\theta) - \bu_t(\bx)\Vert_2^2 .
\end{equation*}
The following theorem \ref{thm:CFMbound} states that the CFM loss \eqref{eq:loss_epd} along the geodesic in Theorem \ref{thm:geod} decomposes into the FM loss and the variance of conditional vector field and is bounded from below.

\begin{thm}
\label{thm:CFMbound}
For every parameter $\theta$ in the neural network $\bv_t(\cdot;\theta)$, we have
\begin{align*}
\mL_\textrm{CFM}(\theta) &= \mL_\textrm{FM}(\theta) + C, \quad
C:= \mbE_{t, p_t(\bx)}\V_{\bx_1|\bx_t=\bx}[\bu_t(\bx|\bx_1)] , \\
C &= \inf_{\theta} \mL_\textrm{CFM}(\theta) \leq \sigma^2_{\max} \kappa_{q,d} \mbE_{q(\bx_1)} L^2(\mP_q(\bzero, \bI), \mP_q(\bx_1,\sigma^2_{\min}\bI)) ,
\end{align*}
where $C$ does not depend on the neural network $\bv_t(\cdot;\theta)$, 
$\sigma_{\max}$, $\kappa_{q,d}$ are constants depending on the geodesic,
and $L^2(\mP_q(\bzero, \bI), \mP_q(\bx_1,\sigma^2_{\min}\bI))$ is the scaled square Fisher-Rao distance between $\mP_q(\bzero, \bI)$ and $\mP_q(\bx_1,\sigma^2_{\min}\bI)$.
\end{thm}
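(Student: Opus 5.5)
We prove the decomposition first and then bound the constant $C$.

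\emph{Decomposition.} Fix $t$ and expand both squared norms. Write $\bu_t(\bx)=\mbE[\bu_t(\bx|\bx_1)|\bx_t=\bx]$. Then
\[
\Vert \bv_t(\bx;\theta)-\bu_t(\bx|\bx_1)\Vert^2 = \Vert \bv_t(\bx;\theta)-\bu_t(\bx)\Vert^2 + 2\langle \bv_t(\bx;\theta)-\bu_t(\bx), \bu_t(\bx)-\bu_t(\bx|\bx_1)\rangle + \Vert \bu_t(\bx)-\bu_t(\bx|\bx_1)\Vert^2 .
\]
The joint law of $(\bx_1,\bx)$ with $\bx_1\sim q$ and $\bx\sim p_t(\cdot|\bx_1)$ equals $p_t(\bx)$ times the posterior of $\bx_1$ given $\bx_t=\bx$. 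Under this factorization the cross term vanishes by the tower property, since $\bv_t(\bx;\theta)-\bu_t(\bx)$ is a function of $\bx$ only. The last term becomes $\mbE_{t,p_t(\bx)}\V_{\bx_1|\bx_t=\bx}[\bu_t(\bx|\bx_1)]$, which is independent of $\theta$. Sampling $\bx=\psi_t(\bx_0)$ with $\bx_0\sim\mP_q(\bzero,\bI)$ produces exactly $p_t(\cdot|\bx_1)$ by Theorems \ref{thm:vf} and \ref{thm:path}, and $\frac{d}{dt}\psi_t(\bx_0)=\bu_t(\psi_t(\bx_0)|\bx_1)$. So the loss \eqref{eq:loss_epd} equals \eqref{eq:CFM_loss} and the identity $\mL_\textrm{CFM}=\mL_\textrm{FM}+C$ holds. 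Since $\mL_\textrm{FM}\ge 0$, with equality when $\bv_t=\bu_t$ (assuming the network class is rich enough), we get $\inf_\theta\mL_\textrm{CFM}=C$. Finiteness of the second moments, needed for the expansion, follows from Proposition \ref{prop:epd_prop} and the smoothness of the geodesic coefficients.

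\emph{Bound on $C$.} A conditional variance is at most the conditional second moment, so
\[
C\le \mbE_{t,q(\bx_1),p_t(\bx|\bx_1)}\Vert \bu_t(\bx|\bx_1)\Vert^2 .
\]
By \eqref{eq:VF_locationscale}, $\bu_t(\bx|\bx_1)=\dot\bmu_t+\dot\sigma_t\bZ$ with $\bZ\sim\mP_q(\bzero,\bI)$ and $\mbE\bZ=\bzero$. Proposition \ref{prop:epd_prop} gives $\mbE\Vert\bZ\Vert^2=d\,s_{q,d}$, so the inner expectation equals $\Vert\dot\bmu_t\Vert^2+d\,s_{q,d}\dot\sigma_t^2$. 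This is to be compared with the Fisher--Rao speed $(c_\mu\Vert\dot\bmu_t\Vert^2+c_\sigma\dot\sigma_t^2)/\sigma_t^2$. It gives
\[
\Vert\dot\bmu_t\Vert^2+d s_{q,d}\dot\sigma_t^2\le \sigma_t^2\max\Big(\frac1{c_\mu},\frac{d s_{q,d}}{c_\sigma}\Big)\frac{c_\mu\Vert\dot\bmu_t\Vert^2+c_\sigma\dot\sigma_t^2}{\sigma_t^2}.
\]
Along the geodesic of Theorem \ref{thm:geod} this Fisher--Rao speed is constant and equal to $c_\sigma L^2$, by the constant-speed parametrization $ds=L\,dt$ (up to the $c_\sigma$ scaling). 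Integrating over $t$ then bounds the inner expectation by $\sigma_{\max}^2\kappa_{q,d}L^2$, where
\[
\kappa_{q,d}=c_\sigma\max(1/c_\mu,\,d s_{q,d}/c_\sigma), \qquad \sigma_{\max}=\sup_t\sigma_t=\sup_t\frac{R}{\lambda}\sin\theta(t)\le R/\lambda .
\]
Taking the expectation over $q(\bx_1)$ gives the claimed bound. Here $L=L(\bx_1)$ is given by \eqref{eq:Fisher-Rao-distance} with $\bmu_0=\bzero$, $\sigma_0=1$, $\bmu_1=\bx_1$, $\sigma_1=\sigma_{\min}$.

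\emph{Main obstacle.} The delicate point is that $\sigma_{\max}$ depends on $\bx_1$ through $R$. To obtain a genuine constant we either keep it inside the expectation or bound $R/\lambda$ uniformly. The second option needs $\sup\Vert\bx_1\Vert<\infty$, i.e.\ bounded data support, or else $\sigma_{\max}$ must be interpreted per-geodesic, as the statement's phrase ``depending on the geodesic'' allows. I would first try a uniform bound. If it fails, I would bring $\sigma^2_{\max}(\bx_1)$ inside $\mbE_{q(\bx_1)}$.
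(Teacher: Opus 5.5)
Your proposal is correct and follows the paper's proof essentially step for step: the Bayes-factorized bias--variance decomposition giving $\mL_\textrm{CFM}=\mL_\textrm{FM}+C$, the bound of the posterior variance by the conditional second moment $\Vert\dot\bmu_t\Vert^2+d\,s_{q,d}\dot\sigma_t^2$, and the comparison with the Fisher--Rao energy through $\max\{1/c_\mu,\,d\,s_{q,d}/c_\sigma\}$. Your explicit bookkeeping is a sound refinement rather than a different route. It covers the factor $c_\sigma$ (absorbed into $\kappa_{q,d}$), the richness assumption behind $C=\inf_\theta\mL_\textrm{CFM}(\theta)$, and the $\bx_1$-dependence of $\sigma_{\max}$. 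On the last point, the paper takes a supremum over $\mathrm{supp}(q)$, which is finite only for bounded data support, so keeping $\sigma_{\max}^2(\bx_1)$ inside the expectation is the safer form.
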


\begin{proof}
\label{apx:CFMbound}
Denote $\bx_t:=\psi_t(\bx_0)$ as the conditional flow \eqref{eq:affine_path}.
With $q(\bx_1)p_t(\bx_t|\bx_1) = p_t(\bx_t)p(\bx_1|\bx_t)$, the CFM loss \eqref{eq:loss_epd} can be rewritten
\begin{align*}
\mL_\textrm{CFM}(\theta) &= \mbE_t\mbE_{q(\bx_1)}\mbE_{p_t(\bx_t|\bx_1)}\Vert \bv_t(\bx_t;\theta)-\bu_t(\bx_t|\bx_1)\Vert_2^2 \\
&= \mbE_t\mbE_{p_t(\bx_t)}\mbE_{p(\bx_1|\bx_t)}\Vert \bv_t(\bx_t;\theta)-\bu_t(\bx_t|\bx_1)\Vert_2^2 \\
&= \underbrace{\mbE_t\mbE_{p_t(\bx_t)} \Vert \bv_t(\bx_t;\theta)-\bu_t(\bx_t)\Vert_2^2}_{\mL_\textrm{FM}(\theta)} + \underbrace{\mbE_t\mbE_{p_t(\bx_t)} \mbE_{p(\bx_1|\bx_t)}\Vert \bu_t(\bx_t|\bx_1) - \bu_t(\bx_t)\Vert_2^2}_{C} .
\end{align*}
where $C=\mbE_{t, p_t(\bx)}\V_{\bx_1|\bx_t=\bx}[\bu_t(\bx|\bx_1)]$ based on the definition of marginal vector field $\bu_t(\bx)$.
Since $\mL_\textrm{FM}(\theta)\geq 0$, we get $\mL_\textrm{CFM}(\theta)\geq C$, with equality achieved iff $\Vert\bv_t(\bx_t;\theta)-\bu_t(\bx_t)\Vert=0$ for $p_t$-a.e $(\bx, t)$.
Therefore, $C$ is the lower bound of the CFM loss \eqref{eq:loss_epd}, regardless of the network $\bv_t(\cdot; \theta)$.

To get the upper bound of $C$ for the geodesic path, we notice that with conditional vector field \eqref{eq:conditional_vf}
\begin{align*}
C &\leq \mbE_t\mbE_{q(\bx_1)}\mbE_{p_t(\bx|\bx_1)} \Vert\bu_t(\bx|\bx_1)\Vert^2 \\
&= \mbE_t\mbE_{q(\bx_1)}[\Vert\dot\bmu_t(\bx_1)\Vert^2 + (\dot\sigma_t(\bx_1)/\sigma_t(\bx_1))^2\mbE_{p_t(\bx|\bx_1)}\Vert\bx-\bmu_t(\bx_1)\Vert^2]\\
&\overset{prop\ref{prop:epd_prop}}{=} \mbE_{q(\bx_1)}\int_0^1[\Vert\dot\bmu_t(\bx_1)\Vert^2+s_{q,d}d\dot\sigma_t^2(\bx_1)] dt \\
&\leq \mbE_{q(\bx_1)} \int_0^1 \sigma_t^2\max\left\{\frac{1}{c_{\mu}}, \frac{s_{q,d}d}{c_{\sigma}}\right\} \frac{c_{\mu}\Vert\dot\bmu_t\Vert^2+c_\sigma \dot\sigma_t^2}{\sigma_t^2} dt\\
&\leq \sigma^2_{\max} \kappa_{q,d} \mbE_{q(\bx_1)} L^2(\mP_q(0, \bI), \mP_q(\bx_1,\sigma^2_{\min}\bI)) ,
\end{align*}
where $\sigma_{\max}:=\sup_{t\in[0,1], \bx_1\in\mathrm{supp}(q(\bx_1))}\sigma_t^2(\bx_1)$, and $\kappa_{q,d}=\max\left\{\frac{1}{c_{\mu}}, \frac{s_{q,d}d}{c_{\sigma}}\right\}$.
\end{proof}

\begin{rk}
Note that the CFM loss bound $C$ depends only on the probability path $p_t(\cdot|\bx)$ and the data distribution $q(\cdot)$, not the neural network $\bv(\cdot, \theta)$.
\end{rk}
\begin{rk}\label{rk:energy_hump}
With the OT-interpolant path \eqref{eq:OT_interp}, the CFM loss bound collapses to
\begin{equation*}
C=\mbE_{t, p_t(\bx)}\V_{\bx_1|\bx_t}[\bx_1]
\left(1+\frac{(1-\sigma_{\min})t}{\sigma_t}\right)^2.
\end{equation*}
This quantity peaks around $t\approx \half$, where the posterior $p(\bx_1|\bx_t)$ is most uncertain. Correspondingly, an ``energy hump" may appear, as illustrated in Figure \ref{fig:ED_traj}.
\end{rk}
\begin{rk}
The proof suggests a more natural choice of loss based on the energy \eqref{eq:energy_fisher}, i.e. replacing the $L_2$ norm with the Fisher-Rao norm in \eqref{eq:loss_epd}. That would yield a tighter bound for $C$.
\end{rk}

\begin{figure}[t]
\centering
\includegraphics[height=0.2\linewidth, width=.24\linewidth]{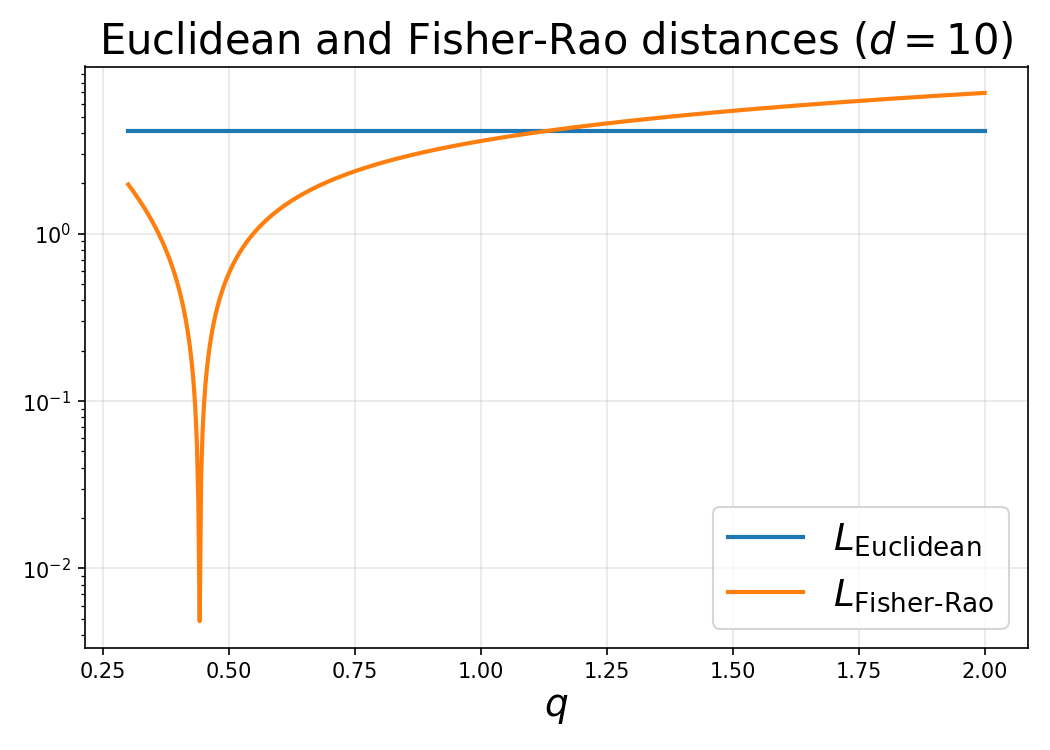}
\includegraphics[height=0.2\linewidth, width=.24\linewidth]{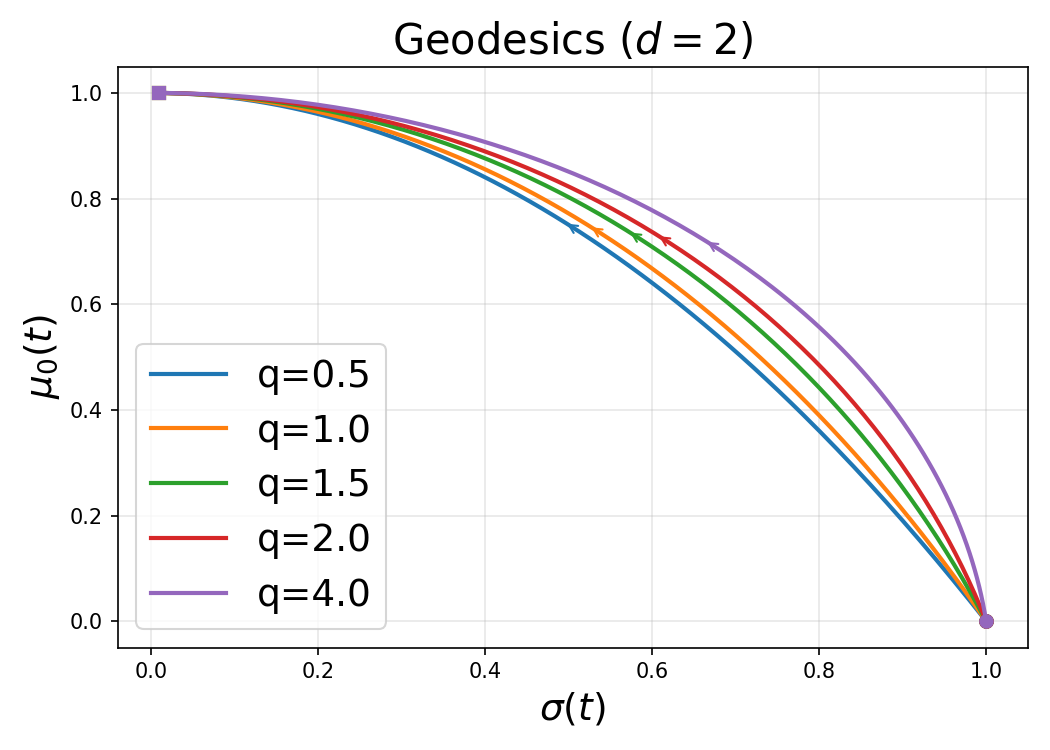}
\includegraphics[height=0.2\linewidth, width=.24\linewidth]{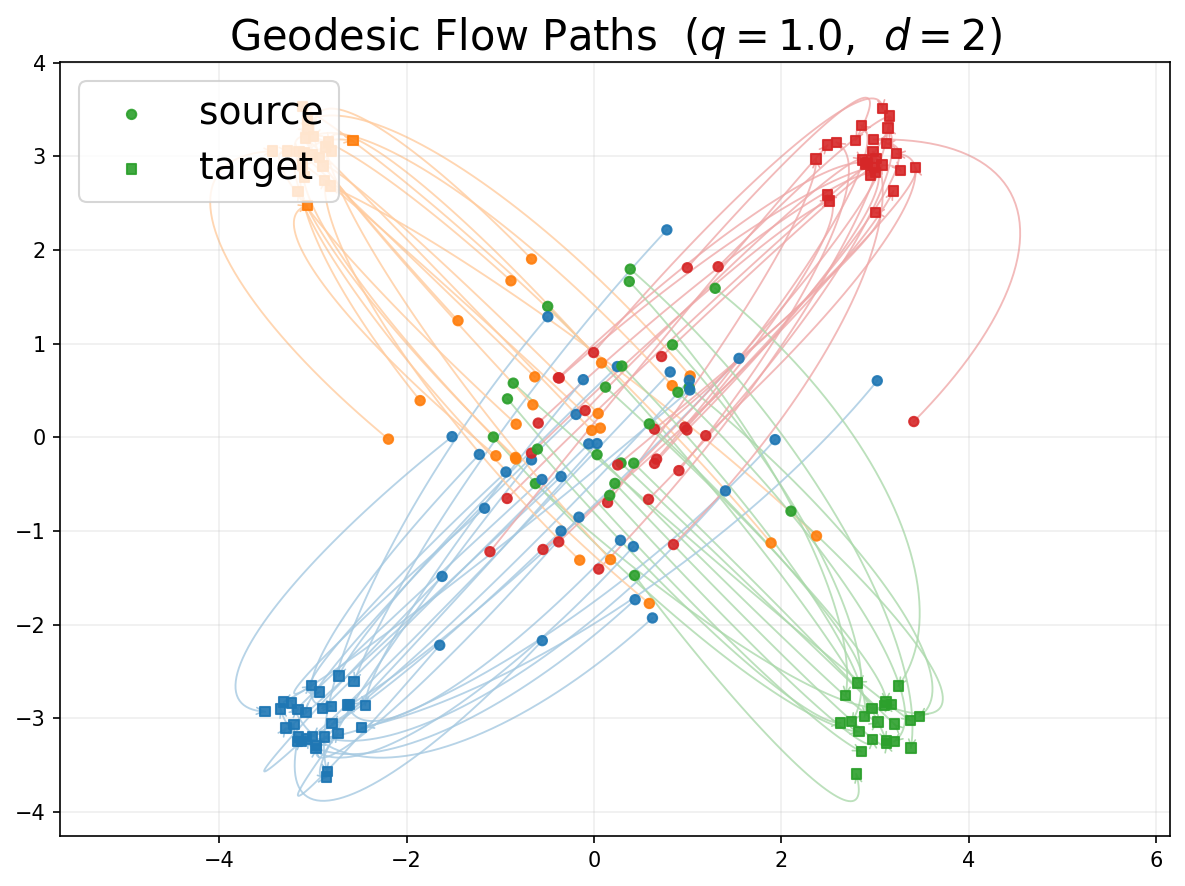}
\includegraphics[height=0.2\linewidth, width=.24\linewidth]{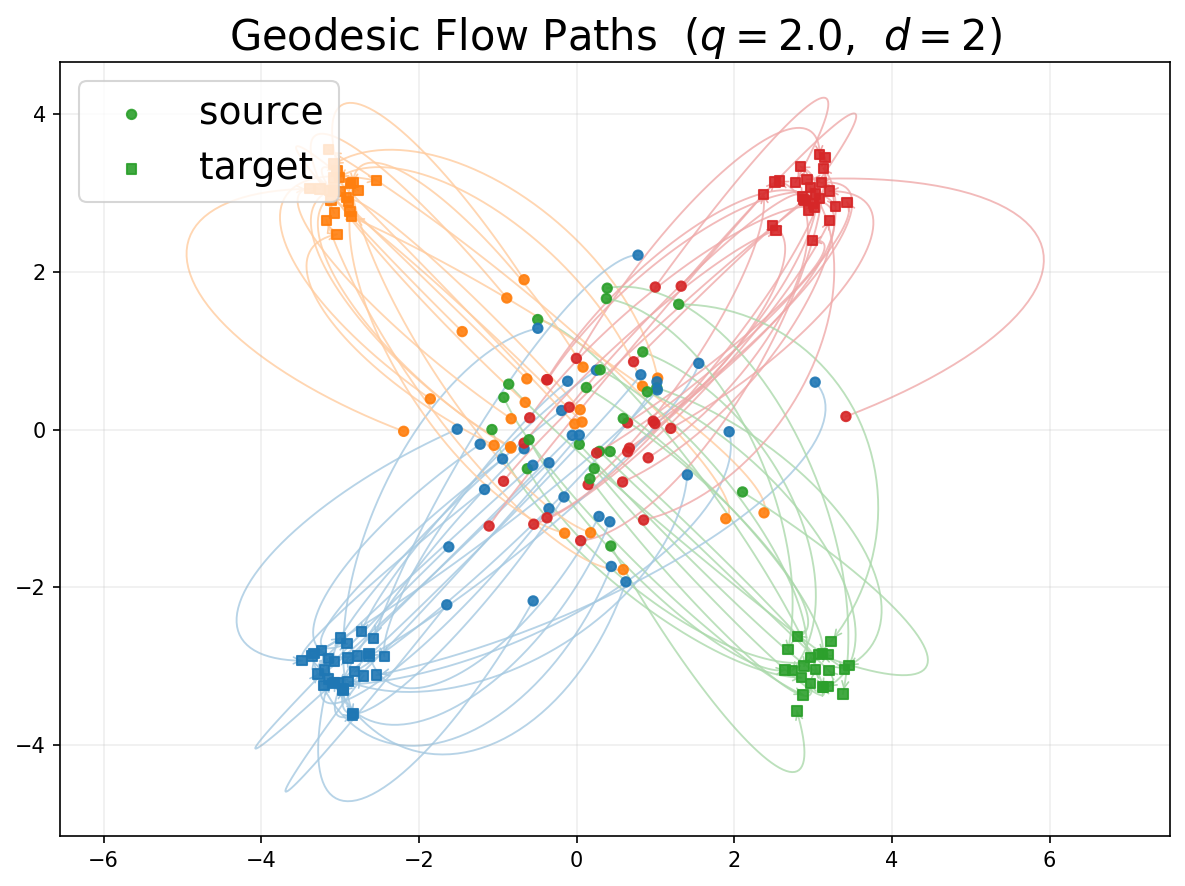}
\caption{Euclidean vs Fisher-Rao distances, (probabilistic) geodesic probability paths, flow paths for $q=1.0$ and $q=2.0$ (from left to right) on the manifold of exponential power distributions.}
\label{fig:epd_geods}
\end{figure}

\section{More Numerical Results}\label{apx:numerics}

\subsection{Setup for Numerical Experiments}\label{apx:setup}
We follow the framework of \cite[CFM]{Lipman_2023}. For all simulations, we use a 5-layer dense neural network with sinusoidal (Fourier) time encoding and 512 hidden dimensions and train it with the ADAM optimizer with learning rate $1e-4$. We set $\sigma_{\min}=1e-3$ for PG/HB and $1e-3$ for other baselines. 
In the sampling/inference stage, we adopt the `midpoint' method to solve the flow ODE \eqref{eq:flow} with step size 0.05. 

For image applications (MNIST), we use a classic UNet for MNIST (64 base channels, 2 residual blocks) with sinusoidal time embedding trained by AdamW at a learning rate $2e-4$. The exponential moving average (ema) is also adopted with decay rate 0.999. In sampling, the same `midpoint' method is used for solving ODE with 100 discretized steps. FID scores are evaluated based on 50k samples. 

We refer to Wasserstein-2 distance ($W_2^2(P, Q)=\inf_{\pi\in\Pi(P,Q)}\int_{\mbR^d\times\mbR^d}\Vert x-y\Vert_2^2 d\pi(x, y)$), energy distance ($\mathrm{ED}^2(P, Q)=2\mbE\Vert X-Y\Vert-\mbE\Vert X-X'\Vert-\mbE\Vert Y-Y'\Vert$ for $X,X'\overset{iid}{\sim} P$ and $Y,Y'\overset{iid}{\sim} Q$ independent), and maximum mean discrepancy ($\mathrm{MMD}^2(P,Q;k)=\mbE_{X,X'}k(X,X')+\mbE_{Y,Y'}k(Y,Y')-2\mbE_{X,Y}k(X,Y)$) with radial basis function (rbf) kernel $k$ to measure the discrepancy between synthetic sample distribution and real data distribution.
The training time per epoch, `time/epoch', is reported in Tables \ref{tab:funnel100}, \ref{tab:swissroll}, \ref{tab:8gaussians}, and \ref{tab:funnel}. The sampler for the exponential power distribution is based on Proposition \ref{prop:epd_prop}. 

\subsection{2d Benchmarks}

\begin{figure}[t]
\centering
\includegraphics[height=0.05\linewidth, width=0.49\linewidth]{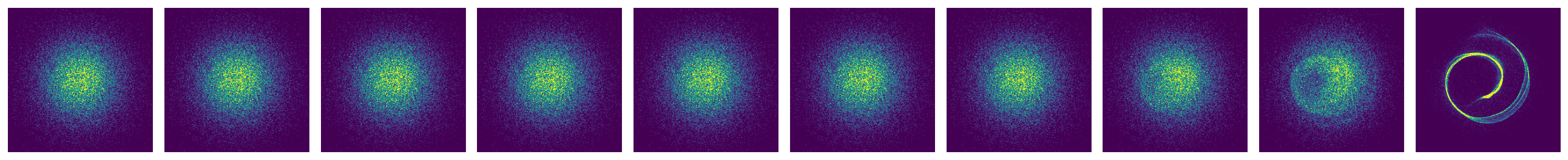}
\includegraphics[height=0.05\linewidth, width=0.49\linewidth]{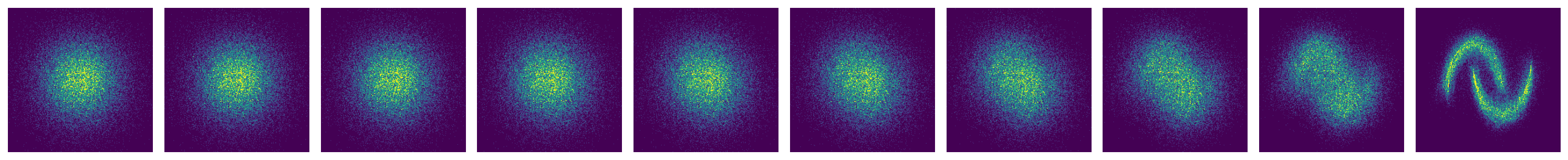}
\includegraphics[height=0.05\linewidth, width=0.49\linewidth]{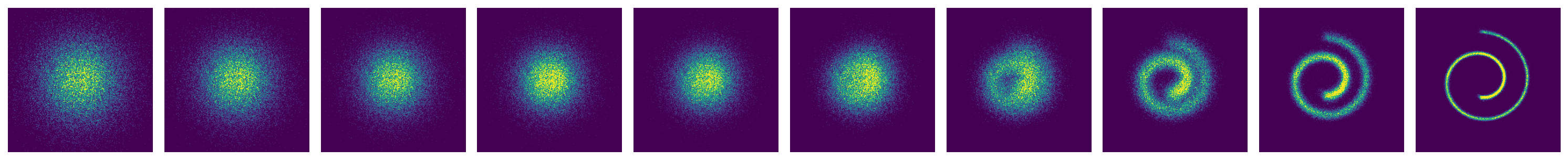}
\includegraphics[height=0.05\linewidth, width=0.49\linewidth]{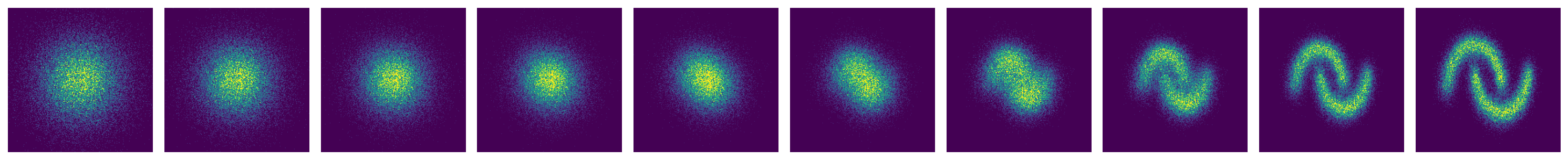}
\includegraphics[height=0.05\linewidth, width=0.49\linewidth]{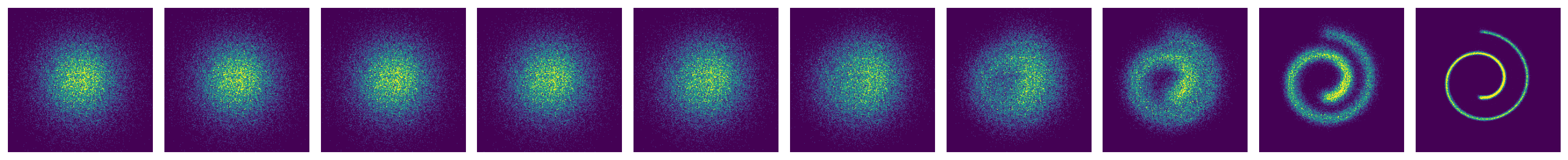}
\includegraphics[height=0.05\linewidth, width=0.49\linewidth]{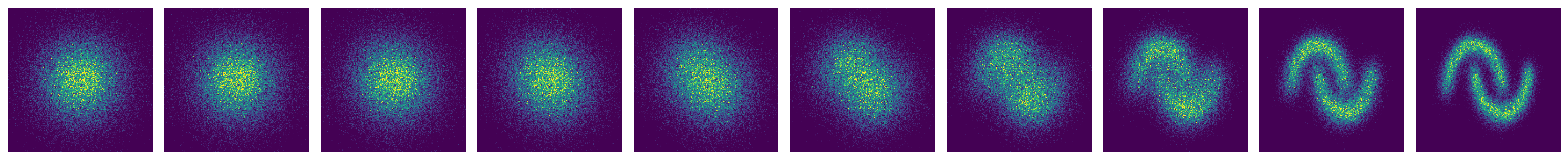}
\includegraphics[height=0.05\linewidth, width=0.49\linewidth]{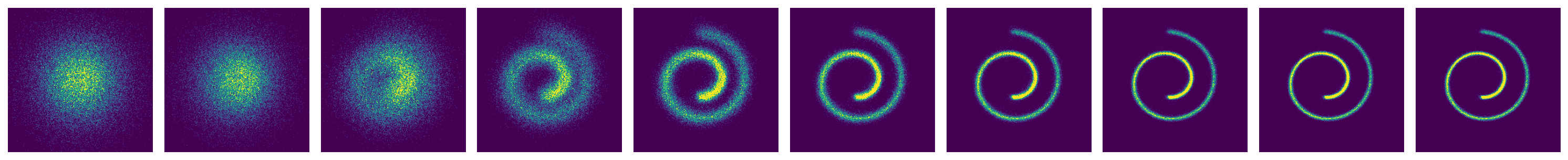}
\includegraphics[height=0.05\linewidth, width=0.49\linewidth]{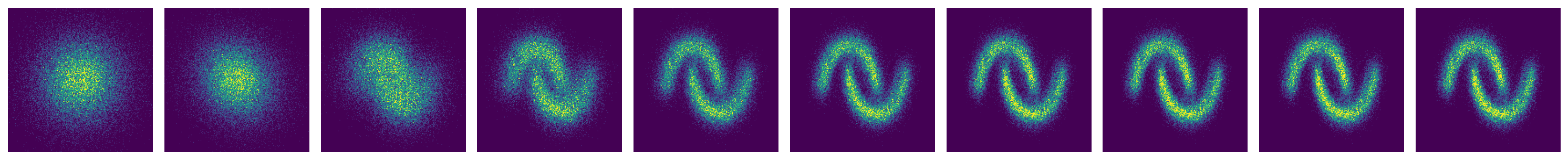}
\includegraphics[height=0.05\linewidth, width=0.49\linewidth]{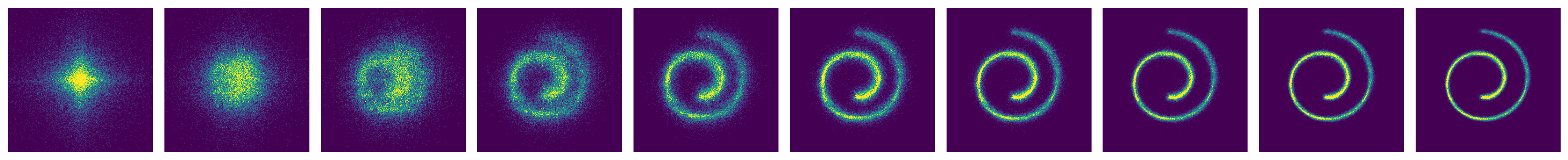}
\includegraphics[height=0.05\linewidth, width=0.49\linewidth]{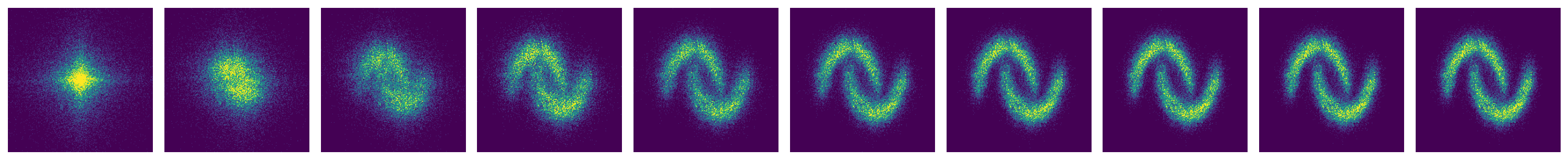}
\caption{Sample paths in the Swiss roll (left) and moons (right) examples: VP, OT, sino, PG-2, and PG-1 (from top to bottom).}
\label{fig:swissroll_moons_sample}
\end{figure}

\begin{table}[htbp]
\centering
\caption{Swiss roll dataset comparison in $W_2$, ED, and MMD (mean $\pm$ std over 10 seeds).}
\label{tab:swissroll}
\resizebox{\columnwidth}{!}{
\begin{tabular}{l l c c c c}
\hline
DIST & PATH & $W_2$ $\downarrow$ & ED $\downarrow$ & MMD $\downarrow$ & time/epoch \\
\hline

q=1.0 & VP 
& $1.49e{-1} \pm 3.5e{-2}$ 
& $3.69e{-2} \pm 1.5e{-2}$ 
& $3.33e{-4} \pm 2.2e{-4}$ 
& $5.84e{-2} \pm 6.1e{-4}$ \\

q=2.0 & VP 
& $1.42e{-1} \pm 4.5e{-2}$ 
& $2.93e{-2} \pm 2.3e{-2}$ 
& $3.06e{-4} \pm 3.6e{-4}$ 
& $6.24e{-2} \pm 5.3e{-4}$ \\

q=1.0 & OT 
& $1.14e{-1} \pm 2.3e{-2}$ 
& $1.56e{-2} \pm 1.1e{-2}$ 
& $7.50e{-5} \pm 9.1e{-5}$ 
& $5.66e{-2} \pm 5.4e{-4}$ \\

q=2.0 & OT 
& $1.12e{-1} \pm 3.0e{-2}$ 
& $1.69e{-2} \pm 1.6e{-2}$ 
& $9.03e{-5} \pm 1.7e{-4}$ 
& $5.40e{-2} \pm 6.2e{-4}$ \\

q=1.0 & sino 
& $1.33e{-1} \pm 2.5e{-2}$ 
& $2.53e{-2} \pm 1.3e{-2}$ 
& $1.87e{-4} \pm 1.2e{-4}$ 
& $5.74e{-2} \pm 3.7e{-4}$ \\

q=2.0 & sino 
& $1.17e{-1} \pm 1.4e{-2}$ 
& $1.24e{-2} \pm 8.3e{-3}$ 
& $6.05e{-5} \pm 5.0e{-5}$ 
& $5.46e{-2} \pm 4.2e{-4}$ \\

\rowcolor{lightgray} q=1.0 & PG 
& $1.05e{-1} \pm 2.4e{-2}$ 
& $9.56e{-3} \pm 7.8e{-3}$ 
& $5.27e{-5} \pm 8.4e{-5}$ 
& $1.05e{-1} \pm 3.2e{-4}$ \\

\rowcolor{lightgray} q=2.0 & PG 
& $1.08e{-1} \pm 2.0e{-2}$ 
& $8.66e{-3} \pm 8.0e{-3}$ 
& $3.55e{-5} \pm 6.3e{-5}$ 
& $1.10e{-1} \pm 9.9e{-3}$ \\

\hline
\end{tabular}}
\end{table}


\begin{table}[htbp]
\centering
\caption{Comparison on Moons and Checkerboard datasets in $W_2$, ED, and MMD (mean $\pm$ std over 10 seeds).}
\label{tab:moons_checkerboard}
\resizebox{\columnwidth}{!}{
\begin{tabular}{l l| ccc |ccc}
\hline
& & \multicolumn{3}{c}{Moons} & \multicolumn{3}{|c}{Checkerboard} \\
\hline
DIST & PATH 
& $W_2$ $\downarrow$ & ED $\downarrow$ & MMD $\downarrow$ 
& $W_2$ $\downarrow$ & ED $\downarrow$ & MMD $\downarrow$ \\
\hline

q=1.0 & VP 
& $1.33e{-1} \pm 2.4e{-2}$ & $2.56e{-2} \pm 1.5e{-2}$ & $2.14e{-4} \pm 1.8e{-4}$
& $3.58e{-1} \pm 9.6e{-2}$ & $4.45e{-2} \pm 2.9e{-2}$ & $3.01e{-4} \pm 3.0e{-4}$ \\

q=2.0 & VP 
& $1.06e{-1} \pm 1.7e{-2}$ & $1.00e{-2} \pm 1.0e{-2}$ & $5.69e{-5} \pm 6.5e{-5}$
& $3.53e{-1} \pm 6.2e{-2}$ & $4.29e{-2} \pm 1.6e{-2}$ & $2.05e{-4} \pm 1.2e{-4}$ \\

q=1.0 & OT 
& $1.17e{-1} \pm 2.1e{-2}$ & $1.94e{-2} \pm 1.2e{-2}$ & $1.08e{-4} \pm 1.2e{-4}$
& $3.15e{-1} \pm 5.3e{-2}$ & $3.23e{-2} \pm 1.7e{-2}$ & $1.63e{-4} \pm 1.1e{-4}$ \\

q=2.0 & OT 
& $9.83e{-2} \pm 1.3e{-2}$ & $9.39e{-3} \pm 8.6e{-3}$ & $5.56e{-5} \pm 6.0e{-5}$
& $2.96e{-1} \pm 4.1e{-2}$ & $2.21e{-2} \pm 1.6e{-2}$ & $9.36e{-5} \pm 7.4e{-5}$ \\

q=1.0 & sino 
& $1.25e{-1} \pm 1.8e{-2}$ & $2.26e{-2} \pm 1.1e{-2}$ & $1.59e{-4} \pm 1.1e{-4}$
& $3.59e{-1} \pm 9.1e{-2}$ & $4.76e{-2} \pm 2.4e{-2}$ & $3.38e{-4} \pm 2.5e{-4}$ \\

q=2.0 & sino 
& $1.06e{-1} \pm 1.5e{-2}$ & $1.22e{-2} \pm 1.0e{-2}$ & $6.25e{-5} \pm 5.5e{-5}$
& $3.01e{-1} \pm 4.5e{-2}$ & $2.32e{-2} \pm 1.6e{-2}$ & $1.06e{-4} \pm 7.0e{-5}$ \\

\rowcolor{lightgray}
q=1.0 & PG 
& $1.02e{-1} \pm 1.3e{-2}$ & $9.26e{-3} \pm 9.2e{-3}$ & $6.94e{-5} \pm 8.4e{-5}$
& $3.17e{-1} \pm 3.8e{-2}$ & $2.74e{-2} \pm 1.6e{-2}$ & $1.24e{-4} \pm 8.3e{-5}$ \\

\rowcolor{lightgray}
q=2.0 & PG 
& $1.00e{-1} \pm 1.4e{-2}$ & $5.62e{-3} \pm 8.0e{-3}$ & $3.68e{-5} \pm 7.3e{-5}$
& $3.05e{-1} \pm 3.2e{-2}$ & $2.56e{-2} \pm 6.8e{-3}$ & $1.04e{-4} \pm 3.0e{-5}$ \\

\hline
\end{tabular}}
\end{table}

\begin{table}[htbp]
\centering
\caption{PG $q$-sweep comparison across Swiss roll, moons, and checkerboard in $W_2$ and ED (mean $\pm$ std over 10 seeds).}
\label{tab:simulation_qs}
\resizebox{\columnwidth}{!}{
\begin{tabular}{l |cc |cc |cc}
\hline
& \multicolumn{2}{c}{Swiss Roll} & \multicolumn{2}{|c}{Moons} & \multicolumn{2}{|c}{Checkerboard} \\
\hline
DIST & $W_2$ $\downarrow$ & ED $\downarrow$ & $W_2$ $\downarrow$ & ED $\downarrow$ & $W_2$ $\downarrow$ & ED $\downarrow$ \\
\hline

q=0.5
& $3.70e{-1} \pm 2.1e{-2}$ & $6.76e{-2} \pm 5.8e{-3}$
& $3.18e{-1} \pm 1.7e{-2}$ & $6.14e{-2} \pm 4.6e{-3}$
& $3.92e{-1} \pm 4.2e{-2}$ & $3.56e{-2} \pm 1.3e{-2}$ \\

q=0.8
& $1.11e{-1} \pm 1.9e{-2}$ & $1.02e{-2} \pm 8.8e{-3}$
& $1.09e{-1} \pm 1.6e{-2}$ & $1.41e{-2} \pm 9.6e{-3}$
& $3.19e{-1} \pm 4.1e{-2}$ & $2.80e{-2} \pm 1.3e{-2}$ \\

q=1.0
& \cellcolor{lightgray}$1.05e{-1} \pm 2.4e{-2}$ & $9.56e{-3} \pm 7.8e{-3}$
& $1.02e{-1} \pm 1.3e{-2}$ & $9.26e{-3} \pm 9.2e{-3}$
& $3.17e{-1} \pm 3.8e{-2}$ & $2.74e{-2} \pm 1.6e{-2}$ \\

q=1.5
& $1.06e{-1} \pm 1.4e{-2}$ & $9.13e{-3} \pm 8.6e{-3}$
& $9.95e{-2} \pm 1.2e{-2}$ & $9.93e{-3} \pm 9.3e{-3}$
& $3.19e{-1} \pm 5.4e{-2}$ & $3.07e{-2} \pm 1.8e{-2}$ \\

q=2.0
& $1.08e{-1} \pm 2.0e{-2}$ & \cellcolor{lightgray}$8.66e{-3} \pm 8.0e{-3}$
& $1.00e{-1} \pm 1.4e{-2}$ & \cellcolor{lightgray}$5.62e{-3} \pm 8.0e{-3}$
& \cellcolor{lightgray}$3.05e{-1} \pm 3.2e{-2}$ & \cellcolor{lightgray}$2.56e{-2} \pm 6.8e{-3}$ \\

q=2.5
& $1.15e{-1} \pm 1.8e{-2}$ & $1.29e{-2} \pm 8.0e{-3}$
& \cellcolor{lightgray}$9.80e{-2} \pm 8.5e{-3}$ & $7.68e{-3} \pm 6.9e{-3}$
& $3.37e{-1} \pm 4.5e{-2}$ & $3.66e{-2} \pm 1.6e{-2}$ \\

\hline
\end{tabular}}
\end{table}

\begin{figure}[t]
\centering
\includegraphics[height=0.05\linewidth, width=0.49\linewidth]{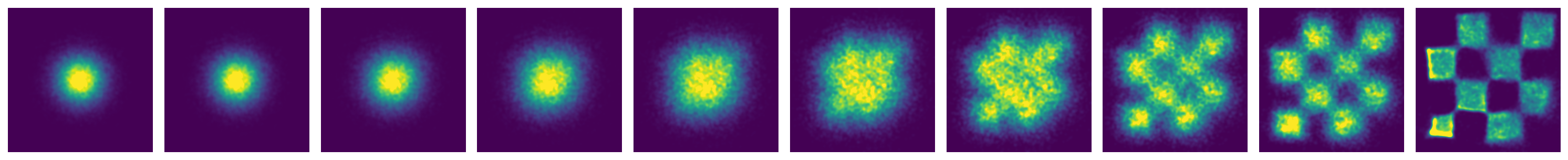}
\includegraphics[height=0.05\linewidth, width=0.49\linewidth]{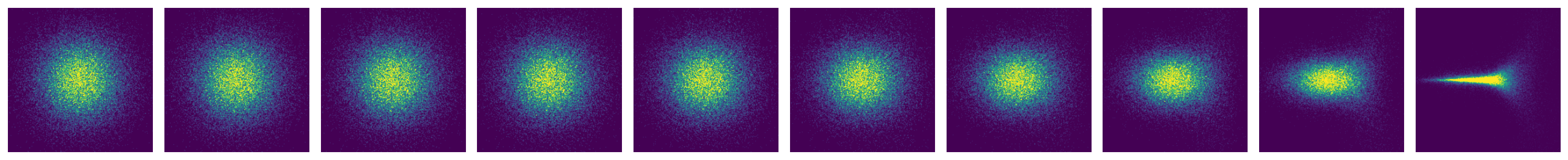}
\includegraphics[height=0.05\linewidth, width=0.49\linewidth]{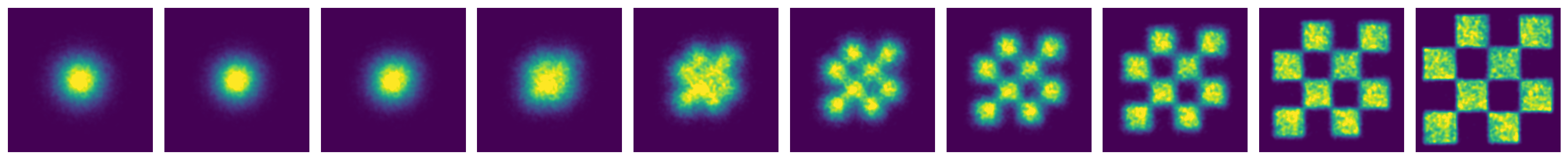}
\includegraphics[height=0.05\linewidth, width=0.49\linewidth]{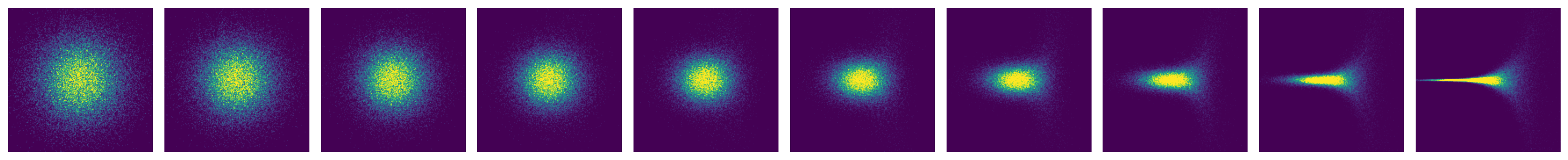}
\includegraphics[height=0.05\linewidth, width=0.49\linewidth]{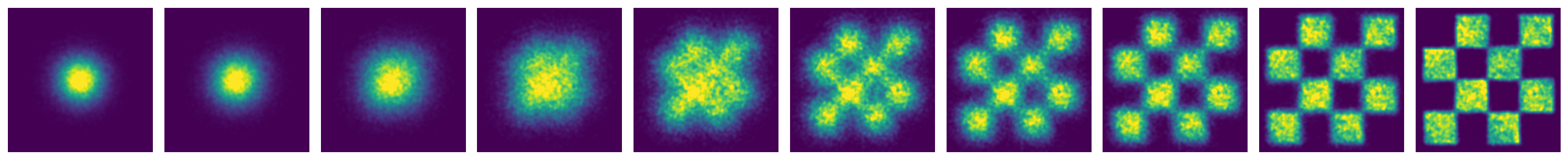}
\includegraphics[height=0.05\linewidth, width=0.49\linewidth]{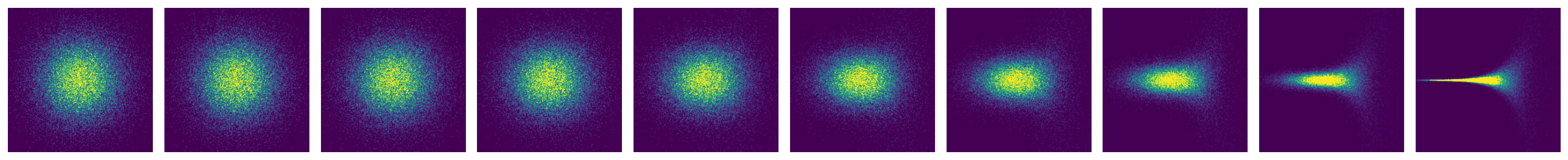}
\includegraphics[height=0.05\linewidth, width=0.49\linewidth]{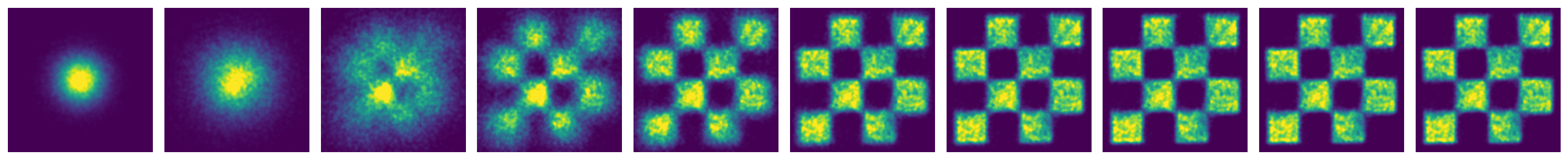}
\includegraphics[height=0.05\linewidth, width=0.49\linewidth]{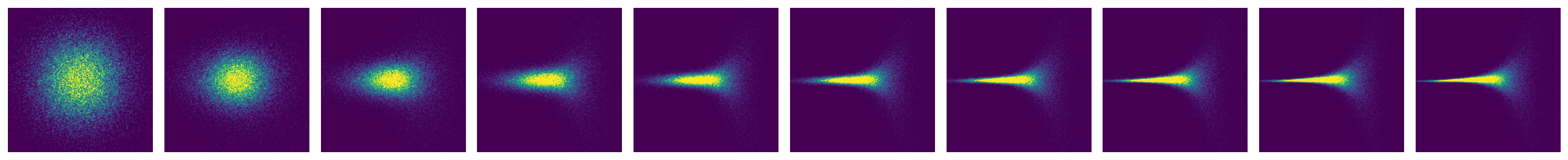}
\includegraphics[height=0.05\linewidth, width=0.49\linewidth]{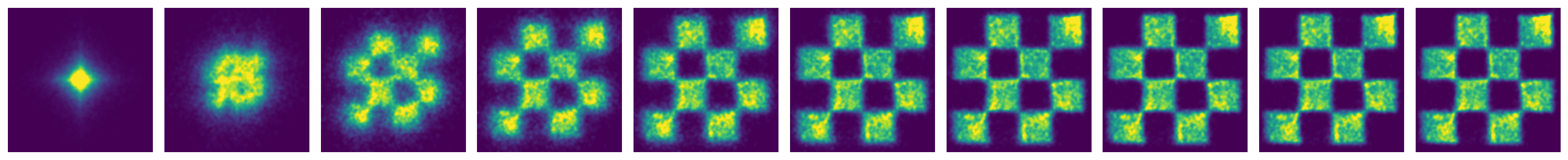}
\includegraphics[height=0.05\linewidth, width=0.49\linewidth]{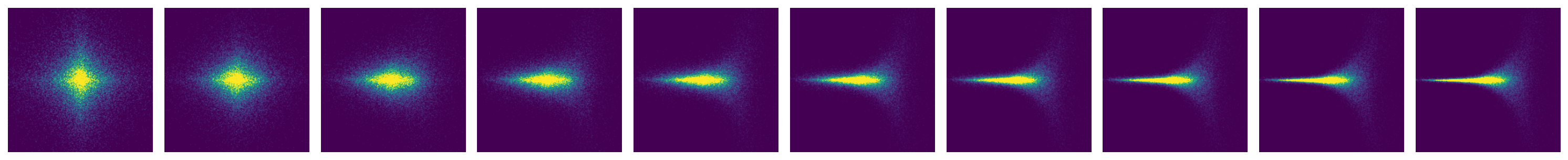}
\caption{Sample paths in the checkerboard (2d, left) and Neal's funnel (10d, right) examples: VP, OT, sino, PG-2, and PG-1 (from top to bottom).}
\label{fig:checkerboard_funnel_sample}
\end{figure}

\begin{table}[htbp]
\centering
\caption{8-Gaussians (36d): comparison in $W_2$, ED, and MMD (mean $\pm$ std over 10 seeds).}
\label{tab:8gaussians}
\resizebox{\columnwidth}{!}{
\begin{tabular}{l l c c c c}
\hline
DIST & Method & $W_2$ $\downarrow$ & ED $\downarrow$ & MMD $\downarrow$ & time/epoch \\
\hline





q=1.0 & OT
& $5.660e{0} \pm 1.43e{-2}$
& $5.93e{-2} \pm 1.25e{-2}$
& $1.39e{-4} \pm 5.09e{-5}$
& $5.92e{-2} \pm 1.27e{-4}$ \\


q=2.0 & OT
& $5.493e{0} \pm 1.41e{-2}$
& $6.62e{-2} \pm 7.93e{-3}$
& $2.47e{-4} \pm 4.99e{-5}$
& $5.70e{-2} \pm 1.41e{-4}$ \\

\hline

q=1.0 & Sino
& $5.499e{0} \pm 1.03e{-2}$
& $4.06e{-2} \pm 6.28e{-3}$
& $9.67e{-5} \pm 2.37e{-5}$
& $5.82e{-2} \pm 1.11e{-4}$ \\


q=2.0 & Sino
& \cellcolor{lightgray}$5.329e{0} \pm 2.86e{-3}$
& $1.29e{-1} \pm 2.66e{-3}$
& $1.17e{-3} \pm 4.85e{-5}$
& $5.69e{-2} \pm 8.26e{-5}$ \\

\hline

q=1.0 & RFM
& $5.49e{0} \pm 8.79e{-3}$
& $7.28e{-2} \pm 6.73e{-3}$
& $2.93e{-4} \pm 4.30e{-5}$
& $7.47e{-2} \pm 3.16e{-3}$ \\

q=2.0 & RFM
& $5.50e{0} \pm 1.61e{-2}$
& $6.41e{-2} \pm 7.89e{-3}$
& $2.24e{-4} \pm 6.10e{-5}$
& $6.89e{-2} \pm 3.31e{-3}$ \\

\hline

q=1.0 & MFM
& $5.58e{0} \pm 1.45e{-2}$
& $6.98e{-2} \pm 1.11e{-2}$
& $2.12e{-4} \pm 6.18e{-5}$
& $4.10e{-1} \pm 1.49e{-2}$ \\

q=2.0 & MFM
& $5.58e{0} \pm 1.32e{-2}$
& $5.68e{-2} \pm 8.40e{-3}$
& $1.48e{-4} \pm 3.73e{-5}$
& $3.66e{-1} \pm 1.18e{-2}$ \\

\hline

q=1.0 & FFM
& $5.79e{0} \pm 2.29e{-2}$
& $4.53e{-2} \pm 3.48e{-2}$
& $2.14e{-4} \pm 2.82e{-4}$
& $8.23e{-3} \pm 1.13e{-4}$ \\

q=2.0 & FFM
& $5.80e{0} \pm 3.78e{-2}$
& $5.56e{-2} \pm 7.05e{-2}$
& $4.89e{-4} \pm 1.27e{-3}$
& $6.76e{-3} \pm 1.31e{-4}$ \\

\hline

q=1.0 & PG-iso
& $5.399e{0} \pm 2.74e{-2}$
& $1.05e{-1} \pm 1.08e{-2}$
& $7.48e{-4} \pm 1.62e{-4}$
& $8.00e{-2} \pm 1.98e{-3}$ \\


q=2.0 & PG-iso
& $5.455e{0} \pm 7.64e{-3}$
& $7.67e{-2} \pm 5.65e{-3}$
& $3.93e{-4} \pm 5.85e{-5}$
& $6.75e{-2} \pm 1.61e{-3}$ \\

\hline

q=1.0 & PG-aniso
& $5.597e{0} \pm 5.51e{-3}$
& \cellcolor{lightgray}$4.04e{-2} \pm 6.23e{-3}$
& \cellcolor{lightgray}$6.57e{-5} \pm 1.73e{-5}$
& $6.59e{-2} \pm 2.54e{-4}$ \\


q=2.0 & PG-aniso
& $5.563e{0} \pm 1.64e{-2}$
& $5.08e{-2} \pm 4.69e{-3}$
& $1.50e{-4} \pm 2.22e{-5}$
& $6.42e{-2} \pm 4.77e{-4}$ \\

\hline

q=1.0 & HB
& $5.638e{0} \pm 1.63e{-2}$
& $4.44e{-2} \pm 1.16e{-2}$
& $7.33e{-5} \pm 3.59e{-5}$
& $7.84e{-2} \pm 9.44e{-4}$ \\


q=2.0 & HB
& $5.571e{0} \pm 1.16e{-2}$
& $5.21e{-2} \pm 5.00e{-3}$
& $1.37e{-4} \pm 2.62e{-5}$
& $6.85e{-2} \pm 3.84e{-4}$ \\

\hline
\end{tabular}}
\end{table}



\begin{figure}[t]
\centering
\includegraphics[width=.49\linewidth, height=0.3\linewidth]{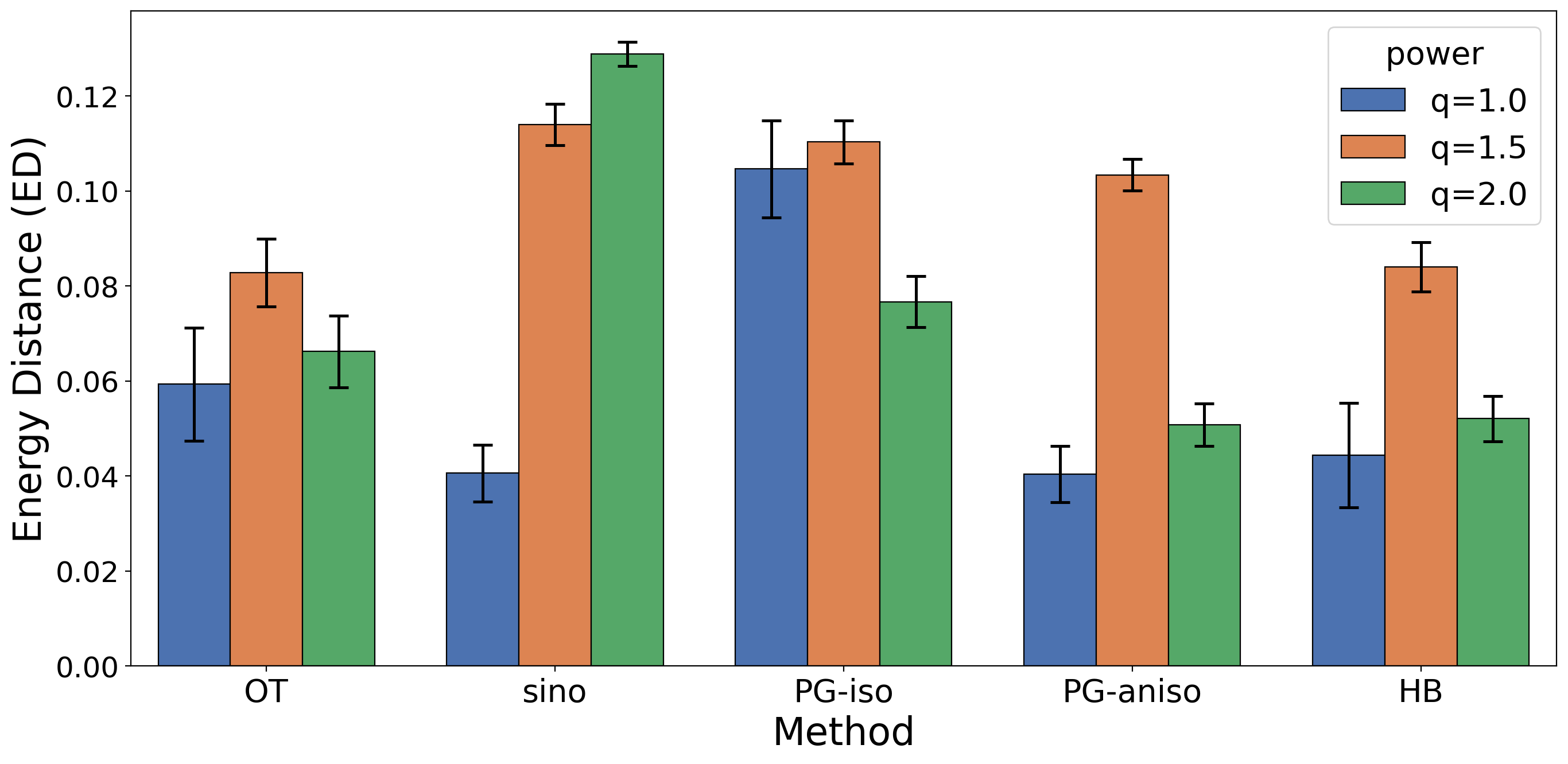}
\includegraphics[width=.49\linewidth, height=0.3\linewidth]{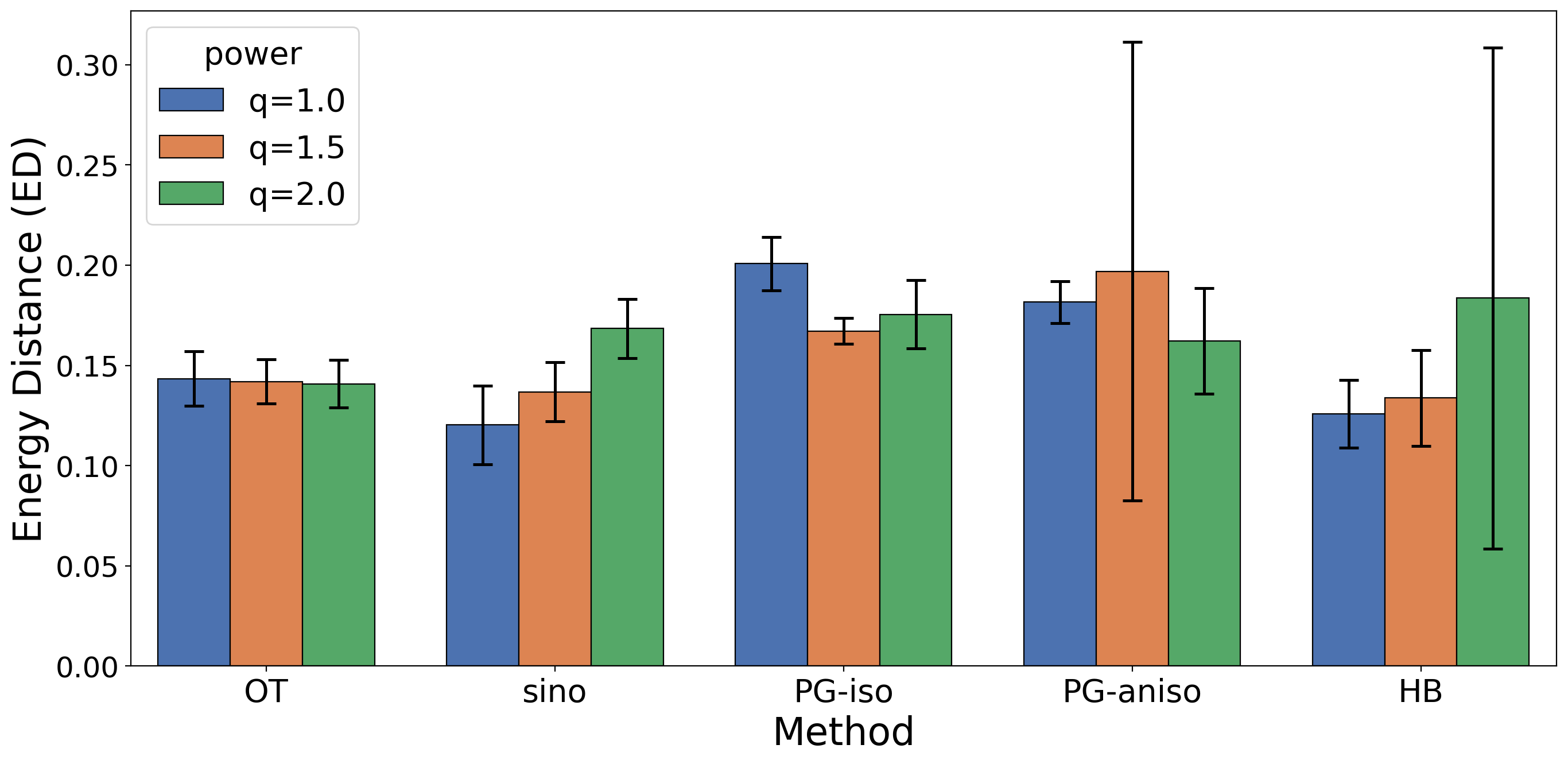}
\caption{Energy distance (ED) of 
OT, sino, PG-2, and PG-1 in the 8-Gaussians (36d, left) and funnel (100d, right) examples.}
\label{fig:ed_bars}
\end{figure}

\subsection{Higher Dimensional Simulations}


\begin{table}[htbp]
\centering
\caption{Funnel example (10d) comparison in $W_2$, ED, and MMD (mean $\pm$ std over 10 seeds).}
\label{tab:funnel}
\resizebox{\columnwidth}{!}{
\begin{tabular}{l l c c c c}
\hline
DIST & PATH & $W_2$ $\downarrow$ & ED $\downarrow$ & MMD $\downarrow$ & time/epoch \\
\hline

q=1.0 & VP 
& $2.55e{0} \pm 1.0e{-1}$ 
& $9.62e{-2} \pm 1.5e{-2}$ 
& $3.04e{-3} \pm 1.2e{-3}$ 
& $5.50e{-2} \pm 4.3e{-4}$ \\

q=2.0 & VP 
& \cellcolor{lightgray}$2.45e{0} \pm 4.6e{-2}$ 
& $9.48e{-2} \pm 3.3e{-2}$ 
& $4.10e{-3} \pm 3.0e{-3}$ 
& $5.27e{-2} \pm 1.2e{-4}$ \\

q=1.0 & OT 
& $2.47e{0} \pm 9.2e{-2}$ 
& $3.25e{-2} \pm 1.6e{-2}$ 
& $2.36e{-4} \pm 1.5e{-4}$ 
& $5.59e{-2} \pm 5.6e{-4}$ \\

q=2.0 & OT 
& $2.54e{0} \pm 1.5e{-1}$ 
& $2.69e{-2} \pm 1.3e{-2}$ 
& $1.63e{-4} \pm 8.5e{-5}$ 
& $5.37e{-2} \pm 4.0e{-4}$ \\



q=1.0 & sino 
& $2.51e{0} \pm 8.7e{-2}$ 
& $3.64e{-2} \pm 1.1e{-2}$ 
& $2.76e{-4} \pm 1.7e{-4}$ 
& $5.64e{-2} \pm 2.0e{-4}$ \\

q=2.0 & sino 
& $2.55e{0} \pm 2.5e{-1}$ 
& $3.07e{-2} \pm 8.6e{-3}$ 
& \cellcolor{lightgray}$1.50e{-4} \pm 6.3e{-5}$ 
& $5.30e{-2} \pm 1.3e{-3}$ \\

q=1.0 & PG-iso 
& $2.63e{0} \pm 2.7e{-1}$ 
& $3.54e{-2} \pm 1.1e{-2}$ 
& $3.36e{-4} \pm 2.2e{-4}$ 
& $8.00e{-2} \pm 4.8e{-4}$ \\

q=2.0 & PG-iso 
& $2.87e{0} \pm 4.2e{-1}$ 
& $3.44e{-2} \pm 1.6e{-2}$ 
& $4.41e{-4} \pm 3.0e{-4}$ 
& $7.78e{-2} \pm 6.5e{-4}$ \\

q=1.0 & PG-aniso 
& $2.61e{0} \pm 1.9e{-1}$ 
& \cellcolor{lightgray}$2.54e{-2} \pm 1.0e{-2}$ 
& $2.53e{-4} \pm 1.4e{-4}$ 
& $6.24e{-2} \pm 2.6e{-4}$ \\

q=2.0 & PG-aniso 
& $3.09e{0} \pm 4.9e{-1}$ 
& $4.54e{-2} \pm 1.2e{-2}$ 
& $5.71e{-4} \pm 2.9e{-4}$ 
& $6.04e{-2} \pm 2.2e{-4}$ \\

\hline
\end{tabular}}
\end{table}

Next, we investigate the more challenging Neal's funnel distribution \citep{Neal_2003}:
\begin{equation*}
f(\bx, \nu) = \mN_d(\bx;\bzero, e^{\nu/2}\bI) \mN(\nu; 0, 3) ,
\end{equation*}
whose density features a cusp on one end and a fat tail on the other (refer to the right panel of Figure \ref{fig:checkerboard_funnel_sample}, which also demonstrates PG methods' faster feature learning), posing difficulties for generative models to sample data from it.
We consider $d=10/ 100$ and generate 10,000 samples to train these FM models.

Tables \ref{tab:funnel100} and \ref{tab:funnel} further support the advantage and dimension robustness of PG methods as they achieve the lowest or comparable discrepancy metrics.
Except a few cells in these tables, $q=1.0$ almost unanimously beats $q=2.0$ which again justifies the adoption of exponential power distribution when handling heavy-tail and inhomogeneity in the data.
Note in Table \ref{tab:funnel100}, MFM reaches the same lowest ED at a much higher time cost ($4.36e-1$ vs $6.92e-2$ seconds per epoch for HB).

\subsection{Scientific Datasets}\label{apx:scidata}

Next, we investigate the following four datasets from molecular science, genetics, and quantum field theory:
\texttt{Alanine dipeptide} (30d) is the standard small-molecule benchmark popular for generative modeling \citep{Noe_2019}.
\texttt{QM9} (87d) contains 133,885 stable small organic molecules with up to nine heavy atoms (C, N, O, F) from GDB-17 \citep{Ruddigkeit_2012, Ramakrishnan_2014}.
\texttt{Single-cell (sc)RNA} (50d) contains 2,700 PBMCs from a healthy donor, freely available from 10x Genomics \citep{Wolf_2018}.
\texttt{Lattice $\phi^4$} (64d) is from the 2D scalar field theory, used as a benchmark for generative models \citep{Albergo_2019}.

These datasets of various dimensions contain complex data structures and serve as competitive benchmarks to test FM generative models.
In Figure \ref{fig:scientific_sample}, we visualize the sample paths of the FM methods on alanine dipeptide and QM9 datasets by projecting them to 2d subspaces using the \mbox{t-SNE} algorithm. Again PG methods maintain high speed in feature discovery. The PG methods demonstrate better (Table \ref{tab:scientific2}) or comparable (Table \ref{tab:scientific1}) performance.

\begin{figure}[t]
\centering
\includegraphics[height=0.09\linewidth, width=0.495\linewidth]{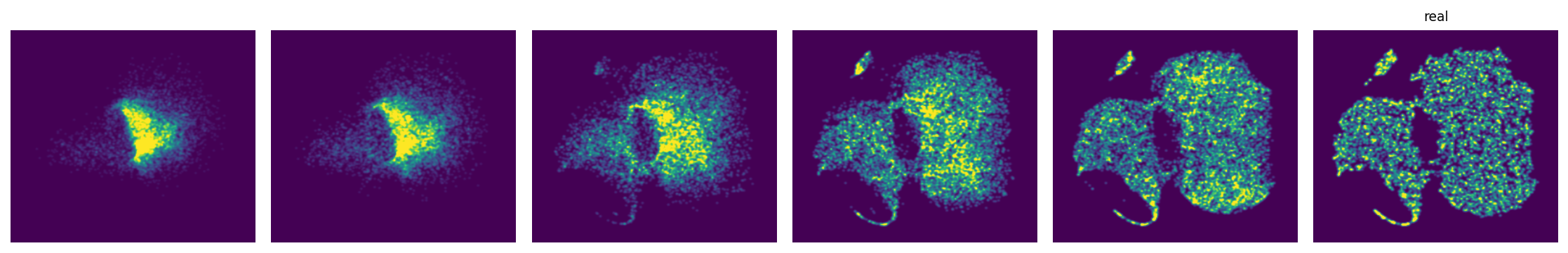}
\includegraphics[height=0.09\linewidth, width=0.495\linewidth]{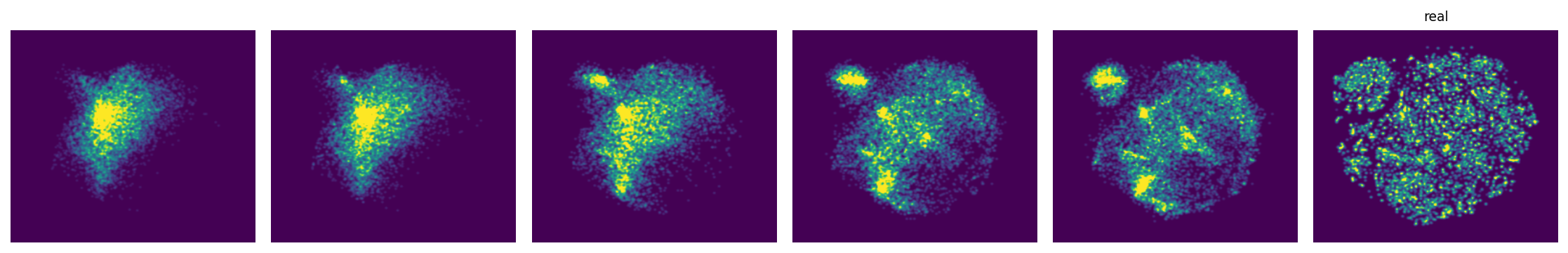}
\includegraphics[height=0.09\linewidth, width=0.495\linewidth]{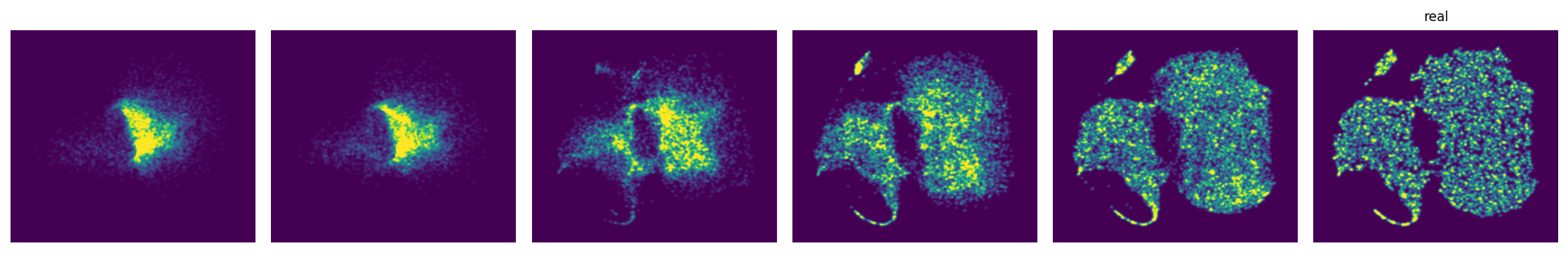}
\includegraphics[height=0.09\linewidth, width=0.495\linewidth]{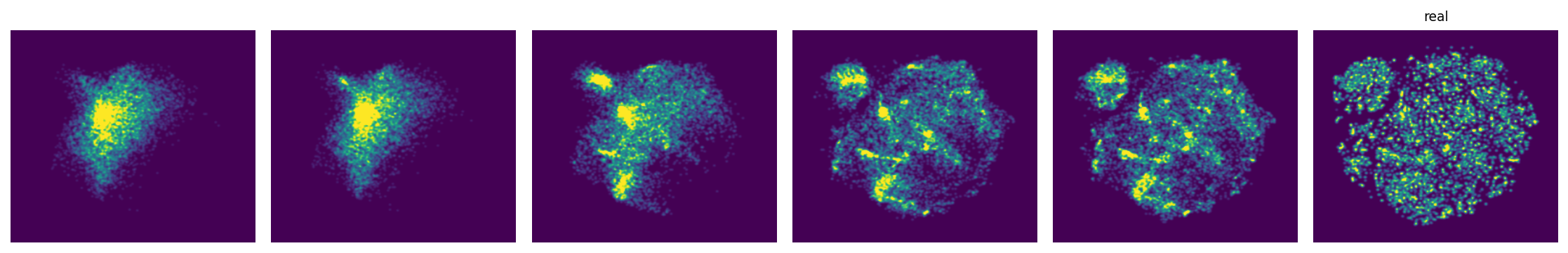}
\includegraphics[height=0.09\linewidth, width=0.495\linewidth]{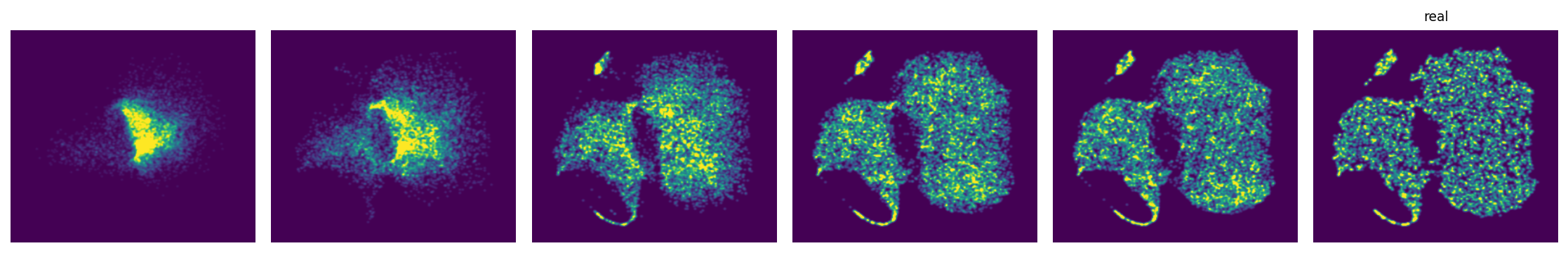}
\includegraphics[height=0.09\linewidth, width=0.495\linewidth]{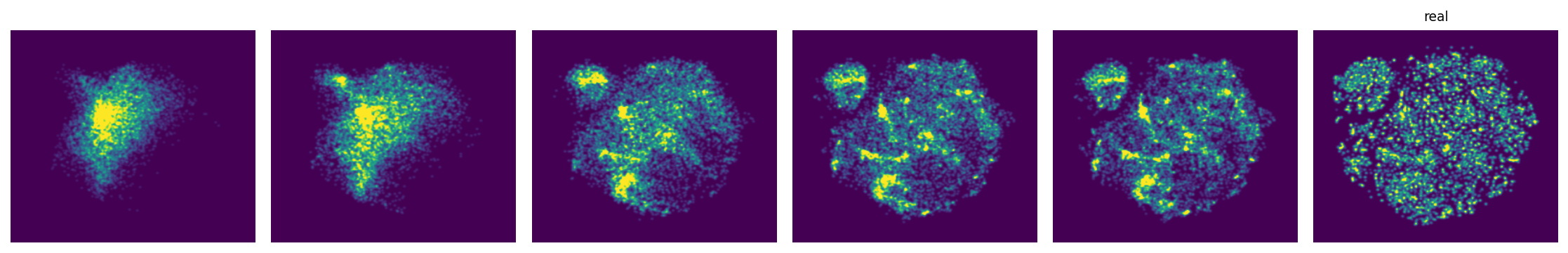}
\includegraphics[height=0.09\linewidth, width=0.495\linewidth]{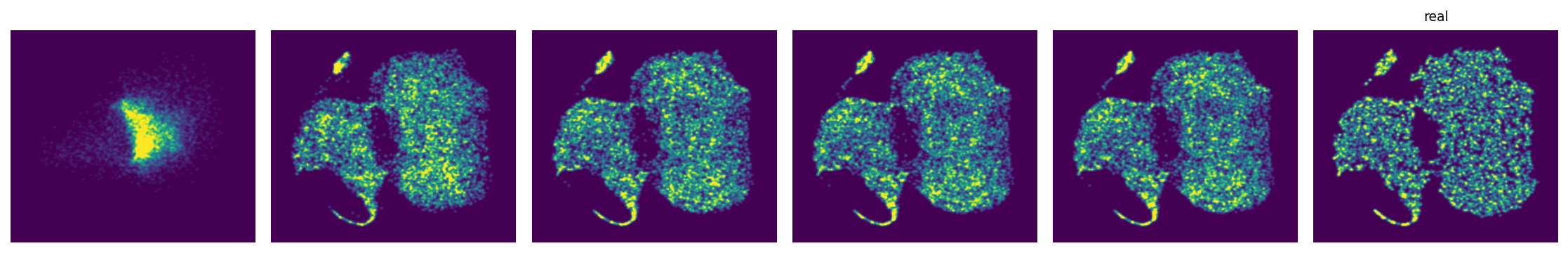}
\includegraphics[height=0.09\linewidth, width=0.495\linewidth]{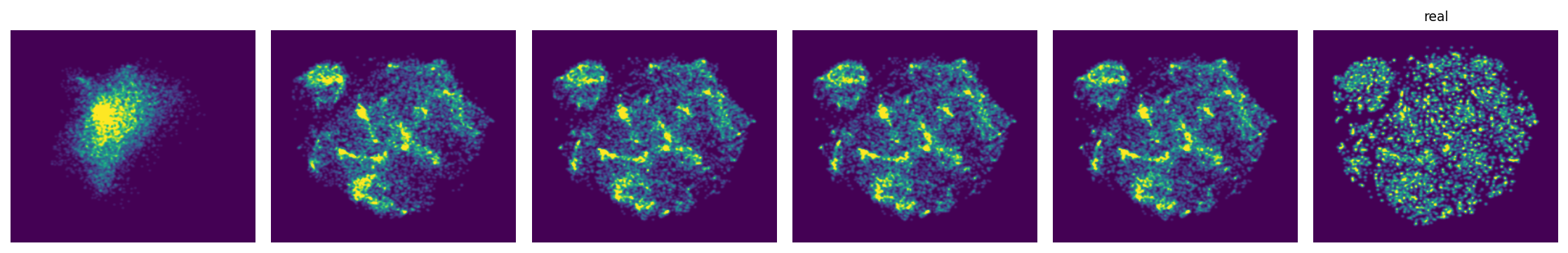}
\includegraphics[height=0.09\linewidth, width=0.495\linewidth]{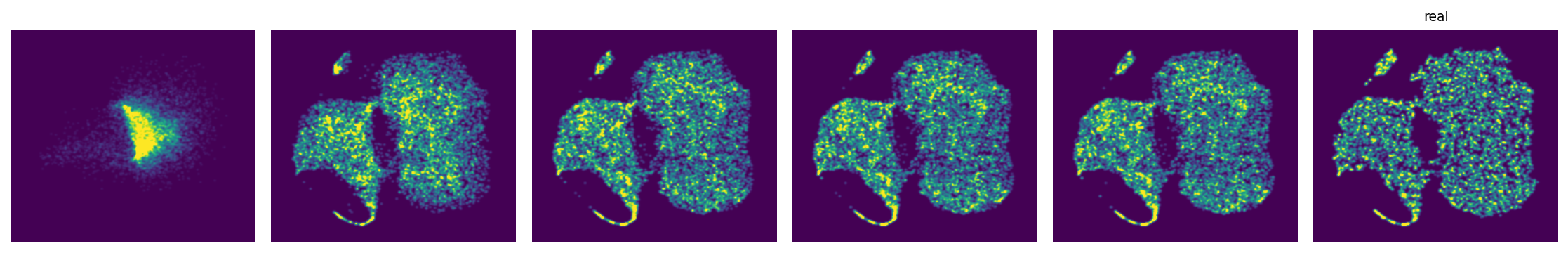}
\includegraphics[height=0.09\linewidth, width=0.495\linewidth]{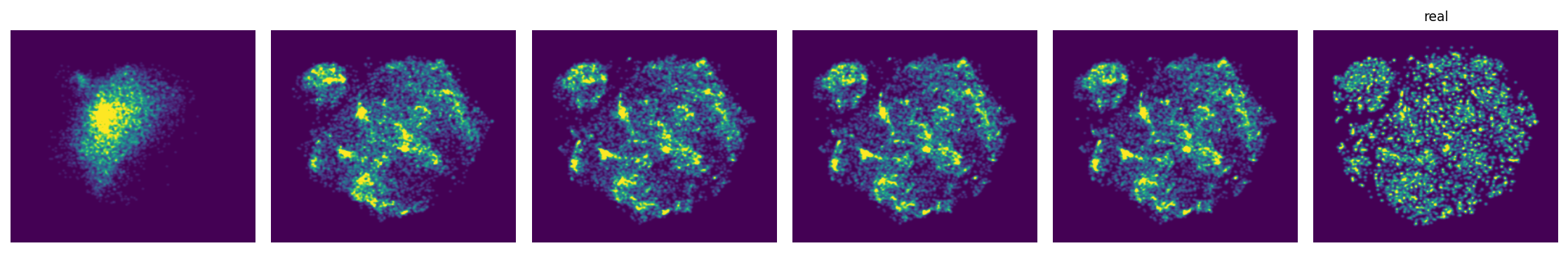}
\caption{Sample paths in the alanine dipeptide (30d, left) and QM9 (87d, right): VP, OT, sino, PG-2, and PG-1 (from top to bottom). The last column of each panel represents the real data. High dimensions have been projected to 2d using t-SNE for visualization.}
\label{fig:scientific_sample}
\end{figure}

\begin{table}[htbp]
\centering
\caption{Comparison across Alanine dipeptide (30d) and QM9 (87d) in $W_2$, ED, and MMD (mean $\pm$ std over 10 seeds).}
\label{tab:scientific1}
\resizebox{\columnwidth}{!}{
\begin{tabular}{l l |c c c |c c c}
\hline
 &  & \multicolumn{3}{c}{Alanine dipeptide (30d)} & \multicolumn{3}{|c}{QM9 (87d)} \\
\hline
DIST & PATH & $W_2$ $\downarrow$ & ED $\downarrow$ & MMD $\downarrow$ & $W_2$ $\downarrow$ & ED $\downarrow$ & MMD $\downarrow$ \\
\hline



$1.0$ & OT 
& $1.49e{0} \pm 1.4e{-1}$ & $7.75e{-2} \pm 3.1e{-2}$ & $4.02e{-4} \pm 2.6e{-4}$ 
& $7.27e{0} \pm 8.8e{-1}$ & $1.53e{-1} \pm 1.2e{-2}$ & $8.65e{-4} \pm 1.3e{-4}$ \\

$2.0$ & OT 
& \cellcolor{lightgray}$1.41e{0} \pm 1.1e{-1}$ & \cellcolor{lightgray}$5.62e{-2} \pm 2.3e{-2}$ & \cellcolor{lightgray}$1.71e{-4} \pm 1.2e{-4}$ 
& \cellcolor{lightgray}$6.84e{0} \pm 3.2e{-1}$ & $1.31e{-1} \pm 1.6e{-2}$ & $6.88e{-4} \pm 1.8e{-4}$ \\

$1.0$ & sino 
& $1.56e{0} \pm 1.0e{-1}$ & $8.86e{-2} \pm 3.6e{-2}$ & $5.00e{-4} \pm 3.0e{-4}$ 
& $7.05e{0} \pm 2.5e{-1}$ & \cellcolor{lightgray}$1.27e{-1} \pm 1.6e{-2}$ & $6.29e{-4} \pm 1.7e{-4}$ \\

$2.0$ & sino 
& $1.45e{0} \pm 1.3e{-1}$ & $6.59e{-2} \pm 3.3e{-2}$ & $2.88e{-4} \pm 2.1e{-4}$ 
& $7.04e{0} \pm 3.4e{-1}$ & \cellcolor{lightgray}$1.27e{-1} \pm 3.0e{-2}$ & $6.33e{-4} \pm 3.4e{-4}$ \\

$1.0$ & PG 
& $1.62e{0} \pm 1.8e{-1}$ & $1.05e{-1} \pm 3.7e{-2}$ & $7.40e{-4} \pm 4.9e{-4}$ 
& $6.94e{0} \pm 3.3e{-1}$ & $1.63e{-1} \pm 1.9e{-2}$ & $9.05e{-4} \pm 2.0e{-4}$ \\

$2.0$ & PG 
& $1.51e{0} \pm 9.4e{-2}$ & $8.48e{-2} \pm 2.5e{-2}$ & $4.40e{-4} \pm 2.1e{-4}$ 
& $7.00e{0} \pm 3.3e{-1}$ & $1.56e{-1} \pm 1.8e{-2}$ & $8.54e{-4} \pm 2.2e{-4}$ \\

$1.0$ & HB 
& $1.73e{0} \pm 1.9e{-1}$ & $2.00e{-1} \pm 3.4e{-2}$ & $3.22e{-3} \pm 9.7e{-4}$ 
& $7.27e{0} \pm 2.9e{-1}$ & $2.02e{-1} \pm 2.0e{-2}$ & $1.58e{-3} \pm 3.4e{-4}$ \\

$2.0$ & HB 
& $1.66e{0} \pm 7.2e{-2}$ & $1.92e{-1} \pm 1.6e{-2}$ & $3.06e{-3} \pm 5.8e{-4}$ 
& $7.48e{0} \pm 2.9e{-1}$ & $2.17e{-1} \pm 1.5e{-2}$ & $1.85e{-3} \pm 3.0e{-4}$ \\

\hline
\end{tabular}}
\end{table}

\begin{table}[htbp]
\centering
\caption{Comparison across scRNA (50d) and Lattice $\phi^4$ (64d) in $W_2$, ED, and MMD (mean $\pm$ std over 10 seeds).}
\label{tab:scientific2}
\resizebox{\columnwidth}{!}{
\begin{tabular}{l l |c c c |c c c}
\hline
 &  & \multicolumn{3}{c}{scRNA (50d)} & \multicolumn{3}{c}{Lattice $\phi^4$ (64d)} \\
\hline
DIST & PATH & $W_2$ $\downarrow$ & ED $\downarrow$ & MMD $\downarrow$ & $W_2$ $\downarrow$ & ED $\downarrow$ & MMD $\downarrow$ \\
\hline



$1.0$ & OT 
& \cellcolor{lightgray}$5.32e{0} \pm 4.0e{-2}$ & $5.17e{-1} \pm 2.0e{-2}$ & $1.88e{-2} \pm 1.5e{-3}$ 
& $7.54e{0} \pm 9.0e{-3}$ & $2.08e{-1} \pm 4.0e{-3}$ & $2.18e{-3} \pm 8.0e{-5}$ \\

$2.0$ & OT 
& $5.33e{0} \pm 5.0e{-2}$ & $5.10e{-1} \pm 2.0e{-2}$ & $1.82e{-2} \pm 1.3e{-3}$ 
& $7.53e{0} \pm 1.1e{-2}$ & $2.06e{-1} \pm 6.0e{-3}$ & $2.14e{-3} \pm 1.2e{-4}$ \\

$1.0$ & sino 
& $5.41e{0} \pm 5.0e{-2}$ & $5.36e{-1} \pm 2.0e{-2}$ & $2.04e{-2} \pm 1.5e{-3}$ 
& $7.50e{0} \pm 1.0e{-2}$ & $2.25e{-1} \pm 4.0e{-3}$ & $2.64e{-3} \pm 1.1e{-4}$ \\

$2.0$ & sino 
& $5.44e{0} \pm 5.0e{-2}$ & $5.46e{-1} \pm 2.0e{-2}$ & $2.09e{-2} \pm 1.4e{-3}$ 
& \cellcolor{lightgray}$7.47e{0} \pm 1.0e{-2}$ & $2.25e{-1} \pm 4.0e{-3}$ & $2.74e{-3} \pm 9.0e{-5}$ \\

$1.0$ & PG 
& $5.71e{0} \pm 4.0e{-2}$ & $2.93e{-1} \pm 2.2e{-2}$ & $5.07e{-3} \pm 6.0e{-4}$ 
& $7.73e{0} \pm 1.4e{-2}$ & $1.55e{-1} \pm 7.0e{-3}$ & $1.06e{-3} \pm 8.0e{-5}$ \\

$2.0$ & PG 
& $5.75e{0} \pm 5.0e{-2}$ & $2.73e{-1} \pm 1.3e{-2}$ & $4.14e{-3} \pm 4.0e{-4}$ 
& $7.69e{0} \pm 2.0e{-2}$ & \cellcolor{lightgray}$1.48e{-1} \pm 7.0e{-3}$ & \cellcolor{lightgray}$9.99e{-4} \pm 1.1e{-4}$ \\

$1.0$ & HB 
& $5.72e{0} \pm 5.1e{-2}$ & $1.83e{-1} \pm 2.1e{-2}$ & $1.42e{-3} \pm 3.7e{-4}$
& $8.31e{0} \pm 1.5e{-2}$ & $1.78e{-1} \pm 6.5e{-3}$ & $1.21e{-3} \pm 9.6e{-5}$ \\

2.0 & HB 
& $6.02e{0} \pm 4.1e{-2}$ & \cellcolor{lightgray}$1.58e{-1} \pm 1.6e{-2}$ & \cellcolor{lightgray}$8.18e{-4} \pm 1.6e{-4}$ 
& $8.28e{0} \pm 2.4e{-2}$ & $1.60e{-1} \pm 1.2e{-2}$ & $1.07e{-3} \pm 1.4e{-4}$ \\

\hline
\end{tabular}}
\end{table}

\subsection{De novo Molecule Generation}\label{apx:molecule}
The task is to draw a complete molecule from scratch: no reference structure is
supplied, and a single sample must specify the 3D coordinates, the atom type and
the formal charge of every atom, together with the bond order of every atom
\emph{pair}, jointly and consistently. Writing $n$ for the number of atoms and
$d_a, d_c, d_e$ for the number of atom types, charge states and bond types, one
molecule is a joint event of dimension
\begin{equation}
  D(n)=\underbrace{3(n-1)}_{\text{coordinates}}
      +\underbrace{d_a\,n}_{\text{atom types}}
      +\underbrace{d_c\,n}_{\text{charges}}
      +\underbrace{d_e\binom{n}{2}}_{\text{bonds}}
      =\frac{d_e}{2}\,n^{2}+\Bigl(3+d_a+d_c-\frac{d_e}{2}\Bigr)n-3,
  \label{eq:mol-dim}
\end{equation}
where the coordinate block is $3(n-1)$ rather than $3n$ because translations are
quotiented out, i.e.\ the flow acts on the zero-centre-of-mass subspace. For QM9
($d_a=5$, $d_c=6$, $d_e=5$) this is $D(n)=\tfrac{5}{2}n^{2}+\tfrac{23}{2}n-3$,
dominated by the bond block, which grows quadratically in $n$:
\begin{center}
\begin{tabular}{lccccc}
\toprule
Modality & Dimension & $n=3$ & $n=9$ & $n=18$ & $n=29$ \\
\midrule
Coordinates (zero-COM) & $3(n-1)$          &  6 &  24 &   51 &    84 \\
Atom types             & $d_a n$           & 15 &  45 &   90 &   145 \\
Charges                & $d_c n$           & 18 &  54 &  108 &   174 \\
Bonds (upper triangle) & $d_e\binom{n}{2}$ & 15 & 180 &  765 & 2030 \\
\midrule
Total            &  $D(n)$                 & 54 & 303 & 1014 & 2433 \\
\bottomrule
\end{tabular}
\end{center}
QM9 molecules contain at most nine heavy atoms and hence up to $n=29$ atoms with
hydrogens, averaging roughly $n\approx 18$: each molecule is an event of
$54$--$2{,}433$ dimensions, averaging roughly $10^{3}$, of which the bond block
alone accounts for $83\%$ at $n=29$. De novo generation on this benchmark is
therefore a problem of order $10^{3}$ dimensions per sample, despite QM9 being a
small-molecule dataset.

\clearpage

\section{Anisotropic Probabilistic Geodesic Flow}\label{sec:aniso-geod}

We now extend the geodesic flow of Section~4 to the \emph{anisotropic}
location--scale family in which the scalar scale $\sigma_t$ is replaced
by a per-coordinate scale vector $\boldsymbol{\sigma}_t\in\mathbb R_+^{\,d}$,
so that the covariance becomes $\diag(\boldsymbol{\sigma}_t^{2})$.
Following the component-wise independence assumed in Theorem~3.1, we
take the anisotropic exponential power conditional density to be the
product of $d$ univariate marginals,
\begin{equation}\label{eq:aniso-density}
  p_t(\mathbf x;\boldsymbol\mu_t,\boldsymbol\sigma_t)
  \;=\;\prod_{i=1}^{d} p_t\!\bigl(x_i;\mu_{t,i},\sigma_{t,i}^{2}\bigr),
\quad
  p_t(x_i;\mu_i,\sigma_i^{2})
  \;=\;\frac{q}{2^{1+\frac{1}{q}}\Gamma(1/q)\,\sigma_i}
       \exp\!\Bigl\{-\tfrac12\bigl|(x_i-\mu_i)/\sigma_i\bigr|^{q}\Bigr\}.
\end{equation}
The endpoints of the conditional flow are unchanged in form,
$p_0(\cdot\mid\mathbf x_1)=\prod_i \mP_q(\cdot;0,1)$ and
$p_1(\cdot\mid\mathbf x_1)=\prod_i \mP_q(\cdot;x_{1,i},\sigma_{\min}^{2})$,
i.e.\ $\boldsymbol\mu_0=\mathbf 0,\;\boldsymbol\sigma_0=\mathbf 1,\;
\boldsymbol\mu_1=\mathbf x_1,\;\boldsymbol\sigma_1=\sigma_{\min}\mathbf 1$.

\paragraph{Anisotropic Fisher metric.}
Because \eqref{eq:aniso-density} factorizes across coordinates and the
parameters $(\mu_i,\sigma_i)$ of distinct factors are decoupled, the
joint Fisher information is block-diagonal:
\begin{equation*}
  \mathbf I(\boldsymbol\mu,\boldsymbol\sigma)
  =\bigoplus_{i=1}^{d}\mathbf I^{(1)}(\mu_i,\sigma_i).
\end{equation*}
where each $2\times 2$ block is the univariate exponential power Fisher information
obtained by specializing the derivation of Section~4 to $d=1$:
\begin{equation}\label{eq:fisher-1d}
  \mathbf I^{(1)}(\mu_i,\sigma_i)
  \;=\;\frac{1}{\sigma_i^{2}}
       \begin{pmatrix} c_\mu & 0\\ 0 & c_\sigma \end{pmatrix},
\quad
  c_\mu \;=\; 2^{-2/q}q^{2}\,\frac{\Gamma\!\bigl(2-\tfrac{1}{q}\bigr)}{\Gamma(1/q)},
\quad
  c_\sigma \;=\; q .
\end{equation}
(The vanishing of the off-diagonal $\mu\!-\!\sigma$ block follows from
$\mathbb E[u_i]=\mathbb E[u_iu_j^{2}]=0$ on $S^{d-1}$, exactly as in the
isotropic case.) Consequently the anisotropic Fisher-Rao metric is the
product of $d$ rescaled Poincar\'e half-plane metrics,
\begin{equation}\label{eq:product-metric}
  ds^{2}
  \;=\;\sum_{i=1}^{d}
        \frac{c_\mu\,d\mu_i^{2}+c_\sigma\,d\sigma_i^{2}}{\sigma_i^{2}}
  \;=\;\sum_{i=1}^{d} ds_i^{2}.
\end{equation}

\paragraph{Per-coordinate geodesic equations.}
Because \eqref{eq:product-metric} is a Riemannian product, geodesics in
the joint manifold $(\mathbb R\times\mathbb R_+)^{\,d}$ are
\emph{Cartesian products of geodesics in each $H^{2}$ factor}; the
Christoffel symbols are nonzero only \emph{within} a coordinate. Repeating
the computation of Section~4 in every factor yields, for each
$i\in\{1,\ldots,d\}$,
\begin{subequations}\label{eq:aniso-geod-eq}
\begin{align}
  \ddot\mu_i \;-\;\tfrac{2}{\sigma_i}\,\dot\mu_i\,\dot\sigma_i &= 0,
\\[2pt]
  \ddot\sigma_i\;+\;\tfrac{c_\mu}{c_\sigma\sigma_i}\,\dot\mu_i^{2}
                \;-\;\tfrac{1}{\sigma_i}\,\dot\sigma_i^{2} &= 0 .
\end{align}
\end{subequations}
Each pair \eqref{eq:aniso-geod-eq} is identical to the isotropic system
(21a)--(21b) of Section~4, decoupled across $i$.

\paragraph{Closed-form solution: Poincar\'e semi-circle (per coordinate).}
Set $\lambda:=\sqrt{c_\sigma/c_\mu}$ and, for each coordinate $i$, write
$\Delta_i:=\mu_{1,i}-\mu_{0,i}$. \emph{Case $\Delta_i\neq 0$.} The geodesic
in the $i$th $H^{2}$ factor is the unique semi-circle in the
$(\mu_i,\lambda\sigma_i)$ upper half-plane connecting
$(\mu_{0,i},\lambda\sigma_{0,i})$ to $(\mu_{1,i},\lambda\sigma_{1,i})$,
\begin{equation}\label{eq:aniso-circle}
  (\mu_i-\mu_{0,i}-\mu_{c,i})^{2}+(\lambda\sigma_i)^{2}=R_i^{2},
\end{equation}
with center and radius
\begin{equation}\label{eq:aniso-cR}
  \mu_{c,i}
  \;=\;\frac{\Delta_i^{2}+\lambda^{2}(\sigma_{1,i}^{2}-\sigma_{0,i}^{2})}
            {2\Delta_i},
  \qquad
  R_i \;=\;\sqrt{\mu_{c,i}^{2}+\lambda^{2}\sigma_{0,i}^{2}}\, .
\end{equation}
Parametrizing $(\mu_i(t)-\mu_{0,i},\lambda\sigma_i(t))=
(\mu_{c,i}+R_i\cos\theta_i(t),\,R_i\sin\theta_i(t))$ with the
constant-arc-length parametrization
\begin{equation}\label{eq:aniso-theta}
  \theta_i(t)=2\arctan\!\bigl(e^{L_i t}\tan(\theta_{0,i}/2)\bigr),
  \qquad
  L_i=\log\!\frac{\tan(\theta_{1,i}/2)}{\tan(\theta_{0,i}/2)},
\end{equation}
where the boundary angles
$\theta_{0,i}=\operatorname{atan2}(\lambda\sigma_{0,i},-\mu_{c,i})$ and
$\theta_{1,i}=\operatorname{atan2}(\lambda\sigma_{1,i},\Delta_i-\mu_{c,i})$
both lie in $(0,\pi)$, gives the explicit geodesic
\begin{equation}\label{eq:aniso-mu-sigma}
\boxed{\;
\begin{aligned}
  \mu_i(t) &=\mu_{0,i}+\mu_{c,i}+R_i\cos\theta_i(t),
  \\[2pt]
  \sigma_i(t) &=\frac{R_i}{\lambda}\sin\theta_i(t).
\end{aligned}\;}
\end{equation}
\emph{Case $\Delta_i=0$.} The geodesic is the vertical
$\mu_i(t)\equiv\mu_{0,i}$ together with the geometric scale interpolation
$\sigma_i(t)=\sigma_{0,i}^{1-t}\sigma_{1,i}^{\,t}$, equivalently
$\log\sigma_i$ linear in $t$.

\paragraph{Equivalent hyperbolic ($\sech$/$\tanh$) parametrization.}
The $\sech$/$\tanh$ form of Section~4 carries over coordinate-wise.
With $\omega_i,\kappa_i,\alpha_i$ determined from the boundary conditions,
\begin{equation}\label{eq:aniso-hyper}
\boxed{\;
  \mu_i(t)
  =\mu_{0,i}+(\mu_{1,i}-\mu_{0,i})
   \frac{\tanh(\omega_i t+\alpha_i)-\tanh(\alpha_i)}
        {\tanh(\omega_i+\alpha_i)-\tanh(\alpha_i)},
\quad
  \sigma_i(t)=\frac{\omega_i}{\kappa_i}\sech(\omega_i t+\alpha_i),\;}
\end{equation}
where $\omega_i,\kappa_i,\alpha_i$ solve
\begin{align*}
  \frac{\omega_i}{\kappa_i}\sech(\alpha_i)&=\sigma_{0,i}, &
  \frac{\omega_i}{\kappa_i}\sech(\omega_i+\alpha_i)&=\sigma_{1,i}, \\
  \kappa_i&=|\Delta_i|\sqrt{\frac{c_\mu}{c_\sigma}}\,
  \frac{\kappa_i^{2}}
  {\omega_i\bigl(\tanh(\omega_i+\alpha_i)-\tanh(\alpha_i)\bigr)}.
\end{align*}
The two parametrisations are related by
\begin{equation*}
\sin(2\arctan e^{-s_i})=\sech s_i, \qquad
\cos(2\arctan e^{-s_i})=\tanh s_i,
\end{equation*}
where $s_i=\omega_i t+\alpha_i$ and $R_i=\lambda\,\omega_i/\kappa_i$.

\paragraph{Anisotropic Fisher--Rao distance and energy.}
The arc-length on the $i$th factor along the geodesic is
\begin{equation}\label{eq:aniso-FR-i}
  |L_i| \;=\;\operatorname{arcosh}\!\Biggl(
     1+\frac{c_\mu\,\Delta_i^{2}+c_\sigma(\sigma_{1,i}-\sigma_{0,i})^{2}}
            {2\,c_\sigma\,\sigma_{0,i}\sigma_{1,i}}
  \Biggr).
\end{equation}
Because the metric is a Riemannian product, the joint squared
Fisher--Rao distance and the geodesic energy are simply additive,
\begin{equation}\label{eq:aniso-FR-total}
  d_{\mathrm{FR}}^{\,2}(p_0,p_1)\;=\;
  c_\sigma\sum_{i=1}^{d} L_i^{2},
  \qquad
  e(v) \;=\; c_\sigma\sum_{i=1}^{d} L_i^{2}.
\end{equation}
For the symmetric anisotropic boundary
$\boldsymbol\sigma_0=\mathbf 1,\;\boldsymbol\sigma_1=\sigma_{\min}\mathbf 1$,
\eqref{eq:aniso-FR-i} reduces to
$L_i=\operatorname{arcosh}\!\bigl(1+[c_\mu x_{1,i}^{2}+c_\sigma(\sigma_{\min}-1)^{2}]/(2c_\sigma\sigma_{\min})\bigr)$.

\paragraph{Anisotropic vector field.}
By Theorem~3.1, applied component-wise, the vector field associated
with the anisotropic geodesic path is
\begin{equation}\label{eq:aniso-vfield}
\boxed{\;
 \mathbf v_t(\mathbf x)
  \;=\;\dot{\boldsymbol\mu}_t
       \;+\;\frac{\dot{\boldsymbol\sigma}_t}{\boldsymbol\sigma_t}
            \odot(\mathbf x-\boldsymbol\mu_t),\;}
\end{equation}
where $\odot$ and the division act component-wise. From
\eqref{eq:aniso-mu-sigma} and $\dot\theta_i(t)=L_i\sin\theta_i(t)$ one
gets, for every $i$ with $\Delta_i\neq 0$,
\begin{equation}\label{eq:aniso-mudot}
  \dot\mu_{t,i}=-R_iL_i\sin^{2}\!\theta_i(t),
\qquad
  \frac{\dot\sigma_{t,i}}{\sigma_{t,i}}=L_i\cos\theta_i(t),
\end{equation}
so that
\begin{equation}\label{eq:aniso-vi}
  v_{t,i}(x_i)
  \;=\;L_i\bigl[-R_i\sin^{2}\!\theta_i(t)+\cos\theta_i(t)\,(x_i-\mu_{t,i})\bigr].
\end{equation}
For coordinates with $\Delta_i=0$, $\dot\mu_{t,i}=0$ and
$\dot\sigma_{t,i}/\sigma_{t,i}=\log\sigma_{1,i}-\log\sigma_{0,i}$
is constant in $t$, so $v_{t,i}(x_i)=(\log\sigma_{1,i}-\log\sigma_{0,i})(x_i-\mu_{0,i})$.

\paragraph{Anisotropic conditional vector field and flow.}
Letting $\boldsymbol\mu_0=\mathbf 0,\;\boldsymbol\mu_1=\mathbf x_1,\;
\boldsymbol\sigma_0=\mathbf 1,\;\boldsymbol\sigma_1=\sigma_{\min}\mathbf 1$,
formula~\eqref{eq:aniso-vi} specialises (per coordinate, for $x_{1,i}\neq 0$) to
\begin{equation}\label{eq:aniso-uvi}
  u_{t,i}(x_i\mid x_{1,i})
  \;=\;L_i\bigl[-\bigl(R_i\sin^{2}\!\theta_i(t)
                       +\mu_{c,i}\cos\theta_i(t)
                       +R_i\cos^{2}\!\theta_i(t)\bigr)\,\mathrm{sgn}(x_{1,i})\, 1_{x_{1,i}\neq 0}
              \;+\;\cos\theta_i(t)\,x_i\bigr],
\end{equation}
which, after collecting $R_i\sin^{2}\!\theta_i+R_i\cos^{2}\!\theta_i=R_i$, gives
\begin{equation}\label{eq:aniso-uvi-clean}
  u_{t,i}(x_i\mid x_{1,i})
  \;=\;L_i\bigl[\cos\theta_i(t)\,x_i
           \;-\;(R_i+\mu_{c,i}\cos\theta_i(t))\,\mathrm{sgn}(x_{1,i})\bigr].
\end{equation}
Equivalently, in stacked vector form using component-wise scalars
$L_i,R_i,\mu_{c,i},\theta_i(t)$,
\begin{equation}\label{eq:aniso-u}
  \mathbf u_t(\mathbf x\mid\mathbf x_1)
  \;=\;\mathbf L\odot\bigl[\cos\boldsymbol\theta(t)\odot\mathbf x
       \;-\;\bigl(\mathbf R+\boldsymbol\mu_c\odot\cos\boldsymbol\theta(t)\bigr)
            \odot\operatorname{sgn}(\mathbf x_1)\bigr].
\end{equation}
The associated conditional flow has the explicit form
\begin{equation}\label{eq:aniso-flow}
  \psi_t(\mathbf x\mid\mathbf x_1)
  \;=\;\bigl(\boldsymbol\mu_c+\mathbf R\odot\cos\boldsymbol\theta(t)\bigr)
        \odot\operatorname{sgn}(\mathbf x_1)
       \;+\;\frac{1}{\lambda}\,\mathbf R\odot\sin\boldsymbol\theta(t)\odot\mathbf x ,
\end{equation}
which reduces to the isotropic flow of Section~4 when
$\boldsymbol\sigma_t=\sigma_t\mathbf 1$ for some scalar $\sigma_t$ and
the angles $\theta_i(t)$ collapse to a common $\theta(t)$.

\paragraph{Remark (training).}
With \eqref{eq:aniso-u} the FM loss \eqref{eq:loss_epd} is unchanged in form,
\begin{equation*}
\begin{aligned}
  \mathcal L_{\mathrm{CFM}}(\theta)
  ={}&\mathbb E_{t,q(\mathbf x_1),p_0(\mathbf x_0)}
  \Bigl\|v_t(\psi_t(\mathbf x_0);\theta) \\
  &\hspace{7em}-\frac{d}{dt}\psi_t(\mathbf x_0)\Bigr\|_2^2,
\end{aligned}
\end{equation*}
but the per-coordinate scalars $\{L_i,R_i,\mu_{c,i},\theta_i(t)\}_{i=1}^{d}$
must be precomputed once per sampled $\mathbf x_1$ (closed form, no ODE
solve required). Hence the anisotropic PG path is no more
expensive at training time than its isotropic counterpart.



\label{apx:aniso}

\end{document}